\documentclass{article} 
\usepackage[preprint]{colm2026/colm2026_conference}

\usepackage{microtype}
\usepackage{hyperref}
\usepackage{url}
\usepackage{booktabs}

\usepackage{lineno}

\definecolor{darkblue}{rgb}{0, 0, 0.5}
\hypersetup{colorlinks=true, citecolor=darkblue, linkcolor=darkblue, urlcolor=darkblue}

\usepackage{hyperref}
\usepackage{url}
\usepackage{algorithm}
\usepackage{algpseudocode}
\usepackage{amsmath}
\usepackage{amssymb}
\usepackage{bbm}
\usepackage{graphicx}
\usepackage{wrapfig}
\usepackage{microtype}
\usepackage{subcaption}
\usepackage{booktabs} 
\usepackage{multirow}
\usepackage{tabularx}
\usepackage{wrapfig}
\usepackage{xspace}
\usepackage{enumitem}
\usepackage{float}
\usepackage{marvosym}
\usepackage[nameinlink,capitalise]{cleveref}

\crefname{figure}{Fig.}{Figs.}
\Crefname{figure}{Fig.}{Figs.}

\crefname{table}{Tab.}{Tabs.}
\Crefname{table}{Tab.}{Tabs.}

\crefname{section}{Sec.}{Secs.}
\Crefname{section}{Sec.}{Secs.}

\crefname{appendix}{App.}{Apps.}
\Crefname{appendix}{App.}{Apps.}

\crefname{equation}{Eq.}{Eqs.}
\Crefname{equation}{Eq.}{Eqs.}

\newif\ifcomments
\commentstrue

\ifcomments
  \newcommand{\jl}[1]{\textcolor{red}{[JL: #1]}}
\else
  \newcommand{\jl}[1]{}
\fi

\ifcomments
  \newcommand{\zg}[1]{\textcolor{red}{[JL: #1]}}
\else
  \newcommand{\zg}[1]{}
\fi

\DeclareMathOperator*{\EE}{\mathbb{E}}

\newcommand{\method}{\textsc{Actor-Curator}\xspace}
\newcommand{\meth}{\textsc{AC}\xspace}

\newtheorem{lemma}{Lemma}
\newtheorem{theorem}{Theorem}
\newenvironment{proof}{\paragraph{Proof:}}{\hfill$\square$}

\title{\method: Co-adaptive Curriculum Learning via Policy-Improvement Bandits for Scalable RL Post-Training}

\author{
\parbox{0.9\linewidth}{
\textbf{Zhengyao Gu}$^{*\spadesuit}$\hspace{0.6em}
\textbf{Jonathan Light}$^{*\heartsuit\diamondsuit\ \text{\Letter}}$\hspace{0.6em}
Raul Astudillo$^{\clubsuit}$\hspace{0.6em}
Ziyu Ye$^{\spadesuit}$\hspace{0.6em}
Langzhou He$^{\spadesuit}$\hspace{0.6em}
Henry Peng Zou$^{\spadesuit}$\hspace{0.6em}
Wei Cheng$^{\blacklozenge}$\hspace{0.6em}
Santiago Paternain$^{\diamondsuit}$\hspace{0.6em}
Philip S.\ Yu$^{\spadesuit}$\hspace{0.6em}
Yisong Yue$^{\heartsuit}$
}\\[1.5em]
\parbox{0.95\linewidth}{
$^{\spadesuit}$University of Illinois Chicago \quad
$^{\heartsuit}$Caltech \quad
$^{\diamondsuit}$RPI \quad
$^{\clubsuit}$MBZUAI \quad
$^{\blacktriangle}$University of Chicago \quad
$^{\blacklozenge}$NEC Laboratories America
}\\[1.5em]
\parbox{0.95\linewidth}{
$^{*}$Equal contribution \\
\text{\Letter}\ Corresponding author: \texttt{jonathan.li.connect@gmail.com}
}
}

\begin{document}

\ifcolmsubmission
\linenumbers
\fi

\maketitle

\begin{abstract}
Post-training large foundation models with reinforcement learning typically involves selecting training problems from massive and heterogeneous datasets, where the choice of data has a critical impact on training stability, sample efficiency, and final performance.
In this work, we propose \method, a scalable and fully automated framework for reinforcement learning post-training of large language models (LLMs) that learns to adaptively curate training problems.
\method trains a neural \emph{curator} that dynamically selects problems from large problem banks by directly optimizing for expected policy performance improvement.
We formulate problem selection as a non-stationary stochastic bandit problem, derive a principled loss function based on online stochastic mirror descent, and establish regret guarantees under partial feedback.
Empirically, \method consistently outperforms uniform sampling and strong learning-based baselines across a wide range of challenging reasoning benchmarks, demonstrating improved training stability and efficiency.
Notably, it achieves relative gains of \textbf{28.6\% on AIME2024} and \textbf{30.5\% on ARC-1D} over the strongest baseline and up to \textbf{80\% speedup}.
These results suggest that \method provides a practical and principled approach to scalable, adaptive curriculum learning for LLM post-training.
\end{abstract}

\section{Introduction}

Reinforcement learning (RL) has become a central paradigm for post-training foundation models, enabling improvements in reasoning, alignment, and task-specific performance beyond supervised fine-tuning~\citep{shao2024deepseekmath}. In this setting, the choice, ordering, and frequency of training problems play a critical role in determining convergence speed, training stability, and final generalization performance, motivating the use of curriculum learning to adaptively select training data~\citep{bengio2009curriculum, tzannetos2023proximal, parashar2025curriculumreinforcementlearningeasy}. However, applying curriculum learning to modern foundation model post-training is challenging: post-training datasets are \textbf{large, diverse, and continuously evolving}, while actor updates induce \textbf{complex, non-stationary training dynamics}. Traditional curriculum learning approaches—based on manual difficulty annotations, hand-designed problem buckets, or tabular per-problem statistics~\citep{asada1996purposive, wu2017training, yengera2021curriculum}—do not scale to such settings, fail to generalize to unseen problems, and are brittle when problem utility changes as the policy improves. Moreover, effective curricula must balance \textbf{exploration} of under-sampled problems with \textbf{exploitation} of those that most improve the current policy, further complicating scalable curriculum design.

In this work, we propose \method (\meth), a scalable and fully automated problem curation framework for RL post-training that jointly trains an actor and a curator in an online, on-policy manner. At the core of \method is a learned \emph{curator} that adaptively selects training problems at each iteration and function-approximates over large, heterogeneous datasets. The curator is trained to directly maximize a \textbf{policy improvement objective}, assigning higher probability to problems expected to induce the greatest improvement in the actor’s performance. Unlike prior curricula that rely on heuristic signals such as absolute mean advantage or difficulty proxies~\citep{chen2025self, gao2025prompt, wang2025dump}, our objective is derived from expected policy improvement, providing a principled and actor-aware learning signal. As a result, \method naturally \emph{adapts to evolving actor training dynamics} and allows it to be seamlessly combined with a wide range of RL algorithms.

\begin{wrapfigure}{r}{0.5\linewidth}
    \centering
    \vspace{-0.25in}
    \includegraphics[width=\linewidth]{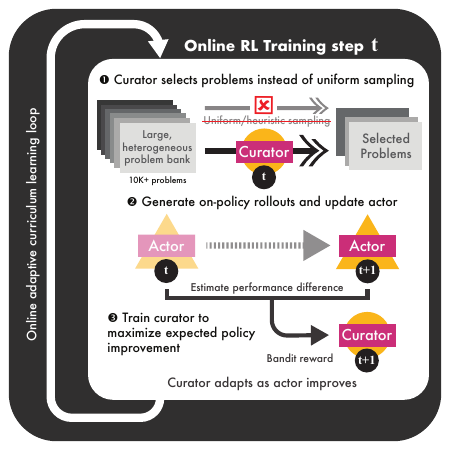}
    \vspace{-0.2in}
    \caption{\footnotesize\textbf{Co-adaptive online training loop of \method}. At each RL step, a learned curator adaptively selects problems from a large problem bank instead of uniform sampling. The actor is updated on these problems, after which a bandit-style reward based on post-update policy improvement trains the curator. As the actor improves, the curator adapts to prioritize problems that yield the greatest expected performance gains.}
    \label{fig:teaser}
    \vspace{-0.2in}
\end{wrapfigure}

To optimize the curator, we formalize problem selection as a \textbf{non-stationary stochastic bandit} problem with partial feedback. The curator is trained online and on-policy alongside the actor using \textbf{online stochastic mirror descent} (OSMD)~\cite{banditalgorithm}, which explicitly balances exploration and exploitation under non-stationarity. This differs from prior approaches that primarily rely on regression-style objectives and do not explicitly model the bandit structure or \textbf{partial observability} inherent in adaptive data selection~\citep{tzannetos2023proximal, gao2025prompt}.
To scale beyond tabular formulations, we derive a function-approximation variant of OSMD that trains the curator as a neural network, enabling generalization across problems and robustness to large, dynamic datasets. Finally, we introduce a PPO-style proximal clipping objective to stabilize curator optimization in practice.

Empirically, \method enables effective curriculum learning at scale \emph{without human annotations, difficulty labels, or manual dataset structuring}. Across diverse reasoning benchmarks—\textbf{Countdown, Zebra, MATH, AIME, and ARC-1D}—it consistently outperforms uniform sampling and strong baselines, achieving up to \textbf{30\%} higher peak performance on ARC-1D, \textbf{28\%} on AIME24, and up to \textbf{80\%} faster convergence to comparable performance. 


In summary, our main contributions are:
\begin{itemize}[leftmargin=*,itemsep=0pt,topsep=0pt,parsep=0.4pt,partopsep=0pt]
    \item \textbf{Automated problem curation framework for RL post-training.}
    We introduce \method, a scalable framework that learns a neural curator to adaptively select training problems in an online, on-policy manner, enabling curriculum learning over large, heterogeneous datasets without human annotations or manual structuring.

    \item \textbf{A policy-improvement–driven bandit formulation of data curation.}
    We cast problem selection as a non-stationary stochastic bandit problem and derive a principled learning signal grounded in policy improvement theory, optimized via an OSMD-based bandit objective with regret guarantees under partial feedback.
\end{itemize}

\vspace{-0.1in}
\section{Problem formulation}
\vspace{-0.1in}
\label{sec:problem}

We study problem curation for reinforcement learning (RL) post-training of large
language models (LLMs), where training is performed over large and heterogeneous
collections of problems.
Our goal is to design an adaptive data selection strategy that determines which training
problems an LLM should train on at each iteration in order to maximize overall post-training
performance.

\subsection{RL post-training setting}

Let $\mathcal{X} = \{\boldsymbol{x}^{(i)}\}_{i=1}^{|\mathcal{X}|}$ denote a large collection
of training problems, and let $p_{\mathcal{X}}$ be a fixed evaluation distribution over
$\mathcal{X}$.
Let $\pi$ denote a pretrained autoregressive language model, which induces a conditional distribution
$\boldsymbol{y} \sim \pi(\cdot \mid \boldsymbol{x})$ over solutions for each problem $\boldsymbol{x}$.
A reward model $R : \mathcal{Y} \times \mathcal{X} \rightarrow [0,1]$ assigns a scalar score to each solution.
The post-training objective is to maximize expected reward under the evaluation distribution:
\begin{equation}
\label{eq:grand_objective}
J(\pi)
\;\triangleq\;
\mathbb{E}_{\boldsymbol{x} \sim p_{\mathcal{X}},\, \boldsymbol{y} \sim \pi(\cdot \mid \boldsymbol{x})}
\bigl[ R(\boldsymbol{y} \mid \boldsymbol{x}) \bigr].
\end{equation}

In this work, the reinforcement learning algorithm, reward model, and rollout procedure
are fixed.
Our focus is on how training problems are selected across iterations.

\subsection{Training dynamics}

Training proceeds in iterations.
At iteration $t$, a \emph{curator} selects a subset of training problems
$\mathcal{X}^t \subset \mathcal{X}$.
Given this selection, the \emph{actor} $\pi^t$ is rolled out on each
$\boldsymbol{x}\in\mathcal{X}^t$ to produce solutions and corresponding rewards
\[
\mathcal{Y}_{\boldsymbol{x}}^t
\triangleq
\{\boldsymbol{y}^{(j)}\sim\pi^t(\cdot\mid\boldsymbol{x})\}_{j=1}^{|\mathcal{Y}_{\boldsymbol{x}}|},
\qquad
\mathcal{R}_{\boldsymbol{x}}^t
\triangleq
\{R(\boldsymbol{y}^{(j)}\mid\boldsymbol{x})\}_{j=1}^{|\mathcal{Y}_{\boldsymbol{x}}|}.
\]

These trajectories form the dataset
\begin{equation}
\label{eq:dataset_definition}
\mathcal{D}^t
\triangleq
\bigl\{
(\boldsymbol{x}, \mathcal{Y}_{\boldsymbol{x}}^t, \mathcal{R}_{\boldsymbol{x}}^t)
\mid \boldsymbol{x}\in\mathcal{X}^t
\bigr\},
\qquad
\pi^{t+1} \leftarrow \mathcal{A}(\pi^t, \mathcal{D}^t).
\end{equation}

Here $\mathcal{A}$ may correspond to any standard post-training algorithm
(e.g., GRPO~\citep{shao2024deepseekmath} or GSPO~\citep{ahmadian2024back}).
We emphasize that the curator influences learning only indirectly through data selection,
while the actor is solely responsible for policy optimization.

\subsection{Curriculum learning as problem selection}

The central problem addressed in this work is how to choose the training subsets
$\mathcal{X}^t$ across iterations.
Different choices of $\mathcal{X}^t$ induce different actor updates and therefore
different trajectories of policy improvement.
We formalize curriculum learning as a sequential decision-making problem.
At each iteration $t$, a \emph{curator} selects a subset of training problems
$\mathcal{X}^t \subset \mathcal{X}$.
This selection induces a performance improvement
$J(\pi^{t+1}) - J(\pi^t)$.
The curator’s objective is to maximize cumulative performance gains over training:
\vspace{-0.1in}
\begin{equation}
\label{eq:curator_objective}
\max_{\{\mathcal{X}^t\}_{t=1}^T}
\sum_{t=1}^T
\bigl(
J(\pi^{t+1}) - J(\pi^t)
\bigr).
\end{equation}
\vspace{-0.2in}
\subsection{Challenges of adaptive problem selection}

Effective curation is challenging for several reasons:

\begin{itemize}[leftmargin=*,itemsep=0pt,topsep=0pt,parsep=0.4pt,partopsep=0pt]
    \item \textbf{Large action space.} The problem set $\mathcal{X}$ is large and might change across time, making it
    infeasible to manually define curricula or track per-problem statistics.

    \item \textbf{Partial feedback.} At each iteration, feedback is observed only for the
    problems selected for training; the utility of unselected problems remains unknown.

    \item \textbf{Non-stationarity.} The usefulness of a problem depends on the current
    actor $\pi^t$ and changes as the actor improves. It is also highly dependent on the actor update method. 

    \item \textbf{Exploration--exploitation trade-off.} The curator must balance exploring under-sampled problems whose utility is uncertain with exploiting problems that are known to drive policy improvement.
\end{itemize}

These challenges motivate a curriculum learning approach that operates at scale, learns online from partial feedback, and explicitly accounts for the non-stationary relationship between training problems and policy improvement.

\vspace{-0.1in}
\section{Method}
\vspace{-0.1in}
\label{sec:methodology}

\begin{figure*}[htbp]
\vspace{-0.1in}
    \centering
    \includegraphics[width=\textwidth]{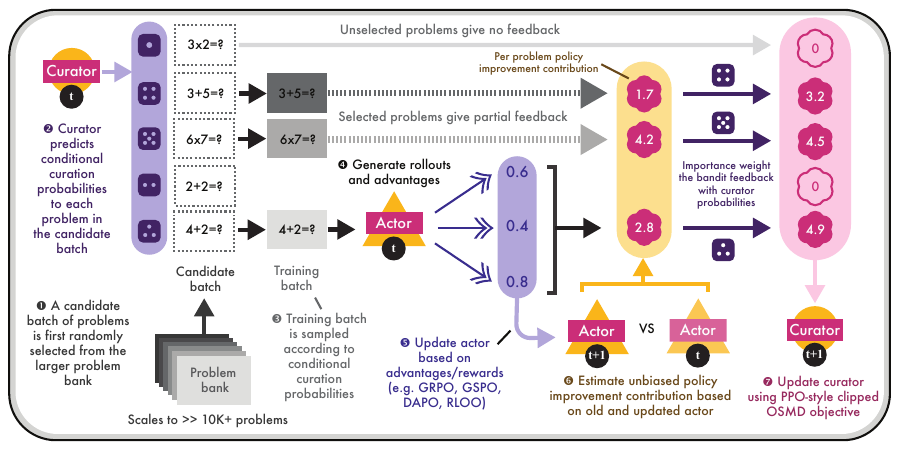}
    \vspace{-0.2in}
    \caption{
    \footnotesize
    \textbf{Single training iteration of \method}.
    At each iteration, a candidate set of problems is sampled from a fixed proposal distribution.
    The curator reweights this candidate set to select training problems for the actor.
    After the actor update, per-problem policy improvement is estimated using pre- and post-update policies.
    The curator observes bandit feedback only on selected problems and is updated using a PPO-style approximation of online stochastic mirror descent.
    }
    \label{fig:main}
    \vspace{-0.1in}
\end{figure*}

We now present \method, a curriculum learning framework that trains a learned \emph{curator}
to adaptively select training problems for RL post-training of large language models.
The key idea is to treat problem selection as a non-stationary bandit problem and to train
the curator to directly maximize \emph{policy improvement}—the expected performance gain
induced by each actor update—using online stochastic mirror descent (OSMD) under partial feedback.

\begin{algorithm}[h]
\caption{Actor--Curator: Self-driven curriculum learning}
\label{alg:curriculum}
\begin{algorithmic}[1]
\State \textbf{Input:} dataset $\mathcal{X}$, pretrained LLM (actor) $\pi^0$, reward model $R$, curator model $C_{\phi^0}$, proposal distribution $\tilde{q}$
\State \textbf{Hyperparameters:} number of training steps $T$, rollouts per problem $|\mathcal{Y}_{\boldsymbol{x}}|$, candidate batch size $|\tilde{\mathcal{X}}^t|$, training batch size $|\mathcal{X}^t|$

\For{$t = 0$ to $T-1$} \Comment{Training steps}
    \State $\mathcal{D}^t \gets \emptyset$
    \State \textbf{Proposal step:} sample a candidate batch $\tilde{\mathcal{X}}^t \subset \mathcal{X}$ according to $\tilde{q}$
    \State \textbf{Selection step:} sample a training batch $\mathcal{X}^t \subset \tilde{\mathcal{X}}^t$ using the curator $C_{\phi^t}$

    \For{each problem $\boldsymbol{x} \in \mathcal{X}^t$}
        \State Roll out solutions $\mathcal{Y}^t_{\boldsymbol{x}} = \{\boldsymbol{y}^{(j)}\sim \pi^{t}(\cdot \mid \boldsymbol{x}) \}_{j=1}^{|\mathcal{Y}_{\boldsymbol{x}}|}$
        \State Compute rewards $\mathcal{R}^t_{\boldsymbol{x}} = \{R(\boldsymbol{y}^{(j)} \mid \boldsymbol{x} )\}_{j=1}^{|\mathcal{Y}_{\boldsymbol{x}}|}$
        \State Add $(\boldsymbol{x}, \mathcal{Y}^t_{\boldsymbol{x}}, \mathcal{R}^t_{\boldsymbol{x}})$ to $\mathcal{D}^t$
    \EndFor

    \State \textbf{Actor update:} $\pi^{t+1} \leftarrow \mathcal{A}(\pi^{t}, \mathcal{D}^t)$
    \State \textbf{Curator utilities:} compute $\hat{\boldsymbol{U}}^t$ using \eqref{eq:x_contribution_estimated_two_stage} (and $\pi^{t+1}$)
    \State \textbf{Curator update:} $\phi^{t+1} \leftarrow \arg\min_{\phi}\ \mathcal{L}_{\text{cur}}(\phi)$ using \eqref{eq:clipped_curator}
\EndFor
\end{algorithmic}
\end{algorithm}

\subsection{Overview of the training loop}

Training proceeds in iterations.
At iteration $t$, the following steps are performed:

\begin{enumerate}[leftmargin=*,itemsep=0pt,topsep=0pt,parsep=0.4pt,partopsep=0pt]
    \item We sample a training subset $\mathcal{X}^t \subset \mathcal{X}$ based on probabilities produced by the curator.
    \item The actor $\pi^t$ is rolled out on $\mathcal{X}^t$ to collect trajectories and updated to $\pi^{t+1}$ using any RL update.
    \item Compute a per-problem policy improvement estimate  for problems in $\mathcal{X}^t$ using $\pi^t$ and $\pi^{t+1}$
    \item The curator is updated online using bandit feedback derived from these improvement estimates.
\end{enumerate}

The curator is trained jointly with the actor in an on-policy manner, allowing the curriculum
to adapt dynamically as the actor improves.
Figure~\ref{fig:main} illustrates this process. Pseudocode is provided in \cref{alg:curriculum}.

\subsection{Policy improvement as the curator learning signal}

The curator’s objective is to select problems that maximize improvement in the actor’s
performance under the fixed evaluation distribution $p_{\mathcal{X}}$.

\paragraph{Performance improvement identity.}
Define the performance improvement at iteration $t$ as
\[
u^t \triangleq J(\pi^{t+1}) - J(\pi^t).
\]
In the single-turn setting, the performance difference identity~\citep{kakade2002approximately}
gives
\begin{equation}
\label{eq:policy_improvement_identity}
u^t
=
\mathbb{E}_{\boldsymbol{x}\sim p_{\mathcal{X}}}
\mathbb{E}_{\boldsymbol{y}\sim \pi^{t+1}(\cdot\mid\boldsymbol{x})}
\left[
A_{\pi^t}(\boldsymbol{y}\mid\boldsymbol{x})
\right]
, \quad
A_{\pi^t}(\boldsymbol{y}\mid\boldsymbol{x})
\triangleq
R(\boldsymbol{y}\mid\boldsymbol{x})
-
\mathbb{E}_{\boldsymbol{y}'\sim\pi^t(\cdot\mid\boldsymbol{x})}
\left[
R(\boldsymbol{y}'\mid\boldsymbol{x})
\right].
\end{equation}
Applying importance sampling yields
\begin{equation}
\label{eq:policy_improvement_is}
u^t
=
\mathbb{E}_{\boldsymbol{x}\sim p_{\mathcal{X}}}
\mathbb{E}_{\boldsymbol{y}\sim \pi^{t}(\cdot\mid\boldsymbol{x})}
\left[
\frac{\pi^{t+1}(\boldsymbol{y}\mid\boldsymbol{x})}
{\pi^{t}(\boldsymbol{y}\mid\boldsymbol{x})}
A_{\pi^t}(\boldsymbol{y}\mid\boldsymbol{x})
\right].
\end{equation}
\paragraph{Per-problem utility.}
\Cref{eq:policy_improvement_is} decomposes additively across problems.
For each $\boldsymbol{x}\in\mathcal{X}$, define the per-problem utility
\begin{equation}
\label{eq:per_problem_utility}
u_{\boldsymbol{x}}^t
\triangleq
p_{\mathcal{X}}(\boldsymbol{x})
\mathbb{E}_{\boldsymbol{y}\sim \pi^{t}(\cdot\mid\boldsymbol{x})}
\left[
\frac{\pi^{t+1}(\boldsymbol{y}\mid\boldsymbol{x})}
{\pi^{t}(\boldsymbol{y}\mid\boldsymbol{x})}
A_{\pi^t}(\boldsymbol{y}\mid\boldsymbol{x})
\right].
\end{equation}
By construction, $u^t = \sum_{\boldsymbol{x}} u_{\boldsymbol{x}}^t$.
Although the actor update couples all selected problems, $u_{\boldsymbol{x}}^t$ provides a
principled first-order credit assignment signal under small policy updates, grounded in
policy improvement theory. We provide further explanation in \cref{sec:utility_contribution}.

\subsection{Tabular OSMD formulation}
\vspace{-0.1in}

We cast curriculum learning as a non-stationary stochastic bandit problem, where each
problem $\boldsymbol{x}$ corresponds to an arm with time-varying utility $u_{\boldsymbol{x}}^t$.
The curator is optimized using \textbf{online stochastic mirror descent} (OSMD) under bandit feedback ~\citep{banditalgorithm}. We start with the tabular formulation first, where the curator maintains a probability mass function $\boldsymbol{p}^t \in \Delta_\alpha(\mathcal{X)}$ over a finite set of problems, where $\boldsymbol{p}^t$ is clipped to  the sampling distribution $p^t(\boldsymbol{x}\mid\tilde{\mathcal{X}}^t) \ge \alpha > 0$.
In \cref{sec:func_approx} we show how to represent $p^t$ using a learned model.

\paragraph{Utility estimation and OSMD bandit feedback.}
For each $\boldsymbol{x}\in\mathcal{X}^t$, let $\mathcal{Y}_{\boldsymbol{x}}^t$ denote rollouts
from $\pi^t(\cdot\mid\boldsymbol{x})$.
We estimate \cref{eq:per_problem_utility} via
\begin{equation}
\label{eq:x_contribution_estimated}
    \hat{U}^t_{\boldsymbol{x}} \triangleq  p_\mathcal{X}(\boldsymbol{x})
    \frac{\mathbb{I}\{\boldsymbol{x} \in \mathcal{X}^t\}}{p^t(\boldsymbol{x})} 
    \hat{A}^t(\cdot \mid \boldsymbol{x})
    , \qquad
    \hat{A}^t(\cdot \mid \boldsymbol{x}) \triangleq \frac{1}{|\mathcal{Y}^t_{\boldsymbol{x}}|} \sum_{\boldsymbol{y} \in \mathcal{Y}^t_{\boldsymbol{x}}} \frac{\pi^{t+1}(\boldsymbol{y}\mid \boldsymbol{x})}{\pi^{t}(\boldsymbol{y}\mid \boldsymbol{x})} A(\boldsymbol{y}\mid \boldsymbol{x})
\end{equation}
where $\hat{A}^t$ is the importance normalized estimated average advantage. This estimate is agnostic to the specifics of the actor update method as long as an updated actor is produced. 

\begin{theorem}[Unbiasedness]
\label{thm:one_stage_unbiased}
$\mathbb{E}[\hat{U}_{\boldsymbol{x}}^t]=u_{\boldsymbol{x}}^t$.
\end{theorem}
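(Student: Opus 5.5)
The plan is to condition on everything determined before the rollouts and handle the two sources of randomness in order. The first is the curator's selection of $\mathcal{X}^t$. The second is the rollouts $\mathcal{Y}^t_{\boldsymbol{x}}$ drawn from $\pi^t(\cdot\mid\boldsymbol{x})$. Throughout, $\pi^t$, $p^t$ and the proposal batch are fixed by conditioning on the history up to iteration $t$, and the expectation in the statement is read as conditional on that history.

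I interpret $p^t(\boldsymbol{x})$ as the marginal inclusion probability $\Pr[\boldsymbol{x}\in\mathcal{X}^t]$. This equals the curator's sampling probability for a single draw. For batch or two-stage sampling it is the product of proposal and curator probabilities, which is made precise later in \eqref{eq:x_contribution_estimated_two_stage}. The clipping $p^t\ge\alpha>0$ guarantees that it is strictly positive, so the ratio in \eqref{eq:x_contribution_estimated} is well defined.

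The key steps, in order:
\begin{enumerate}
\item Use the tower rule, $\mathbb{E}[\hat U^t_{\boldsymbol{x}}]=\mathbb{E}\bigl[\mathbb{E}[\hat U^t_{\boldsymbol{x}}\mid \mathcal{X}^t]\bigr]$. On the event $\boldsymbol{x}\in\mathcal{X}^t$, the rollouts are i.i.d.\ from $\pi^t(\cdot\mid\boldsymbol{x})$. By linearity of the empirical average, $\mathbb{E}[\hat A^t(\cdot\mid\boldsymbol{x})\mid\boldsymbol{x}\in\mathcal{X}^t]$ equals the inner expectation in \eqref{eq:per_problem_utility}, namely $\mathbb{E}_{\boldsymbol{y}\sim\pi^t}\bigl[\tfrac{\pi^{t+1}}{\pi^t}A_{\pi^t}\bigr]$.
\item Take the expectation over selection. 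This gives $p_{\mathcal{X}}(\boldsymbol{x})\,\tfrac{\Pr[\boldsymbol{x}\in\mathcal{X}^t]}{p^t(\boldsymbol{x})}$ times that quantity. The inclusion probability cancels against $p^t(\boldsymbol{x})$, which yields $u^t_{\boldsymbol{x}}$.
\end{enumerate}

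The main obstacle is step 1, since it requires that $\pi^{t+1}$ and the advantage can be treated as fixed with respect to the rollouts being averaged. In the algorithm, however, $\pi^{t+1}=\mathcal{A}(\pi^t,\mathcal{D}^t)$ is computed from those very rollouts. In addition, $A(\boldsymbol{y}\mid\boldsymbol{x})$ uses the true baseline $\mathbb{E}_{\pi^t}[R]$ rather than a group mean.

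I would handle this by stating the standing convention explicitly. The target $u^t_{\boldsymbol{x}}$ is defined with $\pi^{t+1}$ given, so the expectation is taken with $\pi^{t+1}$ held fixed, that is, conditional on the actor update. The rollouts used in $\hat A^t$ are then regarded as samples from $\pi^t$ independent of that conditioning, either as a fresh evaluation batch or as a first-order idealization. The advantage is taken as the exact $A_{\pi^t}$, as in \eqref{eq:policy_improvement_identity}.

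Under these conventions the argument is just the two linearity steps above. If instead a group-mean baseline were used, I would note that for a leave-one-out baseline the bias vanishes only when the importance ratio is constant. I would therefore restrict the claim to the exact-advantage definition.
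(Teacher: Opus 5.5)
Your argument is correct and is essentially the paper's. Both use the tower rule to separate the curator's selection from the rollout randomness, cancel $\Pr(\boldsymbol{x}\in\mathcal{X}^t)=p^t(\boldsymbol{x})$ against the denominator, and use linearity over i.i.d.\ rollouts for $\hat{A}^t$; the paper conditions on $\hat{A}^t$ first while you condition on $\mathcal{X}^t$ first, which is slightly cleaner since rollouts exist only for selected problems, and your explicit conventions that $\pi^{t+1}$ is held fixed and $A_{\pi^t}$ is the exact advantage are assumptions the paper's proof makes silently (your side remark that the two-stage inclusion probability is $q(\boldsymbol{x})\,p^t(\boldsymbol{x}\mid\tilde{\mathcal{X}}^t)$ is not right, since that factor depends on the realized candidate set and the two-stage case is handled by nested conditioning, but it plays no role here).
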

We prove this in \cref{sec:performance_bound}.
For decoder language models, $\pi^{t+1}(\boldsymbol{y}\mid\boldsymbol{x})$ is obtained via a single forward pass of the updated model on the previous solution.

\paragraph{OSMD update}

Given bandit feedback, the curator is updated with a
negative-entropy regularizer:
\begin{equation}
\label{eq:osmd_update}
p^{t+1}
\leftarrow
\arg\min_{\boldsymbol{p}\in\Delta_\alpha(\mathcal{X})}
\left\{
-\eta \langle \boldsymbol{p}, \hat{\boldsymbol{U}}^t\rangle
+
\mathrm{KL}(\boldsymbol{p}\|\boldsymbol{p}^t)
\right\}.
\end{equation}
This yields the exponentiated-gradient update
$
p^{t+1}(\boldsymbol{x})
\propto
p^t(\boldsymbol{x})\exp(\eta \hat{U}_{\boldsymbol{x}}^t).
$

\subsection{Function approximation for curator training}
\label{sec:func_approx}

Although the OSMD update in \cref{eq:osmd_update} is defined over a distribution on the
entire problem set $\mathcal{X}$, explicitly maintaining and updating tabular probabilities
is infeasible when $\mathcal{X}$ is large.
We therefore parameterize the curator using a neural network that assigns a positive score
to each problem and implicitly defines a probability distribution.
The curator and induced distribution are defined as
\[
C_\phi : \boldsymbol{x} \mapsto w_\phi(\boldsymbol{x}), \qquad
p_\phi(\boldsymbol{x})
\triangleq
\frac{w_\phi(\boldsymbol{x})}{\sum_{\boldsymbol{x}'\in\mathcal{X}} w_\phi(\boldsymbol{x}')},
\quad
w_\phi(\boldsymbol{x}) > 0.
\]

\vspace{-0.2in}
\paragraph{OSMD surrogate objective.}
To implement the OSMD update \cref{eq:osmd_update} with function approximation, we optimize
the following surrogate objective:
\begin{equation}
\label{eq:curator_surrogate}
\mathcal{L}_{\mathrm{cur}}(\phi)
=
\mathrm{KL}\!\left(
p_\phi
\;\|\;
p^t
\right)
-
\eta
\langle
p_\phi,
\hat{\boldsymbol{U}}^t
\rangle ,
\end{equation}
where $p^t$ denotes the curator distribution from iteration $t$.

\subsection{Two-stage sampling}
\label{sec:two_stage_sampling}

Sampling directly from a curator distribution over the full problem set $\mathcal{X}$ is
computationally infeasible at scale.
We therefore adopt a two-stage sampling scheme that separates \emph{coverage} from
\emph{adaptive curation} while preserving unbiased utility estimation.

At iteration $t$, we first sample a candidate set
$\tilde{\mathcal{X}}^t \subset \mathcal{X}$ from a fixed proposal distribution $\tilde{q}$.
Let
\[
q(\boldsymbol{x})
\triangleq
\Pr_{\tilde{\mathcal{X}}\sim\tilde{q}}\!\left(\boldsymbol{x}\in\tilde{\mathcal{X}}\right),
\quad
p^t(\boldsymbol{x}\mid\tilde{\mathcal{X}}^t)
\triangleq
\frac{w^t(\boldsymbol{x})}
{\sum_{\boldsymbol{x}'\in\tilde{\mathcal{X}}^t} w^t(\boldsymbol{x}')}.
\]
Here $q(\boldsymbol{x})$ denotes the induced marginal inclusion probability, and we assume
$q(\boldsymbol{x}) \ge q_{\min} > 0$ for all $\boldsymbol{x}\in\mathcal{X}$.
Conditioned on $\tilde{\mathcal{X}}^t$, the curator samples a training set
$\mathcal{X}^t \subset \tilde{\mathcal{X}}^t$ according to the restricted distribution
$p^t(\boldsymbol{x}\mid\tilde{\mathcal{X}}^t)$, where $w^t(\boldsymbol{x})>0$
is the curator score at iteration $t$.
This allows the curator to prioritize problems while operating only on a small candidate batch.

\paragraph{Utility estimation.}
The unbiased two-stage estimator corresponding to \cref{eq:x_contribution_estimated} is
\begin{equation}
\label{eq:x_contribution_estimated_two_stage}
\hat{U}^{t}_{\mathrm{two}, \boldsymbol{x}}
\;\triangleq\;
p_\mathcal{X}(\boldsymbol{x})\,
\frac{\mathbb{I}\{\boldsymbol{x} \in \mathcal{X}^t\}}
{q(\boldsymbol{x})\,p^t(\boldsymbol{x} \mid \tilde{\mathcal{X}}^t)}
\;\hat{A}^t(\cdot \mid \boldsymbol{x}),
\end{equation}
which corrects for both proposal and curation sampling probabilities.
This estimator satisfies $\mathbb{E}[\hat{U}^{t}_{\mathrm{two}, \boldsymbol{x}}]=u^t_{\boldsymbol{x}}$
(see \cref{sec:performance_bound}).
Substituting \cref{eq:x_contribution_estimated_two_stage} into the OSMD update \cref{eq:osmd_update} yields the
surrogate objective
\begin{equation}
\label{eq:curator_surrogate_two}
\begin{aligned}
\mathcal{L}_{\mathrm{two}}(\phi)
&\triangleq
\mathrm{KL}\!\left(
p_\phi \;\|\; p^t
\right)
-
\eta
\sum_{\boldsymbol{x}\in\mathcal{X}^t}
\frac{p_\phi(\boldsymbol{x}\mid\tilde{\mathcal{X}}^t)}
{p^t(\boldsymbol{x}\mid\tilde{\mathcal{X}}^t)}
\;
\frac{p_\mathcal{X}(\boldsymbol{x})\,\hat{A}^t(\cdot\mid\boldsymbol{x})}
{q(\boldsymbol{x})},
\end{aligned}
\end{equation}
where $p_\phi(\cdot\mid\tilde{\mathcal{X}}^t)$ is the conditional curator distribution.

\paragraph{Regret guarantee.}
We now state a regret bound for curator optimization under two-stage sampling.
The bound characterizes the curator's ability to track the best sequence of
problem-selection distributions in hindsight despite non-stationary utilities. A proof is provided in \cref{sec:idealized-omd}.

\begin{theorem}
\label{thm:regret}
Assume the curator is updated using OSMD with a negative-entropy regularizer and
receives bandit feedback
$\hat{U}^{t}_{\mathrm{two}, \boldsymbol{x}}$ satisfying
$\mathbb{E}[\hat{U}^{t}_{\mathrm{two}, \boldsymbol{x}}] = u^t_{\boldsymbol{x}}$.
Then the cumulative dynamic regret satisfies
\[
\mathrm{Reg}_T
\;\le\;
O\!\left(T^{2/3} V_T^{1/3}\right), \quad V_T \triangleq \sum_{t=2}^T \max_{\boldsymbol{x} \in \mathcal{X}} |u_{\boldsymbol{x}}^t - u_{\boldsymbol{x}}^{t-1}|
\]
where $\mathrm{Reg}_T$ is the regret against the best available arm, ignoring uniform exploration which we define in \cref{sec:regret-proof}, and $V_T$ is a measure of how rapid the utility of a problem changes over time $t$.
\end{theorem}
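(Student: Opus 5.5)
We prove the bound with a restarting (blocking) argument on top of the standard static OSMD analysis. Partition the horizon $\{1,\dots,T\}$ into $\lceil T/\Delta\rceil$ consecutive blocks $\mathcal{B}_1,\mathcal{B}_2,\dots$ of length $\Delta$. At the start of each block we reset the curator to the uniform (or $\alpha$-clipped uniform) distribution on $\Delta_\alpha(\mathcal{X})$, and within a block we run the exponentiated-gradient update \eqref{eq:osmd_update} with step size $\eta$. As announced in the statement, regret is measured against the best arm in each round, ignoring the $\alpha$-uniform exploration. This lets us analyse OSMD over the full simplex and treat the clipping as a constant-factor lower bound on sampling probabilities, $p^t(\boldsymbol{x}\mid\tilde{\mathcal{X}}^t)\ge\alpha$ and $q(\boldsymbol{x})\ge q_{\min}$.

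Within a single block we compare to the fixed arm $\boldsymbol{x}^\star_{\mathcal{B}}$ that is best on average over the block. The standard OSMD regret lemma for the negative-entropy regularizer gives
\[
\mathbb{E}\Bigl[\sum_{t\in\mathcal{B}}\bigl\langle \boldsymbol{e}_{\boldsymbol{x}^\star}-\boldsymbol{p}^t,\boldsymbol{u}^t\bigr\rangle\Bigr]\le \frac{\log|\mathcal{X}|}{\eta}+\frac{\eta}{2}\sum_{t\in\mathcal{B}}\mathbb{E}\Bigl[\sum_{\boldsymbol{x}}p^t(\boldsymbol{x})\bigl(\hat{U}^t_{\mathrm{two},\boldsymbol{x}}\bigr)^2\Bigr].
\]
Here we use the unbiasedness $\mathbb{E}[\hat{U}^{t}_{\mathrm{two},\boldsymbol{x}}]=u^t_{\boldsymbol{x}}$ (the two-stage analogue of \cref{thm:one_stage_unbiased}) to replace the estimated linear losses by the true ones. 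This step requires conditioning on the history, since $\boldsymbol{p}^t$ is measurable with respect to rounds before $t$.

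The variance term is where the clipping matters. Since rewards lie in $[0,1]$, the advantage satisfies $|A|\le 1$. We also assume the importance ratio $\pi^{t+1}/\pi^t$ is bounded by a constant $\rho$, which is the small-update regime already invoked for the per-problem utility. Together these give $|\hat{A}^t|\le\rho$. It follows that
\[
\mathbb{E}\Bigl[\sum_{\boldsymbol{x}} p^t(\boldsymbol{x})\bigl(\hat{U}^t_{\mathrm{two},\boldsymbol{x}}\bigr)^2\Bigr]\le \sum_{\boldsymbol{x}} \frac{p_{\mathcal{X}}(\boldsymbol{x})^2\rho^2}{q(\boldsymbol{x})\,p^t(\boldsymbol{x}\mid\tilde{\mathcal{X}}^t)}\,p^t(\boldsymbol{x})\le C,
\]
where $C=O(\rho^2/(q_{\min}\alpha))$ is a constant independent of $T$. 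With this bound, the per-block regret is at most $\log|\mathcal{X}|/\eta+\eta C\Delta/2$. Choosing $\eta\propto\sqrt{\log|\mathcal{X}|/(C\Delta)}$ gives $O(\sqrt{\Delta})$ per block.

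Next we pass from the static per-block comparator to the dynamic one, namely the best arm in each round. Within a block, the per-round optimum $\max_{\boldsymbol{x}}u^t_{\boldsymbol{x}}$ exceeds $\sum_{t\in\mathcal{B}}u^t_{\boldsymbol{x}^\star_{\mathcal{B}}}$ by at most $2\Delta V_{\mathcal{B}}$, where $V_{\mathcal{B}}$ is the variation of the utilities restricted to the block. This holds because every $u^t_{\boldsymbol{x}}$ stays within $V_{\mathcal{B}}$ of its value at the block start. Summing over blocks and using $\sum_{\mathcal{B}}V_{\mathcal{B}}\le V_T$ gives
\[
\mathrm{Reg}_T\le \frac{T}{\Delta}\,O(\sqrt{\Delta})+2\Delta V_T = O\bigl(T\Delta^{-1/2}+\Delta V_T\bigr).
\]
Balancing the two terms with $\Delta\asymp (T/V_T)^{2/3}$ yields $O(T^{2/3}V_T^{1/3})$, which is the claimed bound. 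The edge cases $V_T\to 0$ and $\Delta>T$ are handled by capping $\Delta$ at $T$, which recovers $O(\sqrt{T})$.

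The main obstacle is the variance bound. It needs the importance ratios and $1/\alpha$ to be bounded, and the $\alpha$-clipped domain makes the entropic OSMD step a projection rather than a pure exponentiated update. We handle this by using the generalized Pythagorean inequality for Bregman projections onto the convex set $\Delta_\alpha$, so the standard lemma still applies. A secondary concern is that $V_T$ must be known in order to tune $\Delta$. If it is not, we would cite a doubling or meta-algorithm argument and absorb the cost into the $O(\cdot)$.
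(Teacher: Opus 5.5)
Your block-restart skeleton matches the paper's: per-block cost $\log(\cdot)/\eta+\eta\cdot(\text{second moment})\cdot L$, drift cost of order $L V_T$, then $L\asymp (T/V_T)^{2/3}$. The gap is that the quantity you bound is not the one in the theorem. You apply the standard OSMD lemma to the global iterate $\boldsymbol p^t$ against $\max_{\boldsymbol x\in\mathcal X}u^t_{\boldsymbol x}$. This works because the $1/q(\boldsymbol x)$ factor makes $\hat U^t_{\mathrm{two},\boldsymbol x}$ unbiased for every arm, so the candidate set drops out of your analysis except through the variance.

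The curator, however, never plays $\boldsymbol p^t$; it samples from $p^t(\cdot\mid\tilde{\mathcal X}^t)$. The $\mathrm{Reg}_T$ of the statement (made precise in \cref{sec:regret-proof}) compares this played conditional with the best \emph{available} arm, smoothed as $(1-k\alpha)\boldsymbol e_{m_t}+\alpha\mathbf 1_{\tilde{\mathcal X}_t}$. Your write-up drops the word ``available''. A small global-iterate regret does not control the played regret. It only says that $\boldsymbol p^t$ is nearly concentrated on the global best arm. It says nothing about how the leftover mass is arranged among the other arms, yet that arrangement is exactly what gets played in the $1-k/|\mathcal X|$ fraction of rounds where the global best arm is absent from $\tilde{\mathcal X}^t$. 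So what your argument establishes, modulo routine details, is a dynamic-regret bound for an iterate that is never sampled, not the stated bound.

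Handling the played distribution is the non-standard, sleeping-bandit part of the result, and it breaks two of your steps as written. First, a block-fixed arm $\boldsymbol x^\star_{\mathcal B}$ is not a legal comparator in rounds where it is unavailable. The drift step has to be redone against something like a block-fixed ranking of arms; your $2V_{\mathcal B}$ argument does survive for that comparator. Second, the KL potential has to be tracked across conditionals whose support $\tilde{\mathcal X}_t$ changes every round.

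The paper's proof is organized around exactly this:
\begin{itemize}
\item Lemma~\ref{lemma:subset-omd-ineq} writes the exponential-weights step directly for the distribution renormalized on $\tilde{\mathcal X}_t$.
\item Lemma~\ref{lemma:second-moment-bound} bounds the conditional second moment by $k/s$.
\item Lemmas~\ref{lemma:block-per-turn-gap} and~\ref{lemma:unopt-regret} handle the block comparator, before the same tuning of $L$ and $\eta$ that you use.
\end{itemize}
This step is delicate there too. The proof of Lemma~\ref{lemma:subset-omd-ineq} renormalizes $\boldsymbol p^{t+1}$ on $\tilde{\mathcal X}_t$, whereas the telescoping in Lemma~\ref{lemma:unopt-regret} needs it renormalized on $\tilde{\mathcal X}_{t+1}$. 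So you cannot close the gap by citing the standard OSMD lemma; this step needs its own argument.
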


While we focus on two-stage sampling for efficiency,
\method is compatible with other approximate sampling schemes
(e.g., Metropolis--Hastings), provided marginal inclusion
probabilities are roughly proportional to curator-assigned weights.
\vspace{-3mm}

\subsection{Proximal curator optimization}
\label{sec:proximal_curation}
Directly optimizing the KL-regularized objective in~\eqref{eq:curator_surrogate} can be
unstable with neural network parameterization ~\citep{schulman2015trust}.
Following proximal policy optimization (PPO)~\citep{schulman2017proximal}, we adopt a
clipped surrogate objective.
Define the importance ratio and sub-objective as
\[
\rho_\phi(\boldsymbol{x})
\triangleq
\frac{p_\phi(\boldsymbol{x}\mid\tilde{\mathcal{X}}^t)}
{p^t(\boldsymbol{x}\mid\tilde{\mathcal{X}}^t)}, 
\quad
g^t(\boldsymbol{x})
\triangleq
\frac{p_\mathcal{X}(\boldsymbol{x})\hat{A}^t(\cdot \mid \boldsymbol{x})}{q(\boldsymbol{x})} 
\]
Starting from ~\eqref{eq:curator_surrogate_two}, we replace the linear probability ratio term with a clipped surrogate
\begin{equation}
\label{eq:clipped_curator}
\begin{aligned}
    \mathcal{L}_{\mathrm{cur}}^{\mathrm{PCO}}(\phi)
\triangleq
-\eta
\sum_{\boldsymbol{x}\in\tilde{\mathcal{X}}^t}
\min\!\Big(&
\rho_\phi(\boldsymbol{x})\,g^t(\boldsymbol{x}),\;
\mathrm{clip}(\rho_\phi(\boldsymbol{x}),\rho_{\min}, \rho_{\max})\,
g^t(\boldsymbol{x})
\Big),
\end{aligned}
\end{equation}
where $\rho_{\min}, \rho_{\max}$ are clipping parameters.
This objective constrains successive curator updates while preserving the behavior of online mirror descent in practice.

\section{Experimental results}

\paragraph{Benchmarks.}
We evaluate \method on five reasoning and mathematics benchmarks.
\textbf{Countdown} is an arithmetic puzzle requiring the composition of integers and operations to reach a target value~\citep{stojanovski2025reasoning}.
\textbf{Zebra} is a symbolic logic puzzle that requires finding assignments satisfying a set of constraints~\citep{stojanovski2025reasoning}.
\textbf{ARC-1D} is the one-dimensional variant of the Abstraction and Reasoning Corpus, designed to test inductive generalization~\citep{chollet2019measure,xu2023llms}.
\textbf{MATH500} consists of competition-level mathematics problems~\citep{hendrycks2021measuring}.
\textbf{AIME24} contains problems from the 2024 American Invitational Mathematics Examination.
We additionally consider hard subsets (\textbf{countdown-hard}, \textbf{zebra-hard}, \textbf{arc-hard}) for validation.
We train on \textbf{30K} problems for Countdown, Zebra, and ARC-1D, and \textbf{12K} MATH problems for MATH500 and AIME24.
\vspace{-3mm}

\paragraph{Experimental setup.}
We implement our post-training pipeline using VERL ~\citep{sheng2024hybridflow}.
The curator is initialized from a pretrained Qwen3-0.6B model~\citep{yang2025qwen3}.
Unless otherwise specified, the actor is trained using GSPO, a stabilized variant of GRPO~\citep{ahmadian2024back}, on Qwen2.5-3B.
Additional details are provided in \cref{sec:experimental_setup}, including hyper-parameters.
\begin{table*}[t]
  \caption{\textbf{Peak validation performance} on problems within 100 training steps for different methods with qwen2.5-3b. \method outperforms both other learning based methods (PCL) and methods that rely on human heuristics (SEC). We see similar results with llama3.2-3b-instruct, as shown in \cref{tab:performance_full}.}
  \label{tab:performance}
  \vspace{-0.1in}
  \begin{center}
    \begin{small}
      \begin{sc}
        \setlength{\tabcolsep}{4pt}
        \renewcommand{\arraystretch}{1.05}
        \begin{tabular}{l l ccccccc}
          \toprule
          \textbf{$|\mathcal{X}|$}
          & \textbf{Benchmark}
          & \multicolumn{5}{c}{\textbf{Method}}
          & \multicolumn{2}{c}{\textbf{Improvement}} \\
          \cmidrule(lr){3-7}\cmidrule(lr){8-9}
          &
          &
          $\boldsymbol{\pi_{\mathrm{ref}}}$
          & \textbf{Uniform}
          & \textbf{SEC}
          & \textbf{PCL}
          & \textbf{\meth (Ours)}
          & \textbf{$+\Delta$}
          & \textbf{$+\Delta\%$} \\
          \midrule

          \multirow{2}{*}{30,000}
          & Countdown
          & 0.00 & 44.74 & 58.87 & 57.24 & \textbf{62.12} & +3.25 & +5.52 \\

          & Countdown-hard
          & 0.00 & 41.00 & 51.50 & 48.00 & \textbf{58.00} & +6.50 & +12.62 \\
          \midrule

          \multirow{2}{*}{30,000}
          & Zebra
          & 0.00 & 35.12 & 36.00 & 34.12 & \textbf{37.62} & +1.62 & +4.50 \\

          & Zebra-hard
          & 0.00 & 30.50 & 27.50 & 26.00 & \textbf{34.50} & +4.00 & +13.11 \\
          \midrule

          \multirow{2}{*}{30,000}
          & ARC-1D
          & 0.00 & 26.74 & 27.87 & 26.37 & \textbf{36.37} & +8.50 & +30.51 \\

          & ARC-hard
          & 0.00 & 19.50 & 18.50 & 18.50 & \textbf{31.00} & +11.50 & +58.97 \\
          \midrule

          \multirow{2}{*}{12,000}
          & MATH500
          & 61.80 & \textbf{83.00} & 81.00 & 79.79 & 81.00 & -2.00 & -2.41 \\

          & AIME24
          & 3.33 & 23.33 & 20.00 & 23.33 & \textbf{30.00} & +6.67 & +28.57 \\

          \bottomrule
        \end{tabular}
      \end{sc}
    \end{small}
  \end{center}
  \vskip -0.1in
\end{table*}

\subsection{Main results}
\label{sec:main_results}
\paragraph{Baselines.}
We compare against state-of-the-art curriculum learning methods for LLM post-training, using the same backbone model and actor update.
$\pi_{\text{ref}}$ denotes the pretrained model without post-training.
\textbf{Uniform sampling} draws training problems uniformly at random.
\textbf{SEC} partitions problems into manually defined buckets and updates bucket probabilities based on the sum of absolute advantages~\citep{chen2025self}.
\textbf{PCL} trains a value model to estimate success probabilities and prioritizes problems with predicted success near $50\%$~\citep{gao2025prompt}.
PCL is competitive with recent curriculum-based approaches~\citep{yue2025vapo,zhang2025speed,zheng2025act}.

\paragraph{Performance.}
We evaluate performance on held-out test sets, recording metrics every 10 training steps.
Following prior work~\citep{gao2025prompt}, we report the peak performance achieved within the first 100 steps.
Results are summarized in \cref{tab:performance}.
Across both backbone models and most benchmarks, \method consistently outperforms all baselines, with additional results in \cref{sec:main_results_plus}.
Notably, \method achieves substantially larger gains on harder benchmarks such as \textbf{arc-hard} and \textbf{AIME24}, indicating that adaptive curation is particularly beneficial in challenging regimes.
\vspace{-3mm}

\paragraph{Efficiency.}
As shown in \cref{fig:efficiency}, \method reaches comparable or higher performance using significantly fewer training steps than uniform sampling, demonstrating improved sample efficiency.

\paragraph{Training dynamics.}
Across datasets, \method exhibits more stable optimization and often continues to improve performance after baselines plateau, as shown in \cref{fig:training_curves,fig:testcurves-qwen25-3b,fig:testcurves-llama32-3b}.

\begin{figure*}[htbp]
    \centering
    \begin{subfigure}[b]{0.32\textwidth}
        \centering
        \includegraphics[width=\textwidth]{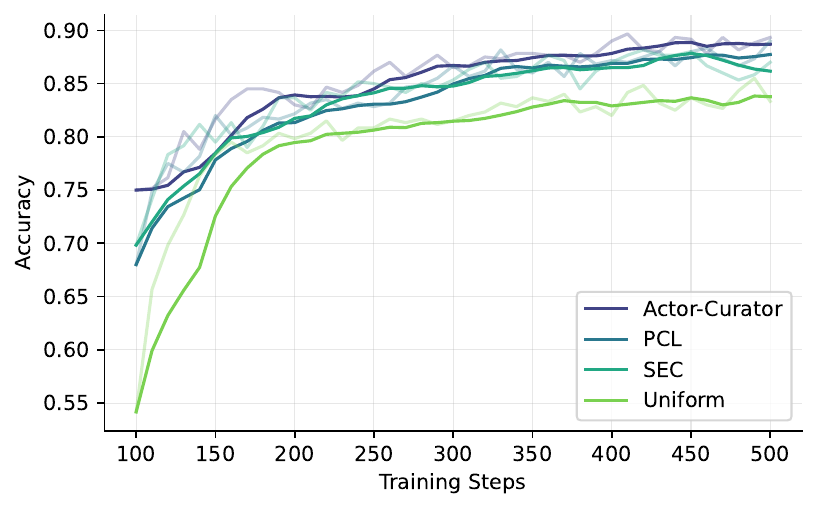}
        \caption{Countdown training curves}
        \label{fig:countdown_training_dynamics}
    \end{subfigure}
    \hfill
    \begin{subfigure}[b]{0.32\textwidth}
        \centering
        \includegraphics[width=\textwidth]{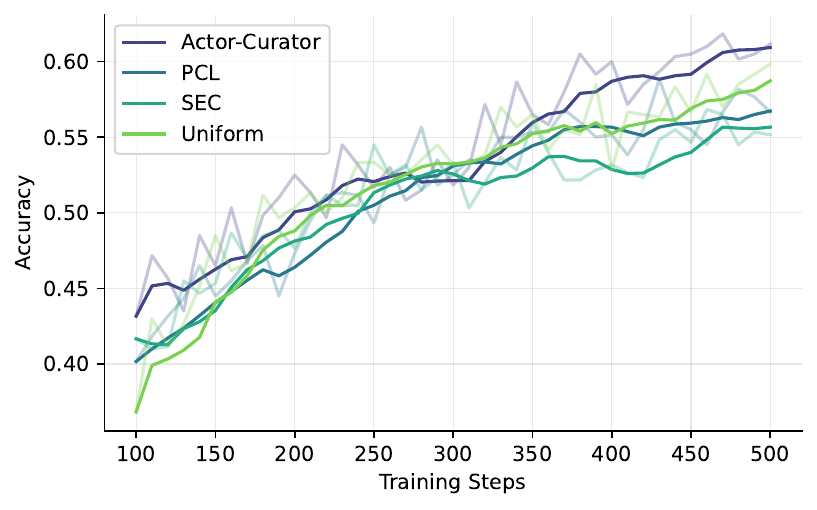}
        \caption{Zebra training curves}
        \label{fig:zebra_training_dynamics}
    \end{subfigure}
    \hfill
    \begin{subfigure}[b]{0.32\textwidth}
        \centering
        \includegraphics[width=\textwidth]{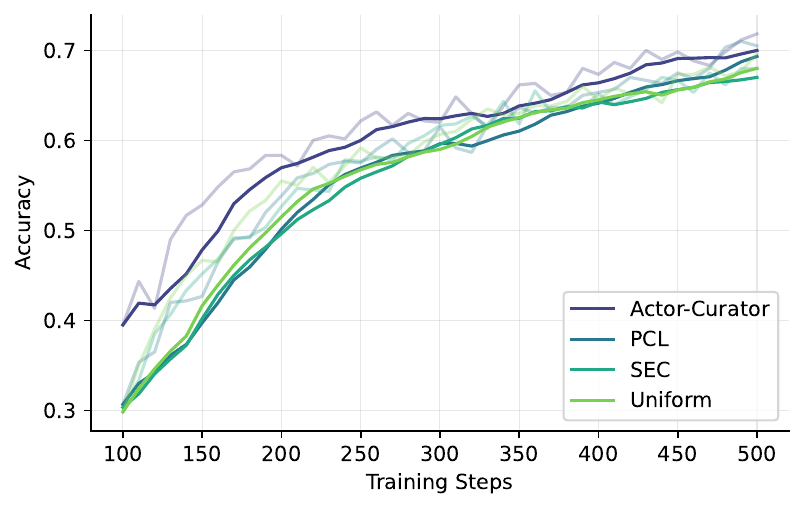}
        \caption{ARC-1D training curves}
        \label{fig:arc_training_dynamics}
    \end{subfigure}
    \caption{\textbf{Training dynamics.}
    \method Test performance over training on three datasets, showing faster convergence and higher final accuracy.}
    \label{fig:training_curves}
    \vspace{-0.1in}
\end{figure*}

\subsection{Ablations}

\paragraph{Core components.}
We ablate two key design choices: the policy-improvement utility and the OSMD bandit objective.
\textbf{Absolute adv} replaces the policy-improvement signal with mean absolute advantage, as in~\citep{chen2025self}.
\textbf{Regression loss} trains the curator to predict the target utility value using a squared loss, rather than learning a classifier as in OSMD. The predicted utilities are then converted into a sampling distribution via a Boltzmann transform with temperature $\eta$, matching the temperature used in OSMD.
As shown in \cref{fig:ab_target}, both components are critical for achieving strong performance. We include a more detailed account of the motivation behind Absolute Adv. and regression in \cref{appendix:baselines}, as well as their difference with \method.

\begin{wrapfigure}{r}{0.45\textwidth}
    \centering

    \includegraphics[width=0.45\textwidth]{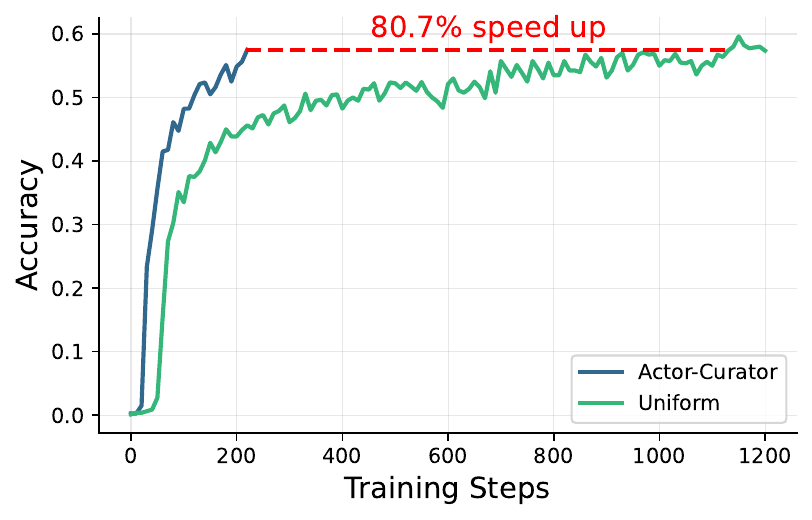}
    \caption{\textbf{Training speed-up.} \method attains high test accuracy with significantly fewer steps.}
    \label{fig:efficiency}

    \vspace{0.1in}

    \includegraphics[width=0.45\textwidth]{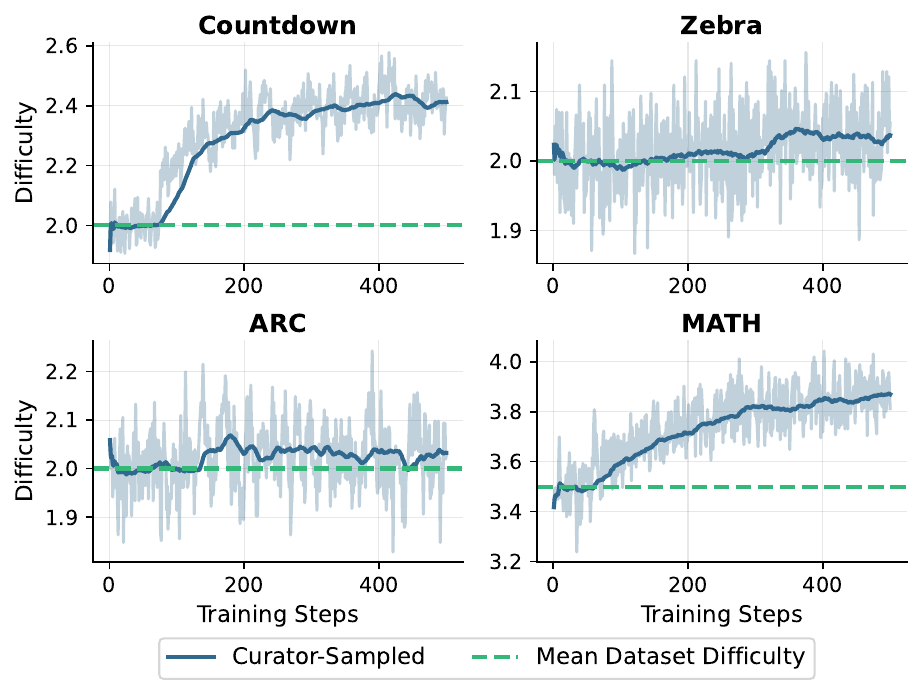}
    \caption{\textbf{Difficulty progression of curated problems.}
    \method gradually increases the average difficulty over training.}
    \label{fig:difficulty_zebra_math}

    \vspace{0.1in}

    \includegraphics[width=0.45\textwidth]{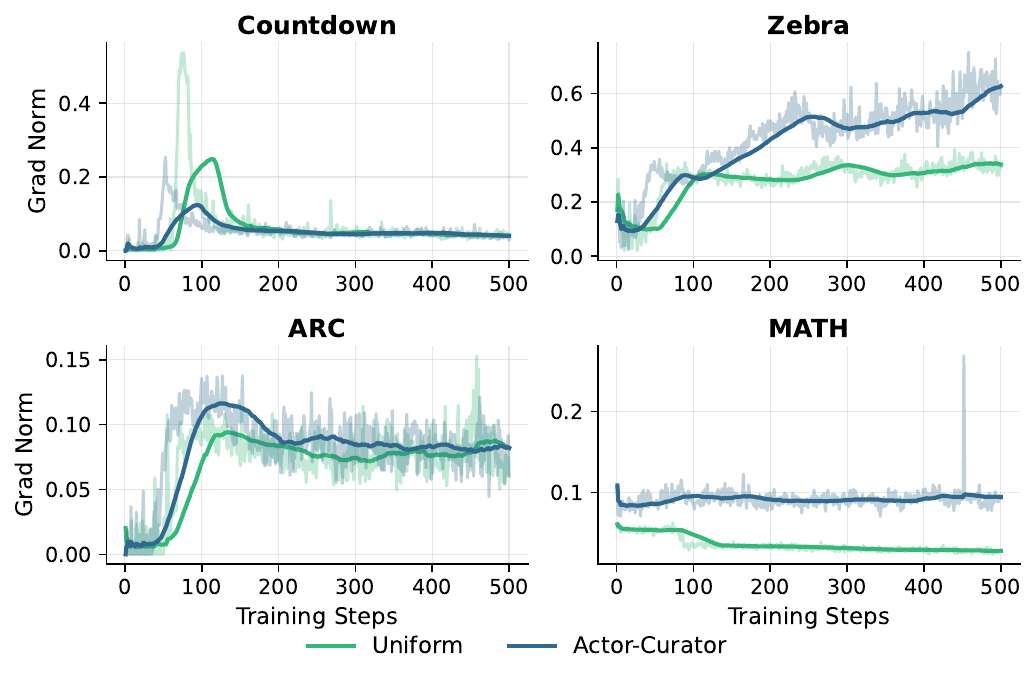}
    \caption{\textbf{Actor gradient norms over training.}
    \method yields larger and more sustained updates.}
    \label{fig:grad_norm_zebra_math}

    \vspace{-1.3in}
\end{wrapfigure}

\paragraph{Actor update methods.}
\method generalizes across actor optimization algorithms.
In \cref{fig:ab_actor_update}, \method significantly improves GRPO-based training relative to uniform sampling, demonstrating robustness to the choice of actor update.

\paragraph{Additional ablation.}
We provide additional ablation on curator model size (\cref{fig:ab_curator_model}) and candidate batch size (\cref{fig:ab_candidate_batch_size}) in \cref{appendix:ablation}.

\subsection{Interpretation and analysis}

\paragraph{Curriculum progression.}
As shown in \cref{fig:difficulty_zebra_math}, \method initially prioritizes easier problems and gradually shifts toward harder ones over training.
Problem difficulty is estimated using heuristic annotations.

\paragraph{Impact on actor updates.}
\method induces consistently larger actor gradient norms than uniform sampling (\cref{fig:grad_norm_zebra_math}), suggesting that curated problems produce more informative learning signals.
This aligns with the curator’s objective of prioritizing problems with higher expected policy improvement. 


\section{Related work}

\paragraph{RLVR.}
Reinforcement learning with verification (RLVR) has emerged as an effective paradigm for improving the capabilities of large language models (LLMs) during post-training~\citep{guo2025deepseek, setlur2024rewarding, chen2025acereason, wen2025light}. Prior work has largely focused on improving the actor update rules~\citep{yu2025dapo, dong2025agentic} or enhancing trajectory generation, often via search-based methods~\citep{zhang2024rest, light2025disc, lightsfs}. Our work is complementary: rather than modifying the actor or rollout process, we focus on learning a principled curriculum that selects which problems the actor should train on to maximize policy improvement.
\vspace{-3mm}

\paragraph{Curriculum learning for RL.}
Classical curriculum learning approaches in reinforcement learning typically select tasks of intermediate difficulty, often defined via the agent’s probability of success~\citep{florensa2017reverse, florensa2018automatic, wohlke2020performance, liu2025understanding}. For example, ProCuRL selects problems whose difficulty is estimated to be near a decision boundary using a learned value network~\citep{tzannetos2023proximal}. In contrast, our approach (1) trains a large language model as a curator that directly operates over language problems, and (2) uses a theoretically grounded, policy-improvement-based target rather than heuristic difficulty estimates. This design allows our method to generalize across different actor update rules and scale to large, heterogeneous datasets.
\vspace{-3mm}

\paragraph{Curriculum learning for LLMs.}
Recent work on self-improving and self-evolving LLMs has highlighted the importance of curriculum learning in RL-based post-training~\citep{ye2024scalable, light2025strategist}. Several methods adjust problem sampling using bandit-style objectives such as UCB~\citep{chen2025self, wang2025dump, gao2025prompt}. However, most existing approaches rely on manual curriculum design, including human-labeled difficulty levels or pre-defined problem buckets that are sampled adaptively~\citep{graves2017automated}. In contrast, we propose one of the first fully automated curriculum learning frameworks for LLM post-training that requires no human annotations or manual dataset structuring.

\begin{figure*}[htbp]
    \centering
    \begin{subfigure}[b]{0.3\textwidth}
        \centering
        \includegraphics[width=\textwidth]{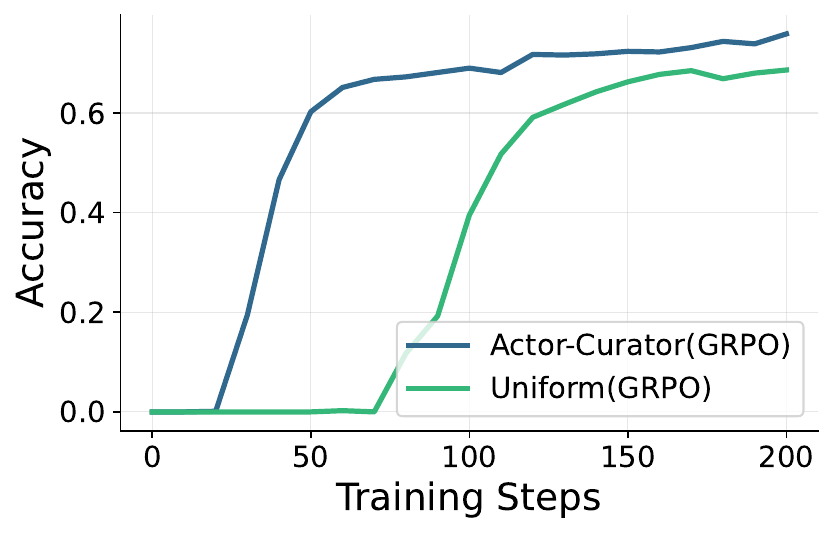}
        \caption{Actor update ablation}
        \label{fig:ab_actor_update}
    \end{subfigure}
    \hfill
    \begin{subfigure}[b]{0.3\textwidth}
        \centering
        \includegraphics[width=\textwidth]{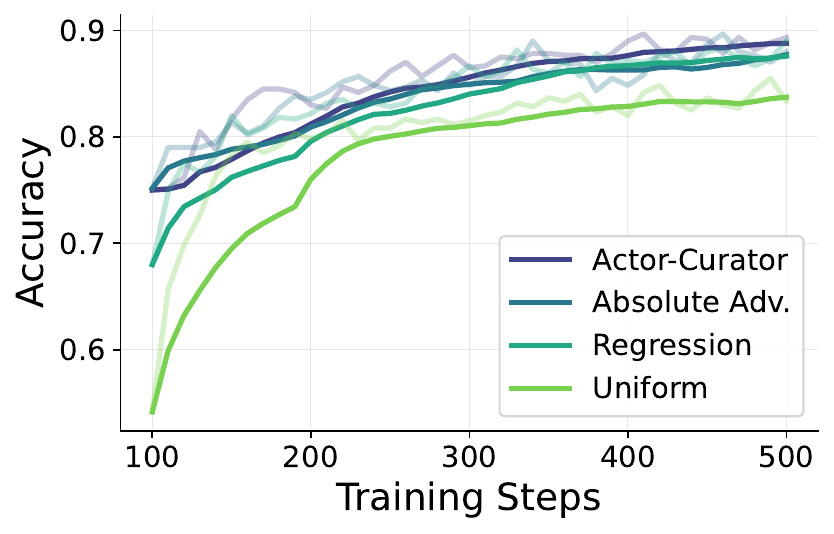}
        \caption{Core component ablation}
        \label{fig:ab_target}
    \end{subfigure}
    \hfill
    \begin{subfigure}[b]{0.3\textwidth}
        \centering
        \includegraphics[width=\textwidth]{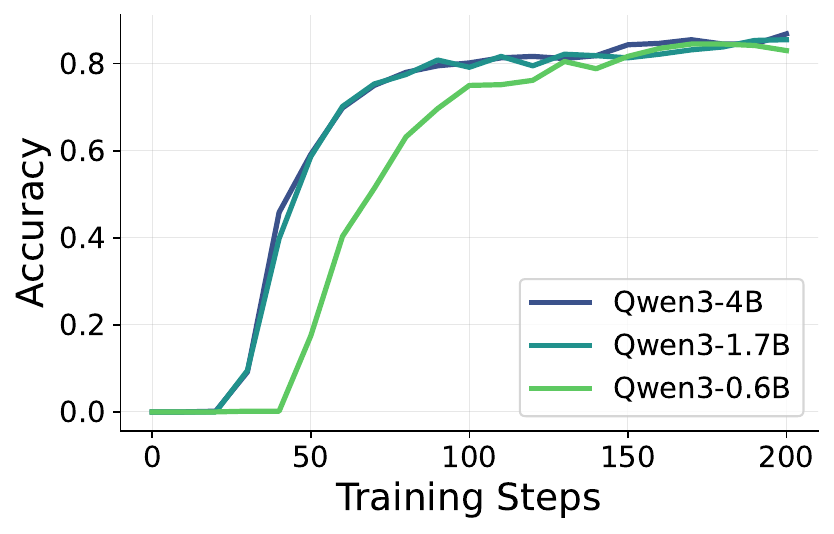}
        \caption{Curator model ablation}
        \label{fig:ab_curator_model}
    \end{subfigure}
    \caption{\textbf{Ablation results.} 
(a) \method is compatible with alternative actor update methods such as GRPO and yields consistent performance gains. 
(b) The combination of the OSMD curator objective and the policy-improvement target achieves superior performance compared to alternative targets and loss functions. 
(c) Varying curator model size leads to similar long-term performance, indicating robustness to curator capabilities.}
\end{figure*}

\section{Limitations}
\label{sec:limitations}
While \method is agnostic to the specific actor update rule and can be combined with a wide range of post-training algorithms, its gains ultimately depend on the stability and quality of the underlying actor optimization: unstable updates can still lead to noisy dynamics or training collapse, a common failure mode in reinforcement learning that curriculum learning alone cannot fully address. In addition, \method inherits standard assumptions from RL-based LLM post-training, most notably access to a reliable reward signal, making it best suited to domains with objectively verifiable rewards such as mathematics, logic puzzles, and code. Finally, learning an explicit curator introduces additional computational overhead beyond standard RL post-training; while modest relative to actor training and rollout generation, this cost is non-negligible, and in our experiments \method increases overall wall-clock training time by approximately 9\% (see \cref{sec:analysis_plus}), though this overhead is small compared to the observed gains in training efficiency.
\vspace{-3mm}

\section{Conclusion}
Our results demonstrate that combining bandit-style optimization, neural function approximation, and policy-improvement-based feedback provides a powerful and general approach to curriculum learning for RL post-training. This combination enables efficient and stable adaptation of training data selection at scale, leading to faster learning and higher final performance, particularly on difficult reasoning problems. Importantly, these gains persist across diverse benchmarks and settings, suggesting that this design generalizes beyond specific tasks or datasets and offers a scalable foundation for improving reinforcement learning post-training of large language models.

Beyond empirical gains, \method highlights a broader shift in how post-training systems should be designed. Rather than treating training data as a static resource, our results suggest that data selection itself can be optimized online as part of the learning process, adapting in tandem with the evolving policy. This perspective opens the door to post-training pipelines that are less reliant on meticulous dataset engineering and more resilient to distributional mismatch, noise, and continual data growth.

\bibliography{ref}
\bibliographystyle{colm2026/colm2026_conference}

\appendix
\newpage
\section{Baselines} \label{appendix:baselines}

\subsection{Group-Based Policy Optimization} \label{appendix:baselines_group}

Group-based policy optimization methods update the policy by grouping multiple rollouts from the same problem and computing advantages relative to the group baseline.

\paragraph{GRPO.} Group Relative Policy Optimization (GRPO) \citep{shao2024deepseekmath} computes advantages by comparing each solution's reward to the mean reward of all solutions sampled from the same problem. For a problem $x$ with rollouts $\mathcal{Y}_x = \{y^{(1)}, \ldots, y^{(m)}\}$ and corresponding rewards $\{R(y^{(j)} | x)\}_{j=1}^m$, the advantage for solution $y^{(i)}$ is:
\begin{equation}
A^{\text{GRPO}}(y^{(i)} | x) = R(y^{(i)} | x) - \frac{1}{m}\sum_{j=1}^m R(y^{(j)} | x).
\end{equation}

The policy is then updated using a PPO-style clipped objective:
\begin{equation}
\mathcal{L}^{\text{GRPO}}(\pi) = \mathbb{E}_{x, y^{(i)} \sim \mathcal{Y}_x} \left[ \min\left( \rho_i A^{\text{GRPO}}(y^{(i)} | x), \text{clip}(\rho_i, 1-\epsilon, 1+\epsilon) A^{\text{GRPO}}(y^{(i)} | x) \right) \right],
\end{equation}
where $\rho_i = \frac{\pi(y^{(i)} | x)}{\pi_{\text{old}}(y^{(i)} | x)}$ is the importance ratio and $\epsilon$ is the clipping threshold.

\paragraph{GSPO.} Group-Sequence Policy Optimization \citep{zheng2025group} addresses fundamental stability issues in GRPO by defining importance ratios at the sequence level rather than the token level. Unlike GRPO, which applies token-level importance weights that can introduce high-variance noise, GSPO computes importance ratios based on sequence likelihood, aligning with the principle of importance sampling.

GSPO optimizes the following sequence-level objective:
\begin{equation}
\mathcal{J}^{\text{GSPO}}(\theta) = \mathbb{E}_{x \sim \mathcal{D}, \{y_i\}_{i=1}^G \sim \pi_{\theta_{\text{old}}}(\cdot | x)} \left[ \frac{1}{G} \sum_{i=1}^G \min\left( s_i(\theta) \hat{A}_i, \text{clip}(s_i(\theta), 1-\epsilon, 1+\epsilon) \hat{A}_i \right) \right],
\end{equation}

where the sequence-level importance ratio is defined as:
\begin{equation}
s_i(\theta) = \left( \frac{\pi_\theta(y_i | x)}{\pi_{\theta_{\text{old}}}(y_i | x)} \right)^{\frac{1}{|y_i|}} = \exp\left( \frac{1}{|y_i|} \sum_{t=1}^{|y_i|} \log \frac{\pi_\theta(y_{i,t} | x, y_{i,<t})}{\pi_{\theta_{\text{old}}}(y_{i,t} | x, y_{i,<t})} \right).
\end{equation}
The advantage computation remains the same as in GRPO.

\subsection{Mean Absolute Advantage as Curriculum Reward} \label{appendix:baselines_mean_abs_adv}
An ideal curriculum should prioritize training problems that maximize the model's immediate learning outcomes. A natural way to quantify learning outcomes is through the magnitude of parameter updates induced by the selected training data, which can be approximated by the absolute advantage.

In the common setting of RL with verifiable binary rewards, mean absolute advantage has a attractive interpretation. When using group-based RL methods like GRPO with $n$ rollouts per problem, the advantage for the $i$-th rollout is computed as:
\begin{equation}
\hat{A}_{t,i} = \frac{r_i - \text{mean}(r)}{\text{std}(r)} = \frac{r_i - p}{\sqrt{p(1-p)}},
\end{equation}
where $p$ is the empirical success rate over the group. Since the reward $r_i$ follows a Bernoulli distribution, the expected absolute advantage is:
\begin{align}
\mathbb{E}[|\hat{A}_{t,i}|] &= p \cdot \frac{1-p}{\sqrt{p(1-p)}} + (1-p) \cdot \frac{p}{\sqrt{p(1-p)}} = 2\sqrt{p(1-p)}.
\end{align}

The function $g(p) = 2\sqrt{p(1-p)}$ is symmetric around $p = 0.5$, strictly concave on $[0,1]$, and reaches its maximum at $p = 0.5$. Thus, maximizing expected absolute advantage is equivalent to prioritizing problems at a success rate of 50\%.

\subsection{Value-Based Curation} \label{appendix:baselines_value_based}

Value-based curation methods maintain explicit estimates of the utility or learning value associated with training data, which are then used to guide adaptive curriculum selection during RL post-training.

The core principle of value-based curation is to learn a utility function $Q_t$ that maps training data to expected learning outcomes at training step $t$. Let $\mathcal{S}$ denote the space over which utilities are estimated (e.g., individual problems, problem categories, or problem features). The utility function $Q_t: \mathcal{S} \rightarrow \mathbb{R}$ assigns a scalar value to each element $s \in \mathcal{S}$.

During curriculum selection, the curator samples from $\mathcal{S}$ according to a policy derived from $Q_t$. A common choice is the Boltzmann (softmax) policy:
\begin{equation}
p_t(s) = \frac{\exp(Q_t(s)/\eta)}{\sum_{s' \in \mathcal{S}} \exp(Q_t(s')/\eta)},
\end{equation}
where $\eta > 0$ is a temperature parameter that controls the exploration-exploitation tradeoff: higher temperatures lead to more uniform sampling (exploration), while lower temperatures concentrate probability mass on high-utility items (exploitation).

The utility function is updated over time based on observed learning outcomes. After selecting data according to $p_t$ and performing an RL update, the curator observes a reward signal $r_t(s)$ that measures the actual learning value obtained. These observations are used to refine $Q_{t+1}$.

\paragraph{SEC.}
Self-Evolving Curriculum (SEC) \citep{chen2025self} instantiates value-based curation by defining the utility space $\mathcal{S} = C$ as a discrete set of problem categories (e.g., difficulty levels, problem types). The utility function $Q_t: C \rightarrow \mathbb{R}$ is represented as a lookup table, with one scalar value per category.

SEC uses temporal difference (TD) learning to update utilities:
\begin{equation}
Q_{t+1}(c) = \alpha r_t(c) + (1 - \alpha) Q_t(c),
\end{equation}
where $\alpha \in (0, 1]$ is a learning rate and $r_t(c)$ is the mean absolute advantage aggregated over all problems from category $c$ selected at step $t$. This exponential moving average naturally adapts to non-stationarity as category utilities change with model improvement.

Curriculum selection proceeds in two stages: first, a category is sampled according to the Boltzmann policy $p_t(c) \propto \exp(Q_t(c)/\eta)$; then, problems are uniformly sampled from the selected category.

\textbf{Regression.}
An alternative instantiation defines the utility space $\mathcal{S} = \mathcal{X}$ directly at the problem level, estimating utilities for individual training problems. The utility function is parameterized by a neural network $Q_\phi: \mathcal{X} \rightarrow \mathbb{R}$ that takes problem representations (e.g., text embeddings) as input.

Rather than incremental TD updates, the curator is trained via supervised regression. Let $\mathcal{H}_t = \{(x_j, r_j)\}_{j=1}^t$ denote the history of problem-reward observations. The utility network is optimized to minimize mean squared error:
\begin{equation}
\mathcal{L}_{\text{MSE}}(\phi) = \sum_{(x_j, r_j) \in \mathcal{H}_t} \left(Q_\phi(x_j) - r_j\right)^2.
\end{equation}

This can be optimized periodically (every $K$ steps) via batch gradient descent, or online via stochastic gradient descent:
\begin{equation}
\phi_{t+1} = \phi_t - \beta \nabla_\phi \left(Q_\phi(x_t) - r_t(x_t)\right)^2,
\end{equation}
where $\beta$ is the learning rate. Problems are sampled directly according to $p_t(x) \propto \exp(Q_\phi(x)/\eta)$.

\textbf{PCL} \citep{gao2025prompt} exemplifies the online regression approach. PCL updates $Q_\phi$ concurrently with policy training using only the current batch of observations. At each step $t$, PCL samples a candidate pool of $km$ prompts and selects the $m$ prompts whose predicted values are closest to a target threshold $\tau$ (typically 0.5):
\begin{equation}
D_m = \arg\min_{S \subseteq D_{km}, |S|=m} \sum_{x \in S} |Q_\phi(x) - \tau|.
\end{equation}
This greedy selection can be viewed as an extreme limit of Boltzmann sampling with a sharply peaked distribution. Consider the modified utility $\tilde{Q}(x) = -|Q_\phi(x) - \tau|$, which measures negative distance from the threshold. As the temperature $\eta \to 0$, the Boltzmann policy $p(x) \propto \exp(\tilde{Q}(x)/\eta)$ concentrates all probability mass on prompts nearest to $\tau$, recovering PCL's greedy selection. This deterministic selection strategy is computationally efficient and ensures the training batch contains only prompts of target difficulty, maximizing the effective ratio of informative gradient signals.

After selecting prompts, PCL generates $n$ responses per prompt and updates both the policy and value model. The value model is trained on the observed rewards from the selected prompts:
\begin{equation}
\mathcal{L}_{\text{PCL}}(\phi) = \sum_{i=1}^m \left(Q_\phi(x_i) - \frac{1}{n}\sum_{j=1}^n r(x_i, y_{i,j})\right)^2.
\end{equation}

\paragraph{Comparison to Actor-Curator.}
The Actor-Curator framework with OSMD differs fundamentally from value-based curation methods in its optimization objective and data selection mechanism. While value-based methods learn utilities $Q_t(s)$ to predict expected learning outcomes and sample accordingly, OSMD directly optimizes a curriculum distribution $\boldsymbol{p}_t$ to maximize expected policy improvement. Value-based approaches require estimating problem-level or category-level values and making selection decisions based on these estimates—a two-stage process that introduces approximation error. In contrast, OSMD treats curriculum optimization as a first-order problem: the gradient $\nabla_q J(\pi_t, \boldsymbol{p})$ directly specifies how to adjust the data distribution to improve the policy, without requiring intermediate value estimates. Furthermore, value-based methods typically rely on scalar reward signals $r_t(s)$ to update utilities, whereas OSMD leverages the full gradient information $\nabla_\theta \mathcal{L}(\pi_t; x)$ to measure the learning value of each problem. This allows OSMD to capture richer information about how individual problems affect policy optimization, beyond what a single scalar reward can convey.

\newpage

\section{Policy improvement estimation}
\label{sec:performance_bound}

This section proves unbiasedness of the per-problem bandit feedback estimators used to train the curator.
Recall the per-problem utility at iteration $t$ (Eq.~\eqref{eq:per_problem_utility}):
\[
u_{\boldsymbol{x}}^t
=
p_{\mathcal{X}}(\boldsymbol{x})\,
\EE_{\boldsymbol{y}\sim \pi^{t}(\cdot\mid\boldsymbol{x})}
\left[
\frac{\pi^{t+1}(\boldsymbol{y}\mid\boldsymbol{x})}{\pi^{t}(\boldsymbol{y}\mid\boldsymbol{x})}
A_{\pi^t}(\boldsymbol{y}\mid\boldsymbol{x})
\right].
\]
We also recall the rollout-based estimator (Eq.~\eqref{eq:x_contribution_estimated}):
\[
\hat{A}^t(\cdot \mid \boldsymbol{x})
\triangleq
\frac{1}{|\mathcal{Y}^t_{\boldsymbol{x}}|}
\sum_{\boldsymbol{y} \in \mathcal{Y}^t_{\boldsymbol{x}}}
\frac{\pi^{t+1}(\boldsymbol{y}\mid \boldsymbol{x})}{\pi^{t}(\boldsymbol{y}\mid \boldsymbol{x})}
A_{\pi^t}(\boldsymbol{y}\mid \boldsymbol{x}),
\quad
\mathcal{Y}^t_{\boldsymbol{x}} = \{\boldsymbol{y}^{(j)}\sim \pi^t(\cdot\mid \boldsymbol{x})\}_{j=1}^{|\mathcal{Y}_{\boldsymbol{x}}|}.
\]

\subsection{Single-stage case}
\label{sec:single_stage_unbiased}

\begin{theorem}[Unbiasedness (single-stage)]
\label{thm:unbiased_single_stage}
For the estimator in Eq.~\eqref{eq:x_contribution_estimated},
\[
\hat{U}^t_{\boldsymbol{x}}
=
p_\mathcal{X}(\boldsymbol{x})
\frac{\mathbb{I}\{\boldsymbol{x} \in \mathcal{X}^t\}}{p^t(\boldsymbol{x})} 
\hat{A}^t(\cdot \mid \boldsymbol{x}),
\]
we have $\EE[\hat{U}^t_{\boldsymbol{x}}]=u^t_{\boldsymbol{x}}$ for every $\boldsymbol{x}\in\mathcal{X}$.
\end{theorem}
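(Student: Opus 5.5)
The plan is to use the tower property: condition on the selection event $\{\boldsymbol{x}\in\mathcal{X}^t\}$ and on everything fixed at iteration $t$, then average over the rollouts. The key is that $\pi^t$, $p^t$ and the indicator are fixed before the rollouts for $\boldsymbol{x}$ are drawn. Let $\mathcal{F}^t$ denote the history up to the start of iteration $t$, which determines $\pi^t$ and $p^t$.

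\textbf{Step 1: the selection event.} Interpret $p^t(\boldsymbol{x})$ as the probability, given $\mathcal{F}^t$, that $\boldsymbol{x}$ is included in $\mathcal{X}^t$. Under a single draw, or independent inclusion with these marginals, this gives $\EE[\mathbb{I}\{\boldsymbol{x}\in\mathcal{X}^t\}\mid\mathcal{F}^t]=p^t(\boldsymbol{x})$. The clipping $p^t\ge\alpha>0$ guarantees the ratio is well defined.

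\textbf{Step 2: the inner estimator.} Condition on $\mathcal{F}^t$ and on the event $\boldsymbol{x}\in\mathcal{X}^t$. The rollouts $\boldsymbol{y}^{(j)}$ are i.i.d.\ from $\pi^t(\cdot\mid\boldsymbol{x})$. So by linearity, $\EE[\hat{A}^t(\cdot\mid\boldsymbol{x})]$ equals the single-sample expectation $\EE_{\boldsymbol{y}\sim\pi^t}[\frac{\pi^{t+1}}{\pi^t}A_{\pi^t}]$, which is $u^t_{\boldsymbol{x}}/p_{\mathcal{X}}(\boldsymbol{x})$.

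\textbf{Step 3: combine.} Factor the expectation as
\[
\EE[\hat{U}^t_{\boldsymbol{x}}]=p_{\mathcal{X}}(\boldsymbol{x})\,\EE\!\left[\frac{\mathbb{I}\{\boldsymbol{x}\in\mathcal{X}^t\}}{p^t(\boldsymbol{x})}\,\EE[\hat{A}^t\mid \mathcal{F}^t,\boldsymbol{x}\in\mathcal{X}^t]\right],
\]
then apply Steps 2 and 1 in turn. This yields $u^t_{\boldsymbol{x}}$. When $\boldsymbol{x}\notin\mathcal{X}^t$, the estimator is $0$, so those outcomes need no rollouts.

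\textbf{Main obstacle.} The difficulty is that $\pi^{t+1}=\mathcal{A}(\pi^t,\mathcal{D}^t)$ itself depends on the very rollouts $\mathcal{Y}^t_{\boldsymbol{x}}$ and on $\mathcal{X}^t$. The importance weight is therefore not a fixed function of $\boldsymbol{y}$, and Step 2 fails literally. I would resolve this as the paper's definition implicitly does: treat $u^t_{\boldsymbol{x}}$ as defined for a given update $\pi^{t+1}$ and take expectations with $\pi^{t+1}$ held fixed. Equivalently, assume the ratio is evaluated with $\pi^{t+1}$ independent of the scored samples, for example via fresh rollouts or a small-update/first-order regime. I would state this assumption explicitly and note that $A_{\pi^t}$ uses the true baseline $\EE_{\pi^t}[R]$ rather than a group mean.
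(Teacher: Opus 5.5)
Your proposal is correct and is essentially the paper's argument: $\hat{A}^t(\cdot\mid\boldsymbol{x})$ is unbiased because the rollouts are i.i.d., and $\Pr(\boldsymbol{x}\in\mathcal{X}^t)=p^t(\boldsymbol{x})$ cancels the inverse-probability weight via the tower property. The paper conditions on $\hat{A}^t$ first and then averages the indicator; you condition on the history and the selection event first, which is slightly cleaner since rollouts exist only for selected problems. The caveats you flag (holding $\pi^{t+1}$ fixed, using the true baseline in $A_{\pi^t}$, and reading $p^t(\boldsymbol{x})$ as the inclusion probability) are exactly the idealizations the paper's proof relies on implicitly.
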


\begin{proof}
Fix an iteration $t$ and a problem $\boldsymbol{x}\in\mathcal{X}$.
In the single-stage setting, the training set $\mathcal{X}^t$ is sampled directly from $\mathcal{X}$
according to the curator distribution $p^t(\cdot)$, so that
\[
\Pr(\boldsymbol{x}\in \mathcal{X}^t)=p^t(\boldsymbol{x}).
\]

First, $\hat{A}^t(\cdot\mid \boldsymbol{x})$ is an unbiased estimator of the population quantity
\[
\EE_{\boldsymbol{y}\sim \pi^{t}(\cdot\mid\boldsymbol{x})}
\left[
\frac{\pi^{t+1}(\boldsymbol{y}\mid\boldsymbol{x})}{\pi^{t}(\boldsymbol{y}\mid\boldsymbol{x})}
A_{\pi^t}(\boldsymbol{y}\mid\boldsymbol{x})
\right],
\]
by i.i.d.\ rollout sampling and linearity of expectation.

Next, take expectation of $\hat{U}^t_{\boldsymbol{x}}$ conditioning on $\hat{A}^t(\cdot\mid \boldsymbol{x})$:
\[
\EE\!\left[\hat{U}^t_{\boldsymbol{x}} \mid \hat{A}^t(\cdot\mid \boldsymbol{x})\right]
=
p_{\mathcal X}(\boldsymbol{x})\,
\EE\!\left[\frac{\mathbb{I}\{\boldsymbol{x}\in\mathcal{X}^t\}}{p^t(\boldsymbol{x})}\right]\,
\hat{A}^t(\cdot\mid \boldsymbol{x})
=
p_{\mathcal X}(\boldsymbol{x})\,
\hat{A}^t(\cdot\mid \boldsymbol{x}),
\]
since $\EE[\mathbb{I}\{\boldsymbol{x}\in\mathcal{X}^t\}]=\Pr(\boldsymbol{x}\in\mathcal{X}^t)=p^t(\boldsymbol{x})$.

Finally, taking expectation over rollout randomness yields
\[
\EE[\hat{U}^t_{\boldsymbol{x}}]
=
p_{\mathcal X}(\boldsymbol{x})\,
\EE\!\left[\hat{A}^t(\cdot\mid \boldsymbol{x})\right]
=
u^t_{\boldsymbol{x}},
\]
which proves the claim.
\end{proof}

\subsection{Two-stage case}
\label{sec:two_stage_unbiased}

\begin{theorem}[Unbiasedness (two-stage)]
\label{thm:unbiased_two_stage}
For the two-stage estimator in Eq.~\eqref{eq:x_contribution_estimated_two_stage},
\[
\hat{U}^{t}_{\mathrm{two}, \boldsymbol{x}}
=
p_\mathcal{X}(\boldsymbol{x})\,
\frac{\mathbb{I}\{\boldsymbol{x} \in \mathcal{X}^t\}}
{q(\boldsymbol{x})\,p^t(\boldsymbol{x} \mid \tilde{\mathcal{X}}^t)}
\;\hat{A}^t(\cdot \mid \boldsymbol{x}),
\]
we have $\EE[\hat{U}^{t}_{\mathrm{two}, \boldsymbol{x}}]=u^t_{\boldsymbol{x}}$ for every $\boldsymbol{x}\in\mathcal{X}$.
\end{theorem}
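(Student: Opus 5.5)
We mirror the proof of \cref{thm:unbiased_single_stage}, but condition in two layers: first on the candidate set $\tilde{\mathcal{X}}^t$, then average over the proposal $\tilde q$. Fix $t$ and $\boldsymbol{x}\in\mathcal{X}$. The rollouts $\mathcal{Y}^t_{\boldsymbol{x}}$ are drawn from $\pi^t(\cdot\mid\boldsymbol{x})$ independently of the selection randomness, given the history up to $t$. Hence, exactly as in the single-stage case, $\EE[\hat{A}^t(\cdot\mid\boldsymbol{x})]$ equals the population importance-weighted advantage, so $p_{\mathcal X}(\boldsymbol{x})\,\EE[\hat{A}^t(\cdot\mid\boldsymbol{x})]=u^t_{\boldsymbol{x}}$.

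The main work is the inclusion indicator. Write
\[
\mathbb{I}\{\boldsymbol{x}\in\mathcal{X}^t\}=\mathbb{I}\{\boldsymbol{x}\in\tilde{\mathcal{X}}^t\}\,\mathbb{I}\{\boldsymbol{x}\in\mathcal{X}^t\},
\]
which holds because $\mathcal{X}^t\subset\tilde{\mathcal{X}}^t$. Conditioned on $\tilde{\mathcal{X}}^t$ with $\boldsymbol{x}\in\tilde{\mathcal{X}}^t$, the curator selects $\boldsymbol{x}$ with probability $p^t(\boldsymbol{x}\mid\tilde{\mathcal{X}}^t)$. We interpret this as the inclusion probability, just as the single-stage proof used $\Pr(\boldsymbol{x}\in\mathcal{X}^t)=p^t(\boldsymbol{x})$. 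The weights $w^t$ are determined by $\phi^t$, which is measurable with respect to the past, so $p^t(\boldsymbol{x}\mid\tilde{\mathcal{X}}^t)$ is a deterministic function of $\tilde{\mathcal{X}}^t$. It is also strictly positive since $w^t>0$. Therefore
\[
\EE\!\left[\frac{\mathbb{I}\{\boldsymbol{x}\in\mathcal{X}^t\}}{p^t(\boldsymbol{x}\mid\tilde{\mathcal{X}}^t)}\,\middle|\,\tilde{\mathcal{X}}^t\right]=\mathbb{I}\{\boldsymbol{x}\in\tilde{\mathcal{X}}^t\}.
\]
Taking expectation over $\tilde{\mathcal{X}}^t\sim\tilde q$ gives $\Pr(\boldsymbol{x}\in\tilde{\mathcal{X}}^t)=q(\boldsymbol{x})$. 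Dividing by $q(\boldsymbol{x})\ge q_{\min}>0$ makes the full weight $\mathbb{I}\{\boldsymbol{x}\in\mathcal{X}^t\}/\bigl(q(\boldsymbol{x})\,p^t(\boldsymbol{x}\mid\tilde{\mathcal{X}}^t)\bigr)$ have expectation $1$.

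To combine the two pieces, condition on everything except the rollouts for $\boldsymbol{x}$ and use the tower property. The selection weight and $\hat{A}^t$ then factor, and we obtain $\EE[\hat{U}^t_{\mathrm{two},\boldsymbol{x}}]=p_{\mathcal X}(\boldsymbol{x})\cdot 1\cdot\EE[\hat{A}^t]=u^t_{\boldsymbol{x}}$.

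The main obstacle is this factorization step. The post-update policy $\pi^{t+1}$ depends on $\mathcal{X}^t$ and on the rollouts themselves, so $\hat{A}^t$ is not literally independent of the selection. We handle this the same way as \cref{thm:unbiased_single_stage}: $u^t_{\boldsymbol{x}}$ is defined with $\pi^{t+1}$ treated as the given updated policy, and expectations are taken over the rollouts and the selection indicators with that ratio held fixed. We state this convention explicitly.

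A secondary subtlety is that when the curator draws a batch of several problems without replacement, $p^t(\boldsymbol{x}\mid\tilde{\mathcal{X}}^t)$ must be read as the conditional inclusion probability. We state this assumption alongside the proof rather than trying to prove it.
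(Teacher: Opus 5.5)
Your proof is correct and follows essentially the same route as the paper's. You condition on the candidate set so the selection weight reduces to $\mathbb{I}\{\boldsymbol{x}\in\tilde{\mathcal{X}}^t\}$, average over the proposal to cancel $q(\boldsymbol{x})$, and average over rollouts to recover $u^t_{\boldsymbol{x}}$. The paper's proof makes the same two assumptions you state explicitly, only implicitly: it holds $\pi^{t+1}$ fixed so that $\hat{A}^t$ factors from the selection indicator when it conditions on $\hat{A}^t$, and it reads $p^t(\boldsymbol{x}\mid\tilde{\mathcal{X}}^t)$ as the conditional inclusion probability when it appeals to ``the selection step.''
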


\begin{proof}
Fix an iteration $t$ and a problem $\boldsymbol{x}\in\mathcal{X}$.
By definition of the two-stage procedure (Section~\ref{sec:two_stage_sampling}),
the candidate set $\tilde{\mathcal{X}}^t$ is sampled from $\tilde{q}$, inducing the marginal inclusion probability
$q(\boldsymbol{x})=\Pr(\boldsymbol{x}\in\tilde{\mathcal{X}}^t)$.
Conditioned on $\tilde{\mathcal{X}}^t$, the curator selects the training set $\mathcal{X}^t\subset \tilde{\mathcal{X}}^t$
according to $p^t(\cdot\mid \tilde{\mathcal{X}}^t)$.

As in the single-stage case, $\hat{A}^t(\cdot\mid \boldsymbol{x})$ is an unbiased estimator of the corresponding population
expectation under $\boldsymbol{y}\sim \pi^t(\cdot\mid\boldsymbol{x})$.

Now condition on the realized candidate set $\tilde{\mathcal{X}}^t$ and on $\hat{A}^t(\cdot\mid \boldsymbol{x})$.
If $\boldsymbol{x}\notin \tilde{\mathcal{X}}^t$, then $\mathbb{I}\{\boldsymbol{x}\in \mathcal{X}^t\}=0$ almost surely.
If $\boldsymbol{x}\in \tilde{\mathcal{X}}^t$, then by the selection step,
\[
\EE\!\left[\mathbb{I}\{\boldsymbol{x}\in \mathcal{X}^t\}\mid \tilde{\mathcal{X}}^t\right]
=
p^t(\boldsymbol{x}\mid \tilde{\mathcal{X}}^t),
\]
and therefore
\[
\EE\!\left[
\frac{\mathbb{I}\{\boldsymbol{x}\in \mathcal{X}^t\}}{p^t(\boldsymbol{x}\mid \tilde{\mathcal{X}}^t)}
\;\middle|\;
\tilde{\mathcal{X}}^t, \hat{A}^t(\cdot\mid \boldsymbol{x})
\right]
=
\mathbb{I}\{\boldsymbol{x}\in \tilde{\mathcal{X}}^t\}.
\]
Substituting into the estimator gives
\[
\EE\!\left[
\hat{U}^{t}_{\mathrm{two}, \boldsymbol{x}}
\;\middle|\;
\tilde{\mathcal{X}}^t, \hat{A}^t(\cdot\mid \boldsymbol{x})
\right]
=
p_{\mathcal X}(\boldsymbol{x})\,
\frac{\mathbb{I}\{\boldsymbol{x}\in \tilde{\mathcal{X}}^t\}}{q(\boldsymbol{x})}
\;\hat{A}^t(\cdot\mid \boldsymbol{x}).
\]
Taking expectation over the proposal step yields
\[
\EE\!\left[
\hat{U}^{t}_{\mathrm{two}, \boldsymbol{x}}
\;\middle|\;
\hat{A}^t(\cdot\mid \boldsymbol{x})
\right]
=
p_{\mathcal X}(\boldsymbol{x})\,
\frac{\EE[\mathbb{I}\{\boldsymbol{x}\in \tilde{\mathcal{X}}^t\}]}{q(\boldsymbol{x})}
\;\hat{A}^t(\cdot\mid \boldsymbol{x})
=
p_{\mathcal X}(\boldsymbol{x})\,
\hat{A}^t(\cdot\mid \boldsymbol{x}),
\]
since $\EE[\mathbb{I}\{\boldsymbol{x}\in \tilde{\mathcal{X}}^t\}]=\Pr(\boldsymbol{x}\in \tilde{\mathcal{X}}^t)=q(\boldsymbol{x})$.
Finally, taking expectation over rollout randomness gives
\[
\EE[\hat{U}^{t}_{\mathrm{two}, \boldsymbol{x}}]
=
p_{\mathcal X}(\boldsymbol{x})\,
\EE[\hat{A}^t(\cdot\mid \boldsymbol{x})]
=
u^t_{\boldsymbol{x}},
\]
which proves the claim.
\end{proof}

\newpage
\section{Idealized OSMD algorithm} 
\label{sec:idealized-omd}

\begin{algorithm}
\caption{Sleeping Online Mirror Descent}
\label{alg:somd}
\begin{algorithmic}[1]
\Require Number of total arms $K$, number of available arms $k$ each round, horizon $T$, step size $\eta>0$, exploration parameter $\alpha\in(0, 1/k)$.
\State Initialize $\boldsymbol{p}_1 = (1/K, \dots, 1/K)$.
\For{$t=1,2,\dots,T$}
    \State Sample available subset $\Tilde{\mathcal{X}}_t$. Compute 
    \begin{equation} \label{eqn:Q_t}
    \boldsymbol{p}^t(i \mid \Tilde{\mathcal{X}}_t) = \begin{cases}
        \frac{\boldsymbol{p}_{t, i}}{\sum_{j \in \Tilde{\mathcal{X}}_t} \boldsymbol{p}_{t, j}} & \text{if $i \in \Tilde{\mathcal{X}}_t$} \\
        0 \quad & \text{otherwise}
    \end{cases}
    \end{equation}
    \State Sample $a_{t, 1}, a_{t, 2}, \dots, a_{t, s} \stackrel{i.i.d.}{\sim} \boldsymbol{p}^t(\cdot \mid \Tilde{\mathcal{X}}_t)$
    \State \textbf{(Loss Estimator)} For each arm $i\in[K]$, set
    \begin{equation} \label{eqn:somd-estimator}
    \begin{aligned}
    \widehat{L}_{t, i} = \frac{1}{s} \sum_{r=1}^s \mathbf{1}\{a_{t, r} = i\} \frac{l_{t, i}}{\boldsymbol{p}^t(i \mid \Tilde{\mathcal{X}}_t)} \\
    \end{aligned}
    \end{equation}
    \State \textbf{(OSMD Update)} Update the next distribution by the mirror step
    \begin{equation} \label{eqn:somd-update}
    \begin{aligned}
    &\boldsymbol{p}_{t+1} \in \arg\min_{p\in\mathcal{A}}\left\{ \eta \,\langle p, \widehat{\boldsymbol{L}}_{t}\rangle + D_F(p, \boldsymbol{p}_t)\right\}, \\ 
    &\text{where} \quad D_F(\boldsymbol{u}, \boldsymbol{v}) = \sum_i u_i \log\frac{u_i}{v_i}
    \end{aligned}
    \end{equation}
\EndFor
\end{algorithmic}
\end{algorithm}

\subsection{Setup}
We formalize the tabular bandit algorithm in Section~\ref{sec:methodology} as Algorithm~\ref{alg:somd}. Under this idealized algorithm. We assume there is a large set of $K$ arms. At each time step $t$, $k$ arms are uniformly randomly chosen as candidate arms. The settings allows for the pulling of $s$ arms per round, and after each round the loss $l_{t, i}$ is revealed for each chosen arm $i$, where $i$ indicates the arm's original index in $[K]$. Throughout this section and the next, we denote the vectorized quantities with bold font. For example $\boldsymbol{l}_t = (l_{t, 1}, \dots, l_{t, K})$.

Algorithm~\ref{alg:somd} departs from traditional Online Stochastic Mirror Descent (OSMD) due to the availability constraint. The action distribution is conditioned on a randomly sampled \emph{available set} at each round, and the loss estimator is modified to remain unbiased under this conditional sampling.

Eqn.~\eqref{eqn:Q_t} introduces a two-stage sampling process. First, a random $k$-subset $\Tilde{\mathcal{X}}_t \subseteq [K]$ of available arms is drawn uniformly. The learner then constructs a \emph{conditional distribution}
\[
\boldsymbol{p}^t(i \mid \Tilde{\mathcal{X}}_t)
= \frac{\boldsymbol{p}_{t,i}}{\sum_{j \in \Tilde{\mathcal{X}}_t} \boldsymbol{p}_{t,j}}
\quad \text{for } i \in \Tilde{\mathcal{X}}_t,
\]
and assigns zero probability to arms outside $\Tilde{\mathcal{X}}_t$. We sometimes also use $\boldsymbol{p}_{\mid \Tilde{\mathcal{X}}_t}(i)$ to denote $\boldsymbol{p}^t(i \mid \Tilde{\mathcal{X}}_t)$. This renormalization ensures that the learner only samples from arms that are available at round $t$, while still using $\boldsymbol{p}_t$ as the global state variable that is updated over time.

The conditional sampling in Eqn.~\eqref{eqn:Q_t} invalidates the standard OSMD estimator, since $p_{t,i}$ is no longer the actual probability with which arm $i$ is sampled. Eqn.~\eqref{eqn:somd-estimator} addresses this by defining
\[
\widehat{L}_{t,i}
= \frac{1}{s}\sum_{r=1}^s \mathbf{1}\{a_{t,r}=i\}
\frac{l_{t,i}}{\boldsymbol{p}^t(i \mid \Tilde{\mathcal{X}}_t)}.
\]
This estimator uses the conditional probability $\boldsymbol{p}^t(i \mid \Tilde{\mathcal{X}}_t)$ in the denominator, which is the true sampling probability of arm $i$ given the realized availability set. As a result, conditional on $\Tilde{\mathcal{X}}_t$, the estimator is unbiased:
\[
\mathbb{E}\!\left[\widehat{L}_{t,i} \mid \Tilde{\mathcal{X}}_t\right]
= l_{t,i}\,\mathbb{I}\{i \in \Tilde{\mathcal{X}}_t\}.
\]
The use of $s$ independent samples further reduces variance and corresponds to a semi-bandit feedback model, but does not change the role of the estimator in the mirror update.

\section{Regret Analysis} \label{sec:regret-proof}
This section presents a proof of the regret bound (Theorem~\ref{thm:regret}) for Algorithm \ref{alg:somd}.
Suppose there are $K$ base arms $[K]=\{1, \dots, K\}$ each corresponding to a problem in the dataset $\mathcal{X} = \{\boldsymbol{x}^{(1)}, \boldsymbol{x}^{(2)}, \dots, \boldsymbol{x}^{(K)}\}$. At each step, a subset $\Tilde{\mathcal{X}}_t$ of size $k$ is drawn uniformly randomly from $\mathcal{X}$. For every round $t$, The curator picks $s$ arms $a_{t, 1}, a_{t, 2}, \dots, a_{t, s} \in [K]$ from $\Tilde{\mathcal{X}}_t$ and the loss for each arm $l_{t, a_{t, i}}$ is revealed. For convenience, write $\boldsymbol{l}_t = (l_{t, 1}, \dots, l_{t, K})$ to be the losses of each arm as a vector. Assume without loss of generality that $l_{t, i} \in [0, 1]$ for all $t$ and $i$. Define the subset-masked loss vector $\boldsymbol{l}_t^{\Tilde{\mathcal{X}}_t} \in \mathbb{R}^K$:
\begin{equation}
\boldsymbol{l}_t^{\Tilde{\mathcal{X}}_t}(i) = \begin{cases}
    \boldsymbol{l}_t(i) & \text{if } i \in \Tilde{\mathcal{X}}_t \\
    0 & \text{otherwise}
\end{cases}
\end{equation}
Denote the best available arm at time $t$ as $m_t$
\begin{equation}
m_t = \arg\min_{i \in \Tilde{\mathcal{X}}_t} l_{t, i} \text{ and } l^*_t = l_{t, m_t}.
\end{equation}
We define the \textit{best-arm regret} to be
\begin{equation}
\mathrm{Reg}_T^{\mathrm{best}} = \EE\left[ \sum_{t=1}^T (\sum_{i=1}^s l _{t, a_{t, s}} - l^*_t) \right]
\end{equation}
Note that the regret can be expressed in the vectorized form.
\begin{equation} \label{eqn:regret-best}
\mathrm{Reg}_n^{\mathrm{best}} = \EE\left[ \sum_{t=1}^T\left\langle \boldsymbol{p}_{\Tilde{\mathcal{X}}_t}, \boldsymbol{l}_t \right\rangle - \left\langle \boldsymbol{e}_{m_t}, \boldsymbol{l}_t \right\rangle  \right],
\end{equation}
where $\boldsymbol{e}_i$ is a one-hot vector with $1$ on index $i$. Here, we use $\boldsymbol{p}_{\mid \Tilde{\mathcal{X}}_t}$ to represent the vector $(\boldsymbol{p}_{\mid \Tilde{\mathcal{X}}_t}\boldsymbol{x}^{(1)}), \boldsymbol{p}_{\mid \Tilde{\mathcal{X}}_t}(\boldsymbol{x}^{(2)}), \dots, \boldsymbol{p}_{\mid \Tilde{\mathcal{X}}_t}(\boldsymbol{x}^{(K)}))$. In practice, forcing uniform exploration usually have negligible or even positive effect on the performance. However, it incurs linear regret for theoretical analysis. Thus, to focus on how OSMD manages losses in a non-stationary environment, this regret analysis isolate the loss attributable to factors other than uniform exploration. Towards this end, we set the comparator $\boldsymbol{q}_t$ to be a mixture of the best available arm and uniform distribution over all arms.
\begin{equation}
    \boldsymbol{q}_t = (1 - k\alpha)\boldsymbol{e}_{m_t} + \alpha \boldsymbol{1}_{\Tilde{\mathcal{X}}_t}
\end{equation}
where $\alpha > 0$ and $\mathbf{1}_{\Tilde{\mathcal{X}}_t} \in \mathbb{R}^K$ is a binary vector with $\mathbf{1}_{\Tilde{\mathcal{X}}_t}(i) = 1$ for all $i \in \Tilde{\mathcal{X}}_t$ and zero everywhere else. The \textit{best-available regret} is defined as
\begin{equation} \label{eqn:smoothed-reg}
\mathrm{Reg}_T^{\mathrm{BA}} = \EE\left[ \sum_{t=1}^T \left\langle \boldsymbol{p}_{\mid \Tilde{\mathcal{X}}_t}, \boldsymbol{l}_t \right\rangle - \left\langle \boldsymbol{q}_t, \boldsymbol{l}_t \right\rangle  \right]
\end{equation}

\begin{theorem}[Restatement of Theorem~\ref{thm:regret}] \label{thm:regret-simplified}
Let the \textit{drift} parameter
\begin{equation} \label{eqn:drift-defn}
V_T = \sum_{t=2}^T \Delta_t, \quad \Delta_t = \max_{i \in [K]} \left| l_{t, i} - l_{t-1, i} \right|
\end{equation}
be a measure of how rapid the arm values changes over time $t$. Assume without loss of generality that $\boldsymbol{l}_{t, i} \in [0, 1]$ for all $t$ and $i$. Then we have
\begin{equation} \label{eqn:regret-simplified}
\mathrm{Reg}^{\mathrm{BA}}_n
\le
O\left(T^{2/3} V_n^{1/3}\right)
\end{equation}
for Algorithm~\ref{alg:somd}.
\end{theorem}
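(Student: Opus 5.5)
Following the standard route to dynamic-regret bounds for non-stationary bandits, we bound the best-available regret $\mathrm{Reg}^{\mathrm{BA}}_T$ by splitting the horizon into blocks of length $B$. Inside each block we compare against a \emph{fixed} comparator, and we pay for the drift $V_T$ when switching comparators between blocks. Concretely, we partition $[T]$ into $\lceil T/B\rceil$ consecutive blocks $\mathcal{T}_1,\dots,\mathcal{T}_{\lceil T/B\rceil}$. In block $j$ with first round $\tau_j$, we fix the reference arm $m^{(j)} = \arg\min_i l_{\tau_j,i}$. The per-round regret $\langle \boldsymbol{p}_{\mid\tilde{\mathcal{X}}_t},\boldsymbol{l}_t\rangle-\langle\boldsymbol{q}_t,\boldsymbol{l}_t\rangle$ is then written as the regret against the block comparator $\boldsymbol{q}^{(j)}_t=(1-k\alpha)\boldsymbol{e}_{m^{(j)}}+\alpha\mathbf{1}_{\tilde{\mathcal{X}}_t}$ plus a correction term $\langle \boldsymbol{q}^{(j)}_t-\boldsymbol{q}_t,\boldsymbol{l}_t\rangle$. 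Two points need care here. First, $m_t$ ranges only over available arms. We handle this by arguing that the comparison against $m^{(j)}$ is only charged on rounds where $m^{(j)}\in\tilde{\mathcal{X}}_t$, or else by conditioning on availability. Second, within a block every loss moves by at most $\sum_{t\in\mathcal{T}_j}\Delta_t=:V^{(j)}$, so the minimizing arm can only beat $m^{(j)}$ by $2V^{(j)}$. Summing over the block's $B$ rounds, the correction term is at most $2BV^{(j)}$, and summing over blocks gives $O(BV_T)$.

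For the within-block term we run the standard OSMD analysis with the negative-entropy regularizer, applied to the conditional estimator $\widehat{L}_{t,i}$. Given $\tilde{\mathcal{X}}_t$, \cref{thm:unbiased_two_stage} (in its idealized form from \cref{sec:idealized-omd}) yields $\EE[\widehat{L}_{t,i}\mid\tilde{\mathcal{X}}_t]=l_{t,i}\mathbb{I}\{i\in\tilde{\mathcal{X}}_t\}$. Therefore $\EE\langle \boldsymbol{p}-\boldsymbol{q},\widehat{\boldsymbol{L}}_t\rangle$ equals the masked true regret. We then apply the usual one-step inequality
\[
\eta\langle \boldsymbol{p}_t-\boldsymbol{q},\widehat{\boldsymbol{L}}_t\rangle\le D_F(\boldsymbol{q},\boldsymbol{p}_t)-D_F(\boldsymbol{q},\boldsymbol{p}_{t+1})+\tfrac{\eta^2}{2}\sum_i p_{t,i}\widehat{L}_{t,i}^2,
\]
valid since the losses are nonnegative. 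Telescoping over a block bounds the Bregman terms by $\log(1/\alpha)/\eta$: the iterates stay in the clipped simplex with $p_{t,i}\ge\alpha$, and the comparator has mass at least $\alpha$ on every coordinate it charges. The variance term is the main obstacle. The quantity $\sum_i p_{t,i}\EE[\widehat{L}_{t,i}^2\mid\tilde{\mathcal{X}}_t]$ involves $p_{t,i}/\boldsymbol{p}^t(i\mid\tilde{\mathcal{X}}_t)=\sum_{j\in\tilde{\mathcal{X}}_t}p_{t,j}\le 1$, so the sum is bounded by $k$ (times $1/s$ from averaging). We also need to reconcile the global iterate $\boldsymbol{p}_t$ with the conditional $\boldsymbol{p}_{\mid\tilde{\mathcal{X}}_t}$ that appears in the regret. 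For this we would use the restricted normalization together with the lower bound from clipping, and we would accept a $k$-dependent constant, which $O(\cdot)$ treats as fixed. Each block then contributes $O(\log(1/\alpha)/\eta+\eta kB)$.

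Summing, the total bound is
\[
\mathrm{Reg}^{\mathrm{BA}}_T\lesssim \frac{T}{B}\Big(\frac{\log(1/\alpha)}{\eta}+\eta kB\Big)+BV_T .
\]
Choosing $\eta\asymp\sqrt{\log(1/\alpha)/(kB)}$ gives $O\big(T\sqrt{k\log(1/\alpha)/B}+BV_T\big)$. Balancing with $B\asymp (T/V_T)^{2/3}$ then yields $O(T^{2/3}V_T^{1/3})$, with the dependence on $k$ and $\alpha$ absorbed into the constant. If $V_T$ is tiny, we take $B=T$.

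This requires $\eta$ to be tuned with knowledge of $V_T$, so we would state the result with that tuning. An alternative is to avoid restarts altogether: since the clipping keeps the Bregman divergence bounded, a fixed $\eta$ works with a comparator that switches block by block. The switching cost is then controlled by the $\log(1/\alpha)$ term at each block boundary, which is exactly why the clipping $\alpha$ enters the analysis.
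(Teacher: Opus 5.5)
Your outline is the same as the paper's. It uses blocks with restarts and a fixed comparator per block. The negative-entropy one-step bound gives $\log(1/\alpha)/\eta+\eta kB$ per block, there is a drift charge $BV_T$, and the tuning is $B\asymp(T/V_T)^{2/3}$. However, the step you flag as needing care, availability, is a genuine gap, and neither of your proposed fixes closes it.

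\textbf{The split cannot work.} Take a stationary instance ($V_T=0$) with $l_{m^{(j)}}=0$ and every other loss equal to $1$. On a round with $m^{(j)}\notin\tilde{\mathcal X}_t$ (probability $1-k/K$), the true best-available regret is $0$: every available arm costs $1$, both for the learner and for $\boldsymbol q_t$. Your split writes this zero as $+(1-k\alpha)$ in the within-block term plus $-(1-k\alpha)$ in the correction. Your drift estimate $l_{t,m^{(j)}}\le l_{t,m_t}+2V^{(j)}$ is correct, but it bounds the harmless direction. The global best arm beats the best available one, so the whole sleeping cost is pushed into the within-block term. That term then grows like $(1-k/K)B$ for \emph{every} algorithm, rather than being $O(\log(1/\alpha)/\eta+\eta kB)$. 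Masking the comparator, i.e.\ charging $l_{t,m^{(j)}}$ only when $m^{(j)}\in\tilde{\mathcal X}_t$, does not help. On asleep rounds $\langle\boldsymbol p_{\mid\tilde{\mathcal X}_t},\boldsymbol l_t\rangle$ still appears with nothing subtracted.

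\textbf{Why the OSMD inequality cannot see this.} The inequality is written for the global iterate. Given $\tilde{\mathcal X}_t$, you have $\EE[\langle\boldsymbol p_t,\widehat{\boldsymbol L}_t\rangle\mid\tilde{\mathcal X}_t]=Z_t\langle\boldsymbol p_{\mid\tilde{\mathcal X}_t},\boldsymbol l_t\rangle$ with $Z_t=\sum_{j\in\tilde{\mathcal X}_t}p_{t,j}$. Averaging over the uniform $\tilde{\mathcal X}_t$, what the telescoping controls is $\frac kK\sum_t\langle\boldsymbol p_t-\boldsymbol q,\boldsymbol l_t\rangle$. This is the ordinary regret of the global iterate against one arm. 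It says nothing about $\boldsymbol p_{\mid\tilde{\mathcal X}_t}$ on rounds where that arm is asleep. The factor $Z_t$ multiplies only the learner's term, so it is not a constant you can absorb.

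\textbf{What is actually needed.} Even with frozen losses, $m_t$ moves with $\tilde{\mathcal X}_t$, and $V_T$ does not pay for that movement. The right block comparator is the frozen ranking $m_t^{\tau_j}=\arg\min_{i\in\tilde{\mathcal X}_t}l_{\tau_j,i}$. This is the paper's $m_t^\tau$ from its time-frozen lemma. It is always available and satisfies $l_{t,m_t^{\tau_j}}\le l_{t,m_t}+2V^{(j)}$ by your own argument. Bounding regret against a fixed ranking, however, is a sleeping-bandit statement. You would need to show that the conditionals $\boldsymbol p_{\mid\tilde{\mathcal X}_t}$ concentrate on the ranking's choice, and single-arm negative-entropy telescoping does not give this. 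Under the global clipping $p_{t,i}\ge\alpha$ you assume, the KL projection sets every arm below the floor to exactly $\alpha$, which discards their relative order.

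\textbf{Relation to the paper, and smaller points.} The paper's proof adopts your first fix: its final assembly uses the masked arm $l_{t,m_{\mathcal B}}\mathbb{I}\{m_{\mathcal B}\in\tilde{\mathcal X}_t\}$. Its within-block lemma reaches $\log(1/\alpha)/\eta$ by telescoping KL terms against round-dependent comparators $\boldsymbol v_t\in\Delta_{\tilde{\mathcal X}_t}$, so it does not supply the missing step either. There are also three minor issues in your proposal.
\begin{itemize}
\item Your comparator $(1-k\alpha)\boldsymbol e_m+\alpha\mathbf 1_{\tilde{\mathcal X}_t}$ changes with $t$ and vanishes off $\tilde{\mathcal X}_t$. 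So its Bregman terms do not telescope, and it lies outside the clipped set. A fixed $(1-K\alpha)\boldsymbol e_m+\alpha\mathbf 1$ avoids both problems.
\item Averaging $s$ draws divides the variance, not the second moment. The variance term is therefore $Z_t(k/s+1)$ rather than $k/s$, which is harmless.
\item Even granting the block bound, your fallback $B=T$ gives only $O(\sqrt{Tk\log(1/\alpha)})$. So what you would prove is $O(\sqrt T+T^{2/3}V_T^{1/3})$. The stated rate, which is $0$ at $V_T=0$, needs $V_T\gtrsim T^{-1/2}$. The paper's choice of $L^*$ silently requires the same condition, since it needs $L^*\le T$.
\end{itemize}
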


To bound the smoothed regret, the proof follows a block decomposition argument. We partition the horizon into contiguous blocks of length $B$ and restart the algorithm at the beginning of each block. Within a single block, the loss sequence is treated as approximately stationary, allowing us to compare the algorithm against a fixed comparator using standard OSMD analysis. The regret within each block is controlled by a stability--variance tradeoff: the mirror descent inequality yields a term of order $O(\log(1/\alpha)/\eta)$, while the variance of the importance-weighted estimator contributes a term proportional to $O(\eta B)$. Across blocks, non-stationarity is captured through a variation budget $V_n$, which upper bounds the cumulative discrepancy between the true per-round losses and the frozen losses used in the blockwise analysis, contributing an additive term of order $B V_n$. The block length $B$ is then optimized to balance the statistical cost of restarting too frequently against the bias induced by treating losses as stationary within each block. This optimization yields a regret bound scaling as $O(V_n^{1/3})$, reflecting the classical tradeoff between adaptivity to non-stationarity and estimation error \cite{non-stationary-bandit}.

Let the length of each aforementioned block be $L$. Denote the start time of each block as $\tau_l = (l - 1)L + 1$, where $l = 1, 2, \dots,  B$ and $B = T/L$\footnote{For convenience, we assume $T$ is divisible by B and L at the same time.}. Denote the time steps of the $l$-th block as $\mathcal{I}_l = \{\tau_l, \dots, \tau_l + L - 1\}$.
\begin{equation} \label{eqn:smoothed-reg-decomp}
\mathrm{Reg}_n^{\mathrm{BA}} = \EE\left[ \sum_{t=1}^n \left\langle \boldsymbol{p}_{\mid \Tilde{\mathcal{X}}_t}, \boldsymbol{l}_t \right\rangle - \left\langle \boldsymbol{q}_t, \boldsymbol{l}_t \right\rangle  \right] = \sum_{l=1}^B \EE \left[ \sum_{t \in \mathcal{I}_l} \left\langle \boldsymbol{p}_{\mid \Tilde{\mathcal{X}}_t}, \boldsymbol{y}_t \right\rangle - \left\langle \boldsymbol{q}_t, \boldsymbol{l}_t \right\rangle \right]
\end{equation}

\subsection{Time-Frozen Comparator}
This part of the proof introduces a stable reference for comparison in the presence of non-stationary losses. Rather than comparing the learner to the best action at every round, which may change arbitrarily over time, we define a comparator based on the losses at a fixed reference $\tau$. This ``time-frozen'' comparator serves as a proxy for the per-round best action. The construction allows us to relate the learner’s loss to this frozen benchmark, while the error incurred by freezing time can be bounded by the amount of variation in the losses.

Define $m_t^\tau \in \arg\min_{i \in \Tilde{\mathcal{X}}_t} y_{\tau, i}$ to be the $\tau$-frozen best arm of time $t$. Denote the value associated with this arm $(y^*_t)^\tau = y_{\tau, m_t^\tau}$. Thus, we can define the time-frozen smooth comparator
\begin{equation} \label{eqn:time-frozen-defn}
\boldsymbol{q}_t^\tau = (1 - k\alpha)\boldsymbol{e}_{m_t^\tau} + \alpha\boldsymbol{1}_{\Tilde{\mathcal{X}}_t}
\end{equation}

\begin{lemma} \label{lemma:time-frozen-gap}
Let $\mathcal{I} = \{\tau, \dots, \tau + L-1\}$ be an arbitrary block. Suppose $\alpha > 0$. Let $V_n$ be as defined in Theorem~\ref{thm:regret-simplified}. For any sequence of random variables $\{X_t\}_{t \in \mathcal{I}}$ with a common support, we have
\begin{equation} \label{eqn:frozen-best-gap}
\EE \left[ \sum_{t \in \mathcal{I}} X_t - \left\langle \boldsymbol{q}_t, \frac{\boldsymbol{l}_t^{\Tilde{\mathcal{X}}_t}}{Z_t} \right\rangle \right] \leq \EE \left[ \sum_{t \in \mathcal{I}} X_t - \left\langle \boldsymbol{q}^\tau_t, \frac{\boldsymbol{l}_t^{\Tilde{\mathcal{X}}_t}}{Z_t} \right\rangle \right] + \frac{1 - k\alpha}{k\alpha}\sum_{t \in \mathcal{I}} \|\boldsymbol{l}_t - \boldsymbol{l}_\tau \|_1
\end{equation}
\end{lemma}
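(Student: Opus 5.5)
The plan is to show that the inequality holds pointwise, for every realization and every $t\in\mathcal{I}$, and then sum and take expectations. The $X_t$ terms are identical on both sides and cancel, so the claim reduces to bounding
\[
\left\langle \boldsymbol{q}^\tau_t - \boldsymbol{q}_t, \frac{\boldsymbol{l}_t^{\Tilde{\mathcal{X}}_t}}{Z_t} \right\rangle \le \frac{1-k\alpha}{k\alpha}\,\|\boldsymbol{l}_t - \boldsymbol{l}_\tau\|_1
\]
for each $t\in\mathcal{I}$. This assumes the losses are oblivious, i.e. deterministic, so that the right-hand side needs no expectation.

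\textbf{Step 1 (uniform parts cancel).} In \eqref{eqn:time-frozen-defn}, both $\boldsymbol{q}_t$ and $\boldsymbol{q}^\tau_t$ have the same exploration component $\alpha\boldsymbol{1}_{\Tilde{\mathcal{X}}_t}$. Their difference is therefore $(1-k\alpha)(\boldsymbol{e}_{m_t^\tau}-\boldsymbol{e}_{m_t})$. Both $m_t$ and $m_t^\tau$ lie in $\Tilde{\mathcal{X}}_t$, so the mask in $\boldsymbol{l}_t^{\Tilde{\mathcal{X}}_t}$ has no effect. The left-hand side becomes
\[
\frac{1-k\alpha}{Z_t}\,\bigl(l_{t,m_t^\tau}-l_{t,m_t}\bigr),
\]
and this quantity is nonnegative because $m_t$ minimizes $l_{t,\cdot}$ over $\Tilde{\mathcal{X}}_t$.

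\textbf{Step 2 (three-term split).} I would write
\[
l_{t,m_t^\tau}-l_{t,m_t}=(l_{t,m_t^\tau}-l_{\tau,m_t^\tau})+(l_{\tau,m_t^\tau}-l_{\tau,m_t})+(l_{\tau,m_t}-l_{t,m_t}).
\]
The middle term is $\le 0$ because $m_t^\tau$ minimizes $l_{\tau,\cdot}$ over the same available set $\Tilde{\mathcal{X}}_t$ and $m_t$ is a feasible competitor. The two outer terms are differences of $\boldsymbol{l}_t-\boldsymbol{l}_\tau$ at two coordinates. If $m_t^\tau = m_t$, the whole expression is $0$. Otherwise the two coordinates are distinct, so the outer terms together are at most $\|\boldsymbol{l}_t-\boldsymbol{l}_\tau\|_1$.

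\textbf{Step 3 (normalizer; the main point to check).} The remaining task is the lower bound $Z_t\ge k\alpha$. I read $Z_t=\sum_{j\in\Tilde{\mathcal{X}}_t}\boldsymbol{p}_{t,j}$, the normalizer in \eqref{eqn:Q_t}, which makes $\boldsymbol{l}_t^{\Tilde{\mathcal{X}}_t}/Z_t$ the loss seen by the conditional distribution. The OSMD iterates are constrained to the $\alpha$-clipped simplex $\Delta_\alpha$, so every coordinate satisfies $\boldsymbol{p}_{t,j}\ge\alpha$. Since $|\Tilde{\mathcal{X}}_t|=k$, this gives $Z_t\ge k\alpha$.

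Because the numerator from Step 1 is nonnegative, replacing $Z_t$ by its lower bound only increases the expression. Summing over $t\in\mathcal{I}$ and taking expectations then yields \eqref{eqn:frozen-best-gap}.

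The step I expect to need care is pinning down $Z_t$ and confirming the clipping $\boldsymbol{p}_{t,j}\ge\alpha$ holds for all coordinates at every round, including after restarts at block boundaries. If $Z_t$ is instead some other normalizer, the same argument goes through as long as it admits the lower bound $k\alpha$.
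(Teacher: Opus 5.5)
Your proof is correct. It shares the paper's skeleton, but it differs at the key step, and that step is where your version is actually sounder.

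\textbf{What is the same.} The $X_t$ terms cancel, and the exploration components $\alpha\boldsymbol{1}_{\Tilde{\mathcal{X}}_t}$ of $\boldsymbol{q}_t$ and $\boldsymbol{q}_t^\tau$ cancel. The factor $1/Z_t$ is then bounded by $1/(k\alpha)$. The paper does this implicitly in the first inequality of \eqref{eqn:lemma-time-frozen-gap-2}. Its $Z_t$ is the active sum $\sum_{j\in\Tilde{\mathcal{X}}_t}\boldsymbol{p}_{t,j}$, exactly as you read it, and the clipping $\boldsymbol{p}^t(i)\ge\alpha$ is the stated hypothesis of Lemma~\ref{lemma:second-moment-bound}.

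\textbf{Where you differ.} The paper equates $\langle \boldsymbol{q}_t-\boldsymbol{q}_t^\tau,\boldsymbol{l}_t^{\Tilde{\mathcal{X}}_t}\rangle$ (in absolute value) with $(1-k\alpha)\,|l_{t,m_t}-l_{\tau,m_t^\tau}|$. It then applies Lipschitz continuity of $\min$ to bound this by $\|\boldsymbol{l}_t-\boldsymbol{l}_\tau\|_\infty$. That equality is not right. The inner product is $(1-k\alpha)(l_{t,m_t}-l_{t,m_t^\tau})$, which contains the time-$t$ loss of the frozen-best arm, not the frozen best value $l_{\tau,m_t^\tau}$.

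Your three-term split handles exactly this discrepancy. The middle term is where optimality of $m_t^\tau$ for $\boldsymbol{l}_\tau$ over the same set $\Tilde{\mathcal{X}}_t$ enters. The two outer terms sit on distinct coordinates, which yields the $\ell_1$ bound with the stated constant.

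A direct patch of the paper's route would add $|l_{t,m_t^\tau}-l_{\tau,m_t^\tau}|$ to the min-Lipschitz bound. That gives only $2\|\boldsymbol{l}_t-\boldsymbol{l}_\tau\|_\infty$, which can exceed $\|\boldsymbol{l}_t-\boldsymbol{l}_\tau\|_1$, for instance when a single coordinate moves. So the paper's route is shorter but incomplete as written, and yours is the one that establishes the lemma as stated. You also rightly bypass the paper's preliminary inequalities \eqref{eqn:lemma-time-frozen-gap-1}--\eqref{eqn:relate-m-q}, which are not needed.

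One small addition: the nonnegativity in your Step~1, which you use to replace $Z_t$ by its lower bound, also requires $1-k\alpha\ge 0$. This holds since Algorithm~\ref{alg:somd} takes $\alpha\in(0,1/k)$.
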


\begin{proof}
By the definition of $\boldsymbol{q}_t^\tau$, we have
\begin{equation} \label{eqn:lemma-time-frozen-gap-1}
\begin{aligned}
\left\langle \boldsymbol{q}_t^\tau, \boldsymbol{l}_\tau^{\Tilde{\mathcal{X}}_t} \right\rangle & \leq (1 - k\alpha)(y^*_t)^\tau + \alpha \sum_{j \in \Tilde{\mathcal{X}}_t} \boldsymbol{l}_{\tau, j} \\
& \leq (1 - k\alpha(y^*_t)^\tau + \alpha(k (y^*_t)^\tau + k - 1) \\
&= m_t^\tau + \alpha(k-1) \\
\end{aligned}
\end{equation}

Rearranging (\ref{eqn:lemma-time-frozen-gap-1}) and divide by $Z_t$, we get
\begin{equation} \label{eqn:relate-m-q}
\frac{m_t^\tau}{Z_t} \geq \left\langle \boldsymbol{q}_t^\tau, \frac{\boldsymbol{l}_\tau^{\Tilde{\mathcal{X}}_t}}{Z_t} \right\rangle - \frac{\alpha(k-1)}{Z_t} \geq \left\langle \boldsymbol{q}_t^\tau, \frac{\boldsymbol{l}_\tau^{\Tilde{\mathcal{X}}_t}}{Z_t} \right\rangle - \frac{k-1}{k}.
\end{equation}

Using the fact that $\min$ is a Lipschitz operation, we have
\[
\left| y^*_t - (y^*_t)^\tau \right| = \left| \min_{i\in \Tilde{\mathcal{X}}_t} \boldsymbol{l}_{t, i} - \min_{i\in \Tilde{\mathcal{X}}_t} \boldsymbol{l}_{\tau, i} \right| \leq \|\boldsymbol{l}_t - \boldsymbol{l}_\tau \|_\infty \leq  \|\boldsymbol{l}_t - \boldsymbol{l}_\tau \|_1,
\]

Applying this identity, we have
\begin{equation} \label{eqn:lemma-time-frozen-gap-2}
\begin{aligned}
\left| \left\langle \boldsymbol{q}_t, \frac{\boldsymbol{l}_t^{\Tilde{\mathcal{X}}_t}}{Z_t} \right\rangle - \left\langle \boldsymbol{q}^\tau_t, \frac{\boldsymbol{l}_t^{\Tilde{\mathcal{X}}_t}}{Z_t} \right\rangle \right| & \leq \frac{1}{k\alpha} \left| \left\langle \boldsymbol{q}_t, \boldsymbol{l}_t^{\Tilde{\mathcal{X}}_t} \right\rangle - \left\langle \boldsymbol{q}^\tau_t, \boldsymbol{l}_t^{\Tilde{\mathcal{X}}_t} \right\rangle \right| \\
& = \frac{1 - k\alpha}{k\alpha} \left| y^*_t - (y^*_t)^\tau \right| \\
& \leq \frac{1 - k\alpha}{k\alpha} \|\mathbf{y}_t - \mathbf{y}_\tau\|_1
\end{aligned}
\end{equation}

Rearrange (\ref{eqn:lemma-time-frozen-gap-2}) and add $X_t$ to both sides while taking expectation, we get
\begin{equation} \label{eqn:frozen-best-gap}
\EE \left[ \sum_{t \in \mathcal{I}} X_t - \left\langle \boldsymbol{q}_t, \frac{\boldsymbol{l}_t^{\Tilde{\mathcal{X}}_t}}{Z_t} \right\rangle \right] \leq \EE \left[ \sum_{t \in \mathcal{I}} X_t - \left\langle \mathbf{q}^\tau_t, \frac{\boldsymbol{l}_t^{\Tilde{\mathcal{X}}_t}}{Z_t} \right\rangle \right] + \frac{1 - k\alpha}{k\alpha}\sum_{t \in \mathcal{I}} \|\boldsymbol{l}_t - \boldsymbol{l}_\tau \|_1
\end{equation}
\end{proof}

\subsection{Moment Lemmas}
To apply the standard OMD bound (Theorem~\ref{thm:regret-simplified}), we require unbiased loss estimates with controlled variance. Since the learner only observes losses on the sampled subset ($\Tilde{\mathcal{X}}_t$), we work with an importance-weighted estimator that accounts for both subset sampling and the normalization induced by $Z_t$.

The following two lemmas establish the properties needed for the regret analysis. The first shows that the estimator is unbiased for the scaled loss, ensuring that the expected update direction matches the true loss. The second provides a bound on the second moment of the estimator, which controls the variance term in the OSMD regret bound. Together, these results justify the use of the estimator in the mirror descent analysis and quantify the cost introduced by partial observation and uniform exploration.

\begin{lemma}
[Unbiasedness]\label{lemma:unbiasedness}
Let $\widehat{L}_{t, i}$ be as defined in Algorithm \ref{alg:somd}. Denote $\widehat{\boldsymbol{L}}_t = (\widehat{L}_{t, 1}, \widehat{L}_{t, 2}, \dots, \widehat{L}_{t, K})$. For every $i \in [K]$,
\begin{equation}
\EE[\widehat{L}_{t, i} \mid \Tilde{\mathcal{X}}_t] = \boldsymbol{l}_{t, i} \mathbb{I}\{ i\in \Tilde{\mathcal{X}}_t \} = \boldsymbol{l}_t^{\Tilde{\mathcal{X}}_t}(i) \\
\end{equation}
Additionally, define the time-frozen estimator
\begin{equation}
\widehat{L}_{t, i}^\tau := \frac{1}{s} \sum_{r=1}^s \mathbb{I}\{a_{t, r} = i\} \frac{l_{\tau, i}}{\boldsymbol{p}^t(i \mid \Tilde{\mathcal{X}}_t)},
\end{equation}
and we have
\begin{equation}
\EE[\widehat{L}_{t, i}^\tau \mid \Tilde{\mathcal{X}}_t] = l_{\tau, i} \mathbb{I}\{i \in \Tilde{\mathcal{X}}_t\} = \boldsymbol{l}_\tau^{\Tilde{\mathcal{X}}_t}(i)
\end{equation}
\end{lemma}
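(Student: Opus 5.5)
The plan is to condition on the realized available set $\Tilde{\mathcal{X}}_t$ and on the history up to round $t$. Given these, $\boldsymbol{p}_t$ is fixed, so the conditional distribution $\boldsymbol{p}^t(\cdot\mid\Tilde{\mathcal{X}}_t)$ in \eqref{eqn:Q_t} is deterministic. The loss vectors $\boldsymbol{l}_t$ and $\boldsymbol{l}_\tau$ are treated as fixed, oblivious quantities. They are revealed only on pulled arms, but their values do not depend on the round-$t$ draws. The only remaining randomness in $\widehat{L}_{t,i}$ is then the i.i.d.\ draws $a_{t,1},\dots,a_{t,s}\sim\boldsymbol{p}^t(\cdot\mid\Tilde{\mathcal{X}}_t)$.

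Next, I split into two cases according to availability.

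\emph{Case $i\notin\Tilde{\mathcal{X}}_t$.} Here $\boldsymbol{p}^t(i\mid\Tilde{\mathcal{X}}_t)=0$, so $\mathbf{1}\{a_{t,r}=i\}=0$ almost surely. I adopt the convention that the summand $\mathbf{1}\{a_{t,r}=i\}\,l_{t,i}/\boldsymbol{p}^t(i\mid\Tilde{\mathcal{X}}_t)$ is $0$ whenever the indicator is $0$, which is the natural reading of \eqref{eqn:somd-estimator}. Under this convention $\widehat{L}_{t,i}=0=\boldsymbol{l}_t^{\Tilde{\mathcal{X}}_t}(i)$.

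\emph{Case $i\in\Tilde{\mathcal{X}}_t$.} By the definition in \eqref{eqn:Q_t}, $\boldsymbol{p}^t(i\mid\Tilde{\mathcal{X}}_t)=\boldsymbol{p}_{t,i}/\sum_{j\in\Tilde{\mathcal{X}}_t}\boldsymbol{p}_{t,j}$. This is strictly positive, because exponentiated-gradient updates started from the uniform $\boldsymbol{p}_1$ keep every coordinate positive (and the $\alpha$-floor also enforces this). By linearity of expectation,
\[
\EE[\widehat{L}_{t,i}\mid\Tilde{\mathcal{X}}_t]=\frac{1}{s}\sum_{r=1}^s \Pr(a_{t,r}=i\mid\Tilde{\mathcal{X}}_t)\,\frac{l_{t,i}}{\boldsymbol{p}^t(i\mid\Tilde{\mathcal{X}}_t)}=l_{t,i}.
\]
Combining the two cases gives $\EE[\widehat{L}_{t,i}\mid\Tilde{\mathcal{X}}_t]=l_{t,i}\,\mathbb{I}\{i\in\Tilde{\mathcal{X}}_t\}$. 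One further expectation over the history (the tower property) removes the conditioning on $\boldsymbol{p}_t$, so only the conditioning on $\Tilde{\mathcal{X}}_t$ remains.

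The time-frozen estimator $\widehat{L}^\tau_{t,i}$ is handled by the identical computation with $l_{t,i}$ replaced by the constant $l_{\tau,i}$. Here $l_{\tau,i}$ is a fixed entry of an oblivious loss sequence, so it is independent of the round-$t$ sampling. This yields $\boldsymbol{l}_\tau^{\Tilde{\mathcal{X}}_t}(i)$.

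The only delicate point is the well-definedness of the estimator: making sure the $0/0$ convention holds off the available set and that the conditional probabilities are strictly positive on it. Beyond that, the argument mirrors \cref{thm:unbiased_two_stage}.
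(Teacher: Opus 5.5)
Your proof is correct and follows the same route as the paper. You condition on $\Tilde{\mathcal{X}}_t$, use linearity over the $s$ i.i.d.\ draws so that each term contributes $\boldsymbol{p}^t(i \mid \Tilde{\mathcal{X}}_t)\, l_{t,i}/\boldsymbol{p}^t(i \mid \Tilde{\mathcal{X}}_t) = l_{t,i}$, and repeat the identical computation with $l_{\tau,i}$ for the time-frozen estimator. Your explicit handling of $i \notin \Tilde{\mathcal{X}}_t$ (the $0/0$ convention), the positivity of the conditional probabilities, and the tower-property step are details the paper leaves implicit, so they only make the same argument more careful.
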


\begin{proof}
If $i \in \Tilde{\mathcal{X}}_t$,
\[
\EE \left[ \widehat{L}_{t, i}^\tau \middle| \Tilde{\mathcal{X}}_t \right] = \frac{1}{s} \sum_{r=1}^s \boldsymbol{p}_t(i \mid \Tilde{\mathcal{X}}_t) \frac{l_{t, i} \mathbb{I}\{i \in \Tilde{\mathcal{X}}_t\}}{\boldsymbol{p}^t(i \mid \Tilde{\mathcal{X}}_t)} = l_{t, i} \mathbb{I}\{i \in \Tilde{\mathcal{X}}_t\}
\]
Similarly,
\[
\EE \left[ \widehat{L}_{t, i}^\tau \middle| \Tilde{\mathcal{X}}_t \right] = \frac{1}{s} \sum_{r=1}^s \boldsymbol{p}_t(i \mid \Tilde{\mathcal{X}}_t) \frac{l_{\tau, i} \mathbb{I}\{i \in \Tilde{\mathcal{X}}_t\}}{\boldsymbol{p}^t(i \mid \Tilde{\mathcal{X}}_t)} = l_{\tau, i} \mathbb{I}\{i \in \Tilde{\mathcal{X}}_t\}
\]
\end{proof}

\begin{lemma}[Bounded Second Moment] \label{lemma:second-moment-bound}
Let $\widehat{L}_{t, j}$ be as defined in Algorithm \ref{alg:somd}. Suppose we obtain $\boldsymbol{p}^t$ through
\begin{equation*}
\boldsymbol{p}^t \in \arg\min_{\boldsymbol{p} \in \mathcal{A}} \eta\left\langle \boldsymbol{p}, \widehat{\boldsymbol{L}}_t\right\rangle + D_F(\boldsymbol{p}, \boldsymbol{p}^{t-1}),
\end{equation*}
under the constraint $\boldsymbol{p}^t(i) \geq \alpha$ for all $i$. We have
\begin{equation}
\EE \left[ \sum_{j=1}^K \boldsymbol{p}^t_{j}\widehat{L}_{t, j}^2 \middle| \Tilde{\mathcal{X}}_t \right] \leq \frac{k}{s}
\end{equation}
\end{lemma}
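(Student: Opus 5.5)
The plan is to compute the conditional second moment of each coordinate exactly, using the multinomial structure of the $s$ i.i.d.\ draws. Then I reweight by the \emph{global} iterate $\boldsymbol{p}^t_j$ and use the identity linking global and conditional probabilities. Write $Z_t=\sum_{j\in\Tilde{\mathcal{X}}_t}\boldsymbol{p}^t_j$ and $c_j=\boldsymbol{p}^t(j\mid\Tilde{\mathcal{X}}_t)=\boldsymbol{p}^t_j/Z_t$ for $j\in\Tilde{\mathcal{X}}_t$. Since $\widehat{L}_{t,j}=0$ for $j\notin\Tilde{\mathcal{X}}_t$, only the $k$ available arms contribute to the sum.

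\textbf{Step 1 (exact moment).} Conditional on $\Tilde{\mathcal{X}}_t$, let $N_j=\sum_{r=1}^s\mathbf{1}\{a_{t,r}=j\}$, so that $N_j\sim\mathrm{Bin}(s,c_j)$ and $\widehat{L}_{t,j}=N_j\, l_{t,j}/(s c_j)$. From $\EE[N_j^2]=sc_j(1-c_j)+s^2c_j^2$ we get
\[
\EE\bigl[\widehat{L}_{t,j}^2\mid\Tilde{\mathcal{X}}_t\bigr]=l_{t,j}^2\Bigl(\frac{1-c_j}{s\,c_j}+1\Bigr).
\]
This is the same bookkeeping as in the proof of Lemma~\ref{lemma:unbiasedness}, carried to second order.

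\textbf{Step 2 (reweighting).} Multiply by $\boldsymbol{p}^t_j=Z_tc_j$ and sum over $j\in\Tilde{\mathcal{X}}_t$, using $l_{t,j}\in[0,1]$:
\[
\sum_{j}\boldsymbol{p}^t_j\,\EE\bigl[\widehat{L}_{t,j}^2\mid\Tilde{\mathcal{X}}_t\bigr]
= Z_t\sum_{j\in\Tilde{\mathcal{X}}_t} l_{t,j}^2\Bigl(\frac{1-c_j}{s}+c_j\Bigr)
\le Z_t\Bigl(\frac{k-1}{s}+1\Bigr),
\]
because $\sum_{j\in\Tilde{\mathcal{X}}_t}(1-c_j)=k-1$ and $\sum_{j\in\Tilde{\mathcal{X}}_t} c_j=1$. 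The factor $Z_t\le 1$ is exactly what the global weighting buys, relative to weighting by the conditional distribution.

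\textbf{Step 3 (reaching $k/s$).} The bound from Step 2 equals $Z_t(k-1+s)/s$. It is at most $k/s$ precisely when $Z_t\le k/(k+s-1)$. To secure this I would use the structure of the idealized algorithm. The subset $\Tilde{\mathcal{X}}_t$ is a uniformly random $k$-subset of $[K]$, drawn independently of $\boldsymbol{p}^t$ (which is $\mathcal{F}_{t-1}$-measurable). Hence $\EE[Z_t\mid\mathcal{F}_{t-1}]=k/K$, and the bound becomes $\frac{k}{K}\cdot\frac{k+s-1}{s}\le \frac{k}{s}$ whenever $K\ge k+s-1$. This holds in the regime of interest, where the bank of $K$ problems is much larger than the candidate batch $k$.

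\textbf{Main obstacle.} The difficulty is that the lemma is stated conditionally on $\Tilde{\mathcal{X}}_t$, whereas Step 3 as planned only controls $Z_t$ after averaging over $\Tilde{\mathcal{X}}_t$. A pointwise bound on $Z_t$ is not automatic. The only deterministic information available is the exploration floor $\boldsymbol{p}^t_j\ge\alpha$, which gives $Z_t\le 1-(K-k)\alpha$, and this is not small enough in general.

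My first attempt would be to check whether the downstream OSMD regret argument uses the lemma only inside a full expectation over $\Tilde{\mathcal{X}}_t$. If it does, the averaged version from Step 3 suffices. In that case I would prove the displayed inequality in the form $\EE[\,\cdot\mid\mathcal{F}_{t-1}]\le k/s$, noting the conditioning is on the past rather than on the realized subset. If a genuinely conditional bound is needed, I would add the standing assumption $Z_t\le k/(k+s-1)$ and verify it is preserved by the mirror step. For example, the constrained update could cap the mass of any $k$-subset, by projecting onto a simplex with per-coordinate upper bounds. This is the step I would scrutinize most carefully.
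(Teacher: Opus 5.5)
Your Steps 1 and 2 are correct and follow the same route as the paper: a per-coordinate second moment, then reweighting by $\boldsymbol{p}^t_j = Z_t c_j$ and using $Z_t \le 1$. The obstacle you isolate in Step 3 is real, because the stated inequality, conditional on $\Tilde{\mathcal{X}}_t$, is false for $s \ge 2$.

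By your own formula, the left side equals $Z_t\sum_{j\in\Tilde{\mathcal{X}}_t} l_{t,j}^2\bigl(\frac{1-c_j}{s}+c_j\bigr)$, which is $Z_t\frac{k+s-1}{s}$ when all available losses are $1$. For a concrete counterexample, take $K=3$, $k=s=2$, $\alpha=0.01$, $\boldsymbol{p}^t=(0.495,0.495,0.01)$, $\Tilde{\mathcal{X}}_t=\{1,2\}$ and $l_{t,1}=l_{t,2}=1$. This $\boldsymbol{p}^t$ is a feasible iterate; for example, the mirror step returns it unchanged from $\boldsymbol{p}^{t-1}=\boldsymbol{p}^t$ when the previous loss estimates are zero. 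Then $\widehat{L}_{t,j}=N_j\sim\mathrm{Bin}(2,1/2)$ and $\EE[N_j^2]=3/2$, so the left side is $0.99\cdot 3/2=1.485>1=k/s$.

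The paper reaches $k/s$ only through an invalid step. It claims ``by Jensen'' that $\EE[\bigl(\frac{1}{s}\sum_r X_{r,j}\bigr)^2\mid\Tilde{\mathcal{X}}_t]\le\frac{1}{s}\EE[X_{r,j}^2\mid\Tilde{\mathcal{X}}_t]\le\frac{1}{s c_j}$. Jensen only gives $\le\EE[X_{r,j}^2\mid\Tilde{\mathcal{X}}_t]$. The $1/s$ factor applies to the variance of an i.i.d.\ average, not to its second moment, which keeps the squared-mean term $\frac{s-1}{s}l_{t,j}^2$ that you correctly retained. The paper's last display also attaches the factor $Z_t$, which only arises with global weights, to a sum written with conditional weights. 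For $s=1$ the statement does hold, since the left side is then $Z_t\sum_j l_{t,j}^2\le k$. So do not bend Step 3 toward the paper's conclusion; your diagnosis is the correct outcome.

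Your averaged repair is a valid but different statement. Since $\Tilde{\mathcal{X}}_t$ is a uniform $k$-subset independent of $\boldsymbol{p}^t$, you have $\EE[Z_t\mid\mathcal{F}_{t-1}]=k/K$. That gives $\frac{k(k+s-1)}{Ks}$, which is at most $k/s$ iff $K\ge k+s-1$.

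However, it would not serve where the paper actually uses the lemma. Lemmas~\ref{lemma:subset-omd-ineq} and~\ref{lemma:unopt-regret} weight $\widehat{L}_{t,i}^2$ by the played conditional distribution $\boldsymbol{p}^t(i\mid\Tilde{\mathcal{X}}_t)=c_i$, not by $\boldsymbol{p}^t_i$, so there is no $Z_t$ to average. What they need is exactly the sum multiplying $Z_t$ in your Step 2: $\sum_{j\in\Tilde{\mathcal{X}}_t}c_j\,\EE[\widehat{L}_{t,j}^2\mid\Tilde{\mathcal{X}}_t]=\sum_{j\in\Tilde{\mathcal{X}}_t}l_{t,j}^2\bigl(\frac{1-c_j}{s}+c_j\bigr)\le\frac{k-1}{s}+1$. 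This bound holds pointwise in $\Tilde{\mathcal{X}}_t$. Fed into Lemma~\ref{lemma:unopt-regret}, it replaces $sk$ by $s(k+s-1)$ and leaves the dependence on $T$ and $V_T$ unchanged.

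Your other repair, imposing $Z_t\le k/(k+s-1)$ through a capped projection, would change the algorithm rather than prove the lemma.
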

\begin{proof}
For $j \not\in \Tilde{\mathcal{X}}_t$, $\widehat{L}_{t, j} = 0$, For $j \in \Tilde{\mathcal{X}}_t$, let
\[
X_{r, j} := \mathbb{I}\left\{ A_{t, r = j} \right\} \frac{\boldsymbol{l}_{\tau, j}}{\boldsymbol{p}^t(j \mid \Tilde{\mathcal{X}}_t)}, \quad \text{consequently } \widehat{L}_{t, j} = \frac{1}{s} \sum_{r=1}^s X_{r, j}
\]
By construction, $X_{r, j}$'s are IID in $r$, and since $\boldsymbol{l}_{\tau, j} \in [0, 1]$,
\[
\EE \left[ X^2_{r, j} \mid \Tilde{\mathcal{X}}_t \right] = 
\sum_{j \in \Tilde{\mathcal{X}}_t} \boldsymbol{p}^t(j \mid \Tilde{\mathcal{X}}_t) \frac{\boldsymbol{l}^2_{\tau, j}}{\boldsymbol{p}^t(j \mid \Tilde{\mathcal{X}}_t)^2} \leq \frac{1}{\boldsymbol{p}^t(j \mid \Tilde{\mathcal{X}}_t)}
\]
By Jensen's inequality, we have
\[
\EE\left[ (\widehat{L}_{t, j}^\tau)^2 \mid \Tilde{\mathcal{X}}_t \right] = \EE \left[ \left( \frac{1}{s} \sum_{r=1}^s X_{r, j} \right)^2 \middle| \Tilde{\mathcal{X}}_t \right] \leq \frac{1}{s} \EE\left[ X_{r, j}^2 \middle| \Tilde{\mathcal{X}}_t \right] \leq \frac{1}{s \boldsymbol{p}^t(j \mid \Tilde{\mathcal{X}}_t)}
\]
Summing with weights $\boldsymbol{p}^t$ over $j \in \Tilde{\mathcal{X}}_t$ gives
\[
\sum_{j\in \Tilde{\mathcal{X}}_t} \left( \boldsymbol{p}^t_{\mid \Tilde{\mathcal{X}}_t} \right)_j \EE\left[ (\widehat{L}_{t, j}^\tau)^2 \middle| \Tilde{\mathcal{X}}_t \right] \leq \frac{1}{s} \sum_{j \in \Tilde{\mathcal{X}}_t} \left( \boldsymbol{p}^t_{\mid \Tilde{\mathcal{X}}_t} \right)_j \frac{Z_t}{\left( \boldsymbol{p}^t_{\mid \Tilde{\mathcal{X}}_t} \right)_j} = \frac{k}{s}Z_t \leq \frac{k}{s}
\]
\end{proof}

\subsection{One-Step OMD Bound}

The purpose of the following lemma is to provide a local inequality that governs how a single mirror descent update behaves under the importance-weighted loss estimator. When summed over time, the KL terms telescope while the second-moment terms accumulate in a controlled manner. This structure allows the proof to separate the effect of the update rule from issues caused by partial observability and non-stationarity, which are handled in later steps.

\begin{lemma} \label{lemma:subset-omd-ineq}
Suppose $\boldsymbol{p}^t$'s are obtained as in Algorithm~\ref{alg:somd}. Then, for any comparator $\boldsymbol{v}_t \in \Delta_{\Tilde{\mathcal{X}}_t}$,
\begin{equation}
\left\langle \boldsymbol{p}^t_{\mid \Tilde{\mathcal{X}}_t} - \boldsymbol{v}_t, \boldsymbol{l}_t \right\rangle \leq \frac{1}{\eta} \left( \mathrm{KL}(\boldsymbol{v}_t \| \boldsymbol{p}^t_{\mid \Tilde{\mathcal{X}}_t} ) - \mathrm{KL}(\boldsymbol{v}_t \| \boldsymbol{p}^{t+1}_{\mid \Tilde{\mathcal{X}}_{t+1}}) \right) + \frac{\eta s^2}{2} \sum_{i \in C_t} \boldsymbol{p}^t(i \mid \Tilde{\mathcal{X}}_t) \widehat{L}_{t, i}^2
\end{equation}
\end{lemma}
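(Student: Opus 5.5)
The plan is to derive the inequality from the classical one-step analysis of exponential weights, run on the simplex restricted to the available set $\Tilde{\mathcal{X}}_t$. Only then do we pass from the estimator $\widehat{\boldsymbol{L}}_t$ to the true loss $\boldsymbol{l}_t$ and from the round-$t$ restriction to the round-$(t+1)$ restriction.

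\textbf{Step 1 (closed form of the restricted update).} With the negative-entropy regularizer, the unconstrained solution of \eqref{eqn:somd-update} is $\tilde{\boldsymbol{p}}^{t+1}_i \propto \boldsymbol{p}^t_i \exp(-\eta \widehat{L}_{t,i})$. The clipping to $\boldsymbol{p}_i\ge\alpha$ is a KL (Bregman) projection onto a convex set. By the generalized Pythagorean inequality it can only decrease $\mathrm{KL}(\boldsymbol{v}\|\cdot)$ for comparators in the clipped set, and the smoothed comparators $\boldsymbol{q}_t,\boldsymbol{q}^\tau_t$ used later are of this form. Since $\widehat{L}_{t,j}=0$ off $\Tilde{\mathcal{X}}_t$, renormalizing over $\Tilde{\mathcal{X}}_t$ gives
\[
\boldsymbol{p}^{t+1}(i\mid\Tilde{\mathcal{X}}_t)\propto \boldsymbol{p}^t(i\mid \Tilde{\mathcal{X}}_t)\exp(-\eta\widehat{L}_{t,i}), \qquad i\in\Tilde{\mathcal{X}}_t .
\]
This is exactly exponential weights on the restricted simplex $\Delta_{\Tilde{\mathcal{X}}_t}$.

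\textbf{Step 2 (standard one-step bound on the restricted simplex).} Expand $\mathrm{KL}(\boldsymbol{v}_t\|\boldsymbol{p}^t_{\mid\Tilde{\mathcal{X}}_t})-\mathrm{KL}(\boldsymbol{v}_t\|\boldsymbol{p}^{t+1}_{\mid\Tilde{\mathcal{X}}_t})$. It equals $-\eta\langle\boldsymbol{v}_t,\widehat{\boldsymbol{L}}_t\rangle-\log\sum_{i}\boldsymbol{p}^t(i\mid\Tilde{\mathcal{X}}_t)e^{-\eta\widehat{L}_{t,i}}$. Apply $e^{-x}\le 1-x+x^2/2$ for $x\ge 0$ and then $\log(1+y)\le y$. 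This yields
\[
\langle \boldsymbol{p}^t_{\mid\Tilde{\mathcal{X}}_t}-\boldsymbol{v}_t,\widehat{\boldsymbol{L}}_t\rangle\le \tfrac1\eta\bigl(\mathrm{KL}(\boldsymbol{v}_t\|\boldsymbol{p}^t_{\mid\Tilde{\mathcal{X}}_t})-\mathrm{KL}(\boldsymbol{v}_t\|\boldsymbol{p}^{t+1}_{\mid\Tilde{\mathcal{X}}_t})\bigr)+\tfrac\eta2\sum_{i}\boldsymbol{p}^t(i\mid\Tilde{\mathcal{X}}_t)\widehat{L}_{t,i}^2 .
\]
The $s^2$ factor in the statement gives slack. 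A sampled arm $i$ satisfies $\widehat{L}_{t,i}\ge l_{t,i}/(s\,\boldsymbol{p}^t(i\mid\Tilde{\mathcal{X}}_t))$, so any per-arm comparison between $l_{t,i}$ and $s\widehat{L}_{t,i}$ costs at most a factor $s$, and a factor $s^2$ in the quadratic term.

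\textbf{Step 3 (replacing $\widehat{\boldsymbol{L}}_t$ by $\boldsymbol{l}_t$).} Since $\boldsymbol{p}^t$ and $\boldsymbol{v}_t$ are fixed given $\Tilde{\mathcal{X}}_t$ and the past, Lemma~\ref{lemma:unbiasedness} gives
\[
\EE[\langle\boldsymbol{p}^t_{\mid\Tilde{\mathcal{X}}_t}-\boldsymbol{v}_t,\widehat{\boldsymbol{L}}_t\rangle\mid\Tilde{\mathcal{X}}_t]=\langle\boldsymbol{p}^t_{\mid\Tilde{\mathcal{X}}_t}-\boldsymbol{v}_t,\boldsymbol{l}_t\rangle .
\]
So the left-hand side of the statement is obtained as a conditional expectation. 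I will therefore read the lemma as holding after conditioning on $\Tilde{\mathcal{X}}_t$ (with the right-hand side inside the expectation). That is the only form in which it is used downstream, together with Lemma~\ref{lemma:second-moment-bound}, when summing over a block.

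\textbf{Step 4 (main obstacle: changing the restriction from $\Tilde{\mathcal{X}}_t$ to $\Tilde{\mathcal{X}}_{t+1}$).} The statement's second KL term uses $\boldsymbol{p}^{t+1}_{\mid\Tilde{\mathcal{X}}_{t+1}}$, which is what makes the terms telescope. For $\boldsymbol{v}_t$ supported on $\Tilde{\mathcal{X}}_t$ we have
\[
\mathrm{KL}(\boldsymbol{v}_t\|\boldsymbol{p}_{\mid S})=\mathrm{KL}(\boldsymbol{v}_t\|\boldsymbol{p})+\log Z_S ,
\]
so the two restrictions differ by $\log(Z_{\Tilde{\mathcal{X}}_t}/Z_{\Tilde{\mathcal{X}}_{t+1}})$ when $\boldsymbol{v}_t$ is supported in both sets, and the term is infinite otherwise. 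My first attempt will be to take the expectation over the independent uniform draw of $\Tilde{\mathcal{X}}_{t+1}$. By exchangeability of uniform $k$-subsets, $\EE\log Z_{\Tilde{\mathcal{X}}_{t+1}}$ and $\EE\log Z_{\Tilde{\mathcal{X}}_t}$ are comparable. Any residual mismatch would then be absorbed into the $\frac{\eta s^2}{2}\sum_i\boldsymbol{p}^t(i\mid\Tilde{\mathcal{X}}_t)\widehat{L}_{t,i}^2$ slack. If that fails, I will instead require $\boldsymbol{v}_t$ to be the frozen smoothed comparator $\boldsymbol{q}^\tau_t$ and bound the discrepancy through the lower bound $\boldsymbol{p}\ge\alpha$, which gives $\log(1/(k\alpha))$. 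I expect this step to be the delicate one.
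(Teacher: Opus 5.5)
Your Steps~1--2 are the paper's proof. The paper fixes $C=\Tilde{\mathcal{X}}_t$, renormalizes $p_i e^{-\eta\widehat{L}_i}$ over the \emph{same} set $C$, and writes $\mathrm{KL}(\boldsymbol{v}\|\boldsymbol{q}^+)-\mathrm{KL}(\boldsymbol{v}\|\boldsymbol{q})=\log(Z^+/Z)+\eta\langle\boldsymbol{v},\widehat{\boldsymbol{L}}\rangle$. It then bounds $\log(Z^+/Z)$ with the same two elementary inequalities you use. What this establishes is a bound with $\widehat{\boldsymbol{L}}_t$ on the left, $\boldsymbol{p}^{t+1}_{\mid\Tilde{\mathcal{X}}_t}$ in the second KL term, and $\eta/2$ in front of the quadratic term. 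The paper then says ``substituting the original notation back'' and never treats the passage to $\boldsymbol{l}_t$, the passage to $\Tilde{\mathcal{X}}_{t+1}$, or the $\alpha$-floor. Your Step~3 is the right repair for the first of these. Pathwise, the $\boldsymbol{l}_t$ version fails even with the same restriction: if every sampled arm has zero loss, then $\widehat{\boldsymbol{L}}_t=0$ and the right-hand side vanishes, while $\langle\boldsymbol{p}^t_{\mid\Tilde{\mathcal{X}}_t}-\boldsymbol{v}_t,\boldsymbol{l}_t\rangle$ can be positive. The factor $s^2$ is pure slack, since $\eta/2\le\eta s^2/2$, so your per-arm comparison of $l_{t,i}$ with $s\widehat{L}_{t,i}$ is not needed.

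You rightly flag Step~4 as the crux, and it is a genuine gap: neither repair can work, because the inequality as stated, with $\boldsymbol{p}^{t+1}_{\mid\Tilde{\mathcal{X}}_{t+1}}$, is false.
\begin{itemize}
\item \textbf{Counterexample.} Take any $\boldsymbol{v}_t$ whose support is not contained in $\Tilde{\mathcal{X}}_{t+1}$, e.g.\ $\boldsymbol{e}_m$ with $m\in\Tilde{\mathcal{X}}_t\setminus\Tilde{\mathcal{X}}_{t+1}$. Then $\mathrm{KL}(\boldsymbol{v}_t\|\boldsymbol{p}^{t+1}_{\mid\Tilde{\mathcal{X}}_{t+1}})=+\infty$ while every other term is finite, so the right-hand side is $-\infty$.
\item \textbf{Not an edge case.} The downstream comparators $\boldsymbol{q}_t,\boldsymbol{q}^\tau_t$ put mass at least $\alpha$ on every arm of $\Tilde{\mathcal{X}}_t$. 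For them the failure occurs whenever $\Tilde{\mathcal{X}}_{t+1}\neq\Tilde{\mathcal{X}}_t$, which has probability $1-1/\binom{K}{k}$ for an independent uniform $k$-subset. So the right-hand side is $-\infty$ in expectation too, and your conditional-expectation reading does not save it.
\item \textbf{Your repairs.} No exchangeability argument about $\EE\log Z$ touches an infinite term. An $O(\eta)$ slack cannot absorb it, nor the $\frac{1}{\eta}\log$-ratio of normalizers left over in the cases where the KL is finite. The $\log(1/(k\alpha))$ fallback does not apply either, since the KL is infinite rather than merely large.
\item \textbf{Even a finite fallback fails.} Paying $\log(1/(k\alpha))/\eta$ per round would destroy the telescoping the lemma exists to provide, costing $L\log(1/(k\alpha))/\eta$ per block.
\end{itemize}
The provable statement is the same-set one your Step~2 already gives, read in conditional expectation when $\boldsymbol{l}_t$ is on the left. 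The change of restriction between rounds must be handled where the one-step bounds are summed, in Lemma~\ref{lemma:unopt-regret}, not inside this lemma; one option is a potential that telescopes under varying availability sets, as in sleeping-experts analyses.

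A smaller point: your Step~1 handling of the $\alpha$-floor is not right as written. Comparators in $\Delta_{\Tilde{\mathcal{X}}_t}$ vanish off $\Tilde{\mathcal{X}}_t$, so they do not lie in a globally clipped set. Also, after clipping, the restricted update is no longer proportional to $\boldsymbol{p}^t(i\mid\Tilde{\mathcal{X}}_t)e^{-\eta\widehat{L}_{t,i}}$. The paper's proof simply sidesteps this by analyzing the unclipped step.
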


\begin{proof}
For the simplicity of the proof, fix $t$. Write $C = \Tilde{\mathcal{X}}_t$. Let $p_i = \boldsymbol{p}^t_i$, $Z = Z_t$, $q_i = \boldsymbol{p}^t(i \mid \Tilde{\mathcal{X}}_t)$, and $\widehat{L}_i = \widehat{L}_{t, i}$. Write the updated weights as $p_i^+ = p_i \exp(-\eta\widehat{L}_i)$ and the updated active sum as $Z^+ = \sum_{i \in C} p^+$. Let $q_i^+ = p_i^+ / Z^+$. We denote the vector form of the aforementioned quantities in bold. e.g. $\boldsymbol{p}(i) = p_i$.

Note
\begin{equation}
\log \frac{q_i}{q_i^+} = \log \frac{p_i / Z}{p_i \exp(-\eta \widehat{L}_i) / Z^+} = \log \frac{Z^+}{Z} + \eta \widehat{L}_i
\end{equation}

Now,
\begin{equation}
\begin{aligned}
\mathrm{KL}(\boldsymbol{v} \| \boldsymbol{q}^+) - \mathrm{KL}(\boldsymbol{v} \| \boldsymbol{q}) & = \sum_{i \in C} v_i \log \frac{v_i}{q^+_i} - \sum_{i \in C} v_i \log \frac{v_i}{q_i} \\
& = \sum_{i \in C} v_i \log \frac{q_i}{q^+_i} \\
& = \log \frac{Z^+}{Z} + \eta \left\langle \boldsymbol{v}, \widehat{\boldsymbol{L}} \right\rangle
\end{aligned}
\end{equation}
Rearranging we have
\begin{equation} \label{eqn:subset-ineq-1}
\eta \left\langle \boldsymbol{v}, \widehat{\boldsymbol{L}} \right\rangle = \mathrm{KL}(\boldsymbol{v} \| \boldsymbol{q}^+) - \mathrm{KL}(\boldsymbol{v} \| \boldsymbol{q}) - \log \frac{Z^+}{Z}
\end{equation}

Next, apply the fact that $e^{-x} \leq 1 - x + x^2$ for $x \geq 0$ and taking logs, we have
\begin{equation} \label{eqn:subset-ineq-2}
\begin{aligned}
\log \frac{Z^+}{Z} & \leq \log\left( \sum_{i \in C} q_i \left( 1 - \eta \widehat{L}_i + \frac{\eta^2}{2}\widehat{L}_i^2 \right) \right) \\
& = \log\left( 1 - \eta \left\langle \boldsymbol{q}, \widehat{\boldsymbol{L}} \right\rangle + \frac{\eta^2}{2} \sum_{i \in C} q_i \widehat{L}_i^2\right) \\
& \leq -\eta \left\langle\boldsymbol{q}, \widehat{\boldsymbol{L}} \right\rangle + \frac{\eta^2}{2} \sum_{i \in C} q_i \widehat{L}_i^2
\end{aligned}
\end{equation}
where the last inequality applies $\log(1 + x) \leq x$ for all $x > -1$.

Now plug (\ref{eqn:subset-ineq-2}) into (\ref{eqn:subset-ineq-1}),
\begin{equation}
\eta \left\langle \boldsymbol{v}, \hat{\boldsymbol{L}} \right\rangle \geq \mathrm{KL}(\boldsymbol{v} \| \boldsymbol{q}^+) - \mathrm{KL}(\boldsymbol{v} \| \boldsymbol{q}) + \eta \left\langle\boldsymbol{q}, \hat{\boldsymbol{y}} \right\rangle + \frac{\eta^2}{2} \sum_{i \in C} q_i \widehat{L}_i^2
\end{equation}
Rearranging the inequality and dividing by $\eta$, we get
\begin{equation}
\langle \boldsymbol{q} - \boldsymbol{v}, \mathbf{\widehat{L}} \rangle \leq \frac{1}{\eta}(\mathrm{KL}(\boldsymbol{v} \| \boldsymbol{q}) - \mathrm{KL}(\boldsymbol{v} \| \boldsymbol{q})) + \frac{\eta}{2} \sum_{i \in C} q_i \widehat{L}_i^2
\end{equation}
Substituting the original notation back, we recover the claim.
\end{proof}

\subsection{Reduction to Blockwise Fixed Arm}

The next two lemmas connect the one-step bound to a blockwise analysis under non-stationarity. Lemma~\ref{lemma:block-per-turn-gap} controls the discrepancy between the per-round best available arm and a single arm fixed over a block, showing that this gap is governed by the cumulative variation within the block. Lemma~\ref{lemma:unopt-regret} then combines this control with the one-step OMD bound to obtain a regret bound against a fixed comparator over the block. Together, these results allow the per-round inequalities to be aggregated while isolating the effect of non-stationarity.

\begin{lemma} \label{lemma:block-per-turn-gap}
Fix a block $\mathcal{B} = \{t_0, t_0 + 1, \dots, t_0 + L - 1\}$. Dentoe the blockwise best arm as
\begin{equation}
m_\mathcal{B} \in \arg\min_{i \in [K]} \sum_{i \in \mathcal{B}} \widehat{L}_{t, i} \mathbb{I}\{i \in \Tilde{\mathcal{X}}_t \}
\end{equation}
Let $\Delta_t$ be as defined in Theorem~\ref{thm:regret-simplified}. Then,
\begin{equation}
\sum_{t \in \mathcal{B}} \left(\widehat{L}_{t, m_\mathcal{B}} \mathbb{I}\{m_\mathcal{B} \in C_t\} - \widehat{L}_{t, m_t} \right) \leq L \sum_{t \in B \setminus \{t_0\}} \Delta_t
\end{equation}
\end{lemma}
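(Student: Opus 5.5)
The plan is to compare the block-optimal arm $m_\mathcal{B}$ against a cleverly chosen fixed arm, and then control that fixed arm's per-round gap to the per-round best available arm $m_t$ through the drift $\Delta_t$. The one step that is not routine is getting from the estimates $\widehat{L}_{t,i}$ to the true losses $l_{t,i}$. The drift only controls true losses, so this step will be done in conditional expectation, using Lemma~\ref{lemma:unbiasedness}.

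\textbf{Step 1 (use optimality of $m_\mathcal{B}$).} By definition, $m_\mathcal{B}$ minimizes the block sum $\sum_{t\in\mathcal{B}} \widehat{L}_{t,i}\mathbb{I}\{i\in\Tilde{\mathcal{X}}_t\}$ over $i\in[K]$. Hence, for any fixed arm $j$,
\[
\sum_{t\in\mathcal{B}} \widehat{L}_{t,m_\mathcal{B}}\mathbb{I}\{m_\mathcal{B}\in\Tilde{\mathcal{X}}_t\} \le \sum_{t\in\mathcal{B}} \widehat{L}_{t,j}\mathbb{I}\{j\in\Tilde{\mathcal{X}}_t\}.
\]
I take $j^\star\in\arg\min_{i\in[K]} l_{t_0,i}$, the best arm at the start of the block over \emph{all} $K$ arms, not only the available ones. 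This choice guarantees $l_{t_0,j^\star}\le l_{t_0,m_t}$ for every $t$, whatever the availability set turns out to be.

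\textbf{Step 2 (pass to true losses).} Condition on $\Tilde{\mathcal{X}}_t$ and apply Lemma~\ref{lemma:unbiasedness}. The right-hand side of Step 1 and the term $\widehat{L}_{t,m_t}$ are replaced by $l_{t,j^\star}\mathbb{I}\{j^\star\in\Tilde{\mathcal{X}}_t\}$ and $l_{t,m_t}$. The arm $m_t$ is $\Tilde{\mathcal{X}}_t$-measurable, so unbiasedness applies to it directly. The arm $j^\star$ depends only on $l_{t_0}$, so it is deterministic given the loss sequence.

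I expect this to be the main obstacle. Read literally as a pathwise statement about $\widehat{L}$, the inequality cannot hold, since an importance weight $1/\boldsymbol{p}^t(i\mid\Tilde{\mathcal{X}}_t)$ can be arbitrarily large. I will therefore prove it in the form in which it is consumed later: inside $\EE[\cdot]$ in the block regret decomposition \eqref{eqn:smoothed-reg-decomp}. Concretely, I will show
\[
\EE\Big[\sum_{t\in\mathcal{B}}\big(\widehat{L}_{t,j^\star}\mathbb{I}\{j^\star\in\Tilde{\mathcal{X}}_t\} - \widehat{L}_{t,m_t}\big)\Big] = \EE\Big[\sum_{t\in\mathcal{B}}\big(l_{t,j^\star}\mathbb{I}\{j^\star\in\Tilde{\mathcal{X}}_t\}-l_{t,m_t}\big)\Big].
\]
Combined with Step 1, this bounds the expected left-hand side of the lemma.

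\textbf{Step 3 (per-round drift bound).} Fix $t\in\mathcal{B}$ and split into two cases.
\begin{itemize}
\item If $j^\star\notin\Tilde{\mathcal{X}}_t$, the round's term is $-l_{t,m_t}\le 0$, because losses lie in $[0,1]$.
\item If $j^\star\in\Tilde{\mathcal{X}}_t$, telescope with $|l_{t,i}-l_{t_0,i}|\le\sum_{r=t_0+1}^{t}\Delta_r$, which holds for every arm $i$. This gives
\[
l_{t,j^\star}-l_{t,m_t}\le (l_{t_0,j^\star}-l_{t_0,m_t}) + 2\sum_{r=t_0+1}^{t}\Delta_r \le 2\sum_{r=t_0+1}^{t}\Delta_r .
\]
\end{itemize}

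\textbf{Step 4 (sum over the block).} Summing Step 3 over $t\in\mathcal{B}$ and exchanging the order of summation gives
\[
2\sum_{r\in\mathcal{B}\setminus\{t_0\}}(t_0+L-r)\,\Delta_r\;\le\;2(L-1)\sum_{t\in\mathcal{B}\setminus\{t_0\}}\Delta_t .
\]
This matches the stated $L\sum\Delta_t$ only up to an absolute constant factor of at most $2$. To recover the stated constant exactly, I would first try replacing the reference time $t_0$ by the block midpoint, which roughly halves the telescoping lengths. If that fails, I would record the bound as $2L\sum\Delta_t$. The constant is immaterial downstream, because only the order $L\,V$ enters the block-length optimization that yields $O(T^{2/3}V_T^{1/3})$.
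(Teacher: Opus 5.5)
Your argument is correct for the only version of this lemma that can hold, and it follows a different, sounder route than the paper's. The paper argues pathwise. It applies the Lipschitz property of $\min$ directly to the estimates, asserting $|\widehat{L}_{t,i}-\widehat{L}_{t-1,i}|\le\Delta_t$, and then uses minimality of $m_\mathcal{B}$ against the frozen per-round arm $m_t^{t_0}$. Neither step is valid:
\begin{itemize}
\item $\Delta_t$ controls only the true losses, not the estimates.
\item $m_t^{t_0}$ moves with $\tilde{\mathcal{X}}_t$, so optimality of $m_\mathcal{B}$ among fixed arms says nothing about it.
\item Chaining the paper's two telescoping bounds yields two copies of $\sum_\tau\Delta_\tau$, not one.
\end{itemize}
You are right that the literal pathwise claim is false, but the cause is not large importance weights. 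Take $K=k=2$, $s=1$, $L=2$, $l_{t_0}=(\tfrac12,\tfrac12+\epsilon)$, $l_{t_0+1}=(\tfrac12+\epsilon,\tfrac12)$, and the positive-probability draws $a_{t_0}=2$, $a_{t_0+1}=1$. Then both $\widehat{L}_{t,m_t}$ vanish, while each arm has estimated block sum at least $\tfrac12$. So the left side is at least $\tfrac12$ against a right side of $2\epsilon$, even though every weight is at most $1/\alpha$. Your fixed comparator $j^\star$ with the availability case split repairs the second flaw, and passing to true losses in expectation repairs the first.

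\textbf{Conditioning in Step 2.} Since $\boldsymbol{p}^t$ depends on earlier draws, condition on the history $\mathcal{F}_{t-1}$ together with $\tilde{\mathcal{X}}_t$, not on $\tilde{\mathcal{X}}_t$ alone. Assume oblivious (or predictable) losses, so that $j^\star$ and $m_t$ are measurable with respect to that $\sigma$-field. The tower property then gives Step 2.

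\textbf{The constant.} Your midpoint idea does succeed. Take reference time $\tau=t_0+\lfloor (L-1)/2\rfloor$ and $j^\star\in\arg\min_{i\in[K]} l_{\tau,i}$. The same case analysis weights $\Delta_r$ by $2(r-t_0)$ for $r\le\tau$ and by $2(t_0+L-r)$ for $r>\tau$. Both are at most $2\lceil (L-1)/2\rceil\le L$, so you recover exactly $L\sum_{t\in\mathcal{B}\setminus\{t_0\}}\Delta_t$ in expectation.

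\textbf{Downstream use.} The proof of Theorem~\ref{thm:regret-simplified} actually invokes this lemma for the true-loss gap $\sum_{t}(l_{t,m_\mathcal{B}}\mathbb{I}\{m_\mathcal{B}\in\tilde{\mathcal{X}}_t\}-l_{t,m_t})$. No drift bound can control that gap when $m_\mathcal{B}$ is chosen from estimates: an unsampled arm has estimated block sum $0$ but may have true loss near $1$. Your Steps 3--4 bound this gap pathwise with $j^\star$ in place of $m_\mathcal{B}$. Using the deterministic $j^\star$ as the comparator in Lemma~\ref{lemma:unopt-regret} as well is the natural way to make the block decomposition consistent.
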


\begin{proof}
Let $b_t(C) = \min_{i\in C} \widehat{L}_{i, t}$. By the Lipschitz property of $\min$, it satisfies
\begin{equation} \label{eqn:block-variation-bound-1}
\left| b_t(C) - b_{t-1}(C) \right| \leq \max_{i \in C} \left| \widehat{L}_{t, i} - \widehat{L}_{t-1, i} \right| \leq \Delta_t
\end{equation}
By telescoping (\ref{eqn:block-variation-bound-1}) from $t_0$ to $t$,
\begin{equation}
b_{t_0}(C_t) \leq b_t(C_t) + \sum_{\tau=t_0+1}^t \Delta_\tau = m_t + \sum_{\tau=t_0+1}^t \Delta_\tau
\end{equation}
By definition of $m_\mathcal{B}$ as the best static arm for the block
\begin{equation}
\sum_{t \in \mathcal{B}} \widehat{L}_{t, m_\mathcal{B}} \mathbb{I}\{b_\mathcal{B} \in \Tilde{\mathcal{X}}_t\} \leq \sum_{t \in \mathcal{B}} \widehat{L}_{t, m_t^{t_0}} \mathbb{I}\{m_t^{t_0} \in \Tilde{\mathcal{X}}_t\}
\end{equation}
Also note that by definition,
\begin{equation} \label{eqn:block-variation-bound-2}
\widehat{L}_{t, m_t^{t_0}} \leq \widehat{L}_{t_0, m_t^{t_0}} + \sum_{\tau=t_0 + 1}^t \Delta_\tau = b_{t_0}(\Tilde{\mathcal{X}}_t) + \sum_{\tau=t_0+1}^t \Delta_\tau
\end{equation}
Combine (\ref{eqn:block-variation-bound-1}) and (\ref{eqn:block-variation-bound-2}),
\begin{equation}
\sum_{t \in \mathcal{B}} \widehat{L}_{t, m_\mathcal{B}} \mathbb{I}\{m_\mathcal{B} \in \Tilde{\mathcal{X}}_t\} \leq \sum_{t \in \mathcal{B}} \left(\widehat{L}_{t, m_t} + \sum_{\tau=t_0+1}^t \Delta_\tau \right)
\end{equation}
Rearranging, we get
\begin{equation}
\sum_{t \in \mathcal{B}} \widehat{L}_{t, m_\mathcal{B}} \mathbb{I}\{m_\mathcal{B} \in \Tilde{\mathcal{X}}_t\} - \widehat{L}_{t, m_t} \leq L \sum_{t \in \mathcal{B} \setminus \{t_0\}} \Delta_t
\end{equation}
\end{proof}

\begin{lemma} \label{lemma:unopt-regret}
Fix a block $\mathcal{B} = \{t_0, t_0 + 1, \dots, t_0 + L - 1\}$. Run Algorithm \ref{alg:somd} with $\eta > 0$ starting from time $t_0$. Let $\boldsymbol{q}_t \in \Delta_{\Tilde{\mathcal{X}}_t}$ be the played distribution at each round $t \in \mathcal{B}$ and let $\widehat{\boldsymbol{L}}_t$ be an unbiased estimator of the masked loss vector
\begin{equation*}
\EE[\widehat{\boldsymbol{L}}_t \mid \Tilde{\mathcal{X}}_t] = \boldsymbol{L}_t^{\Tilde{\mathcal{X}}_t}
\end{equation*}
and that the second moment is bounded by
\begin{equation}
\EE\left[ \sum_{i \in \Tilde{\mathcal{X}}_t} \boldsymbol{q}_{t, i} \widehat{L}_{t, i}^2 \right] \leq \frac{k}{s}
\end{equation}
for all $t$. Then,
\begin{equation} \label{eqn:unopt-regret-statement}
\EE \left[ \sum_{t \in \mathcal{B}} \left\langle \boldsymbol{q}_t, \boldsymbol{l}_t^{C_t} \right\rangle - \boldsymbol{l}_{t, m_\mathcal{B}} \mathbb{I}\{m_\mathcal{B} \in \Tilde{\mathcal{X}}_t\} \right] \leq \frac{\log(1/\alpha)}{\eta} + \frac{s k \eta L}{2}
\end{equation}
\end{lemma}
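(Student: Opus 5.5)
The plan is to sum the one-step inequality of Lemma~\ref{lemma:subset-omd-ineq} over the block, telescope the KL terms, and control the remaining terms by taking conditional expectations with Lemmas~\ref{lemma:unbiasedness} and~\ref{lemma:second-moment-bound}.

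For the comparator at each round $t\in\mathcal{B}$, I take $\boldsymbol{v}_t$ to be the point mass on the fixed arm $m_\mathcal{B}$ whenever $m_\mathcal{B}\in\tilde{\mathcal{X}}_t$. To keep it inside the clipped set $\Delta_\alpha$, it is mixed with $\alpha$-mass on the available arms, exactly as in the definition of $\boldsymbol{q}_t$. When $m_\mathcal{B}\notin\tilde{\mathcal{X}}_t$, the indicator term on the right of \eqref{eqn:unopt-regret-statement} vanishes, and I instead use the current played distribution as the comparator. That round then contributes nonpositively, up to the variance term.

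Applying Lemma~\ref{lemma:subset-omd-ineq} with the estimator $\widehat{\boldsymbol{L}}_t$ in place of $\boldsymbol{l}_t$ gives the pathwise inequality
\[
\langle \boldsymbol{q}_t-\boldsymbol{v}_t,\widehat{\boldsymbol{L}}_t\rangle \le \tfrac{1}{\eta}\bigl(\mathrm{KL}(\boldsymbol{v}\|\boldsymbol{q}_t)-\mathrm{KL}(\boldsymbol{v}\|\boldsymbol{q}_{t+1})\bigr)+\tfrac{\eta}{2}\textstyle\sum_i \boldsymbol{q}_{t,i}\widehat{L}_{t,i}^2 .
\]
I then take conditional expectations given $\tilde{\mathcal{X}}_t$ and the past. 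Since $\boldsymbol{q}_t$ and $\boldsymbol{v}_t$ are measurable with respect to this information, Lemma~\ref{lemma:unbiasedness} turns the left side into $\langle \boldsymbol{q}_t,\boldsymbol{l}_t^{\tilde{\mathcal{X}}_t}\rangle - \boldsymbol{l}_{t,m_\mathcal{B}}\mathbb{I}\{m_\mathcal{B}\in\tilde{\mathcal{X}}_t\}$, up to the $O(\alpha)$ mixing error. Lemma~\ref{lemma:second-moment-bound} bounds the variance term by $\eta k/(2s)$ per round. After summing over the $L$ rounds, this is at most $skL\eta/2$, since $s\ge1$ and the lemma's constant is stated loosely.

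The KL terms should telescope. At the restart time $t_0$, the distribution is uniform and clipped at $\alpha$, so the initial divergence satisfies $\mathrm{KL}(\boldsymbol{v}\|\boldsymbol{q}_{t_0})\le\log(1/\alpha)$, because every coordinate of the played distribution is at least $\alpha$. The final KL term is nonnegative and can be dropped. This produces $\log(1/\alpha)/\eta$.

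The main obstacle is the telescoping step. The renormalizations are on different subsets $\tilde{\mathcal{X}}_t$ and $\tilde{\mathcal{X}}_{t+1}$, and the comparator changes when $m_\mathcal{B}$ enters or leaves availability, so $\mathrm{KL}(\boldsymbol{v}\|\boldsymbol{q}^+_t)$ does not literally equal the next round's starting KL. I would first telescope on the global unnormalized weights $\boldsymbol{p}_t$, using the potential $\log(1/\boldsymbol{p}_{t,m_\mathcal{B}})$. Restricting to $\tilde{\mathcal{X}}_t$ only changes the normalizer $Z_t$, and the $\log Z$ terms can be absorbed using $\log(Z^+/Z)$ as in the proof of Lemma~\ref{lemma:subset-omd-ineq}. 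The $\alpha$-clipping projection can only decrease the distance to comparators in $\Delta_\alpha$, by the generalized Pythagorean inequality for Bregman projections. Together, these keep the total potential drop bounded by $\log(1/\alpha)$.
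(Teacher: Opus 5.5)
Your outline is the paper's own: sum Lemma~\ref{lemma:subset-omd-ineq} over the block, telescope, bound the initial divergence by $\log(1/\alpha)$, and invoke Lemma~\ref{lemma:second-moment-bound}. You also correctly identify the telescoping as the crux; the paper's proof simply asserts it, treating $\boldsymbol v_t$ as constant and the renormalization on $\tilde{\mathcal X}_t$ as the next played distribution. But your repair fails at exactly that step.

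\emph{Unavailable rounds.} On a round with $m_\mathcal{B}\notin\tilde{\mathcal X}_t$, the summand in \eqref{eqn:unopt-regret-statement} is $\langle\boldsymbol q_t,\boldsymbol l_t^{\tilde{\mathcal X}_t}\rangle\in[0,1]$, which is the learner's entire loss. Choosing $\boldsymbol v_t=\boldsymbol q_t$ makes the left side of Lemma~\ref{lemma:subset-omd-ineq} identically zero, so nothing bounds that summand. The round does not contribute nonpositively.

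\emph{Global potential.} This does not rescue the argument. Fix an arm $m$, write $\boldsymbol p(S)=\sum_{i\in S}\boldsymbol p_i$, $Z_t=\boldsymbol p_t(\tilde{\mathcal X}_t)$ and $\Phi_t=\log(1/\boldsymbol p_{t,m})$, and recall $\widehat L_{t,m}=0$ when $m\notin\tilde{\mathcal X}_t$. For the unclipped update,
\[
\begin{aligned}
\Phi_{t+1}-\Phi_t&=\eta\widehat L_{t,m}+\log\Bigl(1-Z_t+Z_t\sum_{i\in\tilde{\mathcal X}_t}\boldsymbol q_{t,i}e^{-\eta\widehat L_{t,i}}\Bigr)\\
&\le\eta\widehat L_{t,m}-\eta Z_t\langle\boldsymbol q_t,\widehat{\boldsymbol L}_t\rangle+\frac{\eta^2}{2}Z_t\sum_{i\in\tilde{\mathcal X}_t}\boldsymbol q_{t,i}\widehat L_{t,i}^2 .
\end{aligned}
\]
So telescoping $\Phi$ controls only $\sum_t\bigl(Z_t\langle\boldsymbol q_t,\widehat{\boldsymbol L}_t\rangle-\widehat L_{t,m}\bigr)$. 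The learner's loss carries the weight $Z_t$, whose conditional mean is $k/K$.

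The $\log Z$ terms you intend to absorb are exactly the missing part. For $m\in\tilde{\mathcal X}_t$, with $\boldsymbol q_t^+$ the renormalization on $\tilde{\mathcal X}_t$ used in Lemma~\ref{lemma:subset-omd-ineq}, one has
\[
\log(1/\boldsymbol q_{t,m})-\log(1/\boldsymbol q^+_{t,m})=\Phi_t-\Phi_{t+1}+\log\bigl(Z_t/\boldsymbol p_{t+1}(\tilde{\mathcal X}_t)\bigr).
\]
The last term is nonnegative and at least $(1-Z_t)\bigl(\eta\langle\boldsymbol q_t,\widehat{\boldsymbol L}_t\rangle-\tfrac{\eta^2}{2}\sum_i\boldsymbol q_{t,i}\widehat L_{t,i}^2\bigr)$. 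After dividing by $\eta$, these terms sum to order $L$.

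No bookkeeping can fix this, because the inequality as stated is false whenever $k<K$. Take $s=1$ and $l_{t,i}\equiv 1$; the hypotheses hold, with the second moment exactly $k=k/s$. Then $\langle\boldsymbol q_t,\boldsymbol l_t^{\tilde{\mathcal X}_t}\rangle=1$ every round. Meanwhile $m_\mathcal{B}$, however it is chosen, is available on at most $\max_i\#\{t\in\mathcal{B}:i\in\tilde{\mathcal X}_t\}$ rounds. This has expectation at most $Lk/K+O(\sqrt{L\log K})$, and exactly $Lk/K$ for a fixed arm. So the left side is at least $L(1-k/K)-O(\sqrt{L\log K})$. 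With $\eta=\sqrt{2\log(1/\alpha)/(kL)}$ the right side is $\sqrt{2kL\log(1/\alpha)}$, which is smaller for large $L$.

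Separately, your measurability claim fails for the $m_\mathcal{B}$ of Lemma~\ref{lemma:block-per-turn-gap}. That arm is an argmin of block-summed estimates, so it depends on $\widehat{\boldsymbol L}_t$ and on later rounds. Hence Lemma~\ref{lemma:unbiasedness} cannot be applied to $\langle\boldsymbol v_t,\widehat{\boldsymbol L}_t\rangle$.

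What your potential argument does prove, for a fixed $m$, is a bound of the same shape on $\sum_t\bigl(\langle\boldsymbol p_t,\boldsymbol l_t^{\tilde{\mathcal X}_t}\rangle-l_{t,m}\mathbb{I}\{m\in\tilde{\mathcal X}_t\}\bigr)$. Here $\langle\boldsymbol p_t,\boldsymbol l_t^{\tilde{\mathcal X}_t}\rangle=Z_t\langle\boldsymbol q_t,\boldsymbol l_t^{\tilde{\mathcal X}_t}\rangle$ is the loss of the global weights, not of the played distribution. To get a statement about the played distribution you must change the regret notion. One option is the sleeping-experts regret $\sum_t\mathbb{I}\{m\in\tilde{\mathcal X}_t\}\bigl(\langle\boldsymbol q_t,\boldsymbol l_t^{\tilde{\mathcal X}_t}\rangle-l_{t,m}\bigr)$, together with an update that preserves the total mass of the available arms so that the normalizers cancel.
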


\begin{proof}
For each $t \in \mathcal{B}$ and any comparators $\boldsymbol{v}_t \in \Delta_{\Tilde{\mathcal{X}}_t}$, apply Lemma \ref{lemma:subset-omd-ineq}
\begin{equation} \label{eqn:unopt-regret-1}
\sum_{t \in \mathcal{B}} \left\langle \boldsymbol{q}_t - \boldsymbol{v}_t, \boldsymbol{L}_t \right\rangle \leq \frac{1}{\eta} \sum_{t \in \mathcal{B}} \left( \mathrm{KL}(\boldsymbol{v}_t \| \boldsymbol{q}_t) - \mathrm{KL}(\boldsymbol{v}_t \| \boldsymbol{q}_{t+1}) \right) + \frac{\eta s^2}{2} \sum_{t \in \mathcal{B}} \sum_{i \in \Tilde{\mathcal{X}}_t} \boldsymbol{q}_{t, i} \widehat{L}_{t, i}^2
\end{equation}
By the unbiasedness of $\widehat{\mathbf{y}}_t$,
\begin{equation*}
\EE[ \left\langle \boldsymbol{q}_t - \boldsymbol{v}_t, \boldsymbol{l}_t \right\rangle ] = \EE\left[ \left\langle \boldsymbol{q}_t - \boldsymbol{v}_t, \boldsymbol{l}_t^{\Tilde{\mathcal{X}}_t} \right\rangle \right]
\end{equation*}
For the KL terms, we have
\begin{equation} \label{eqn:unopt-regret-2}
\begin{aligned}
\sum_{t \in \mathcal{B}} \left( \mathrm{KL}(\boldsymbol{v}_t \| \boldsymbol{q}_t) - \mathrm{KL}(\boldsymbol{v}_t \| \boldsymbol{q}_{t+1}) \right) & = \mathrm{KL}(\boldsymbol{v}_t \| \boldsymbol{q}_{t_0}) - \mathrm{KL}(\boldsymbol{v}_t \| \boldsymbol{q}_{t_0 + L - 1}) \\
& \leq \log(1/\alpha)
\end{aligned}
\end{equation}
where the first equality is a result of telescoping with respect to $t$ and the second due to the fact that $\mathbf{q}_{t_0}$ is restarted from uniform. 

For the second term on the RHS of (\ref{eqn:unopt-regret-1}), we invoke Lemma \ref{lemma:second-moment-bound}. Taking expectation on both sides, we get (\ref{eqn:unopt-regret-statement}).
\end{proof}

\subsection{Proof of Theorem~\ref{thm:regret-simplified}}
Note that
\begin{equation}
\sum_{t \in \mathcal{B}} \left\langle \boldsymbol{q}_t, \boldsymbol{l}_t^{\Tilde{\mathcal{X}}_t} \right\rangle - \boldsymbol{l}_{t, m_t} = \left( \sum_{t \in \mathcal{B}} \left\langle \boldsymbol{q}_t, \boldsymbol{l}_t^{\Tilde{\mathcal{X}}_t} \right\rangle - \boldsymbol{l}_{t, m_\mathcal{B}} \mathbb{I}\{ m_\mathcal{B} \in \Tilde{\mathcal{X}}_t \} \right) + \left( \sum_{t \in \mathcal{B}} \boldsymbol{l}_{t, m_\mathcal{B}} \mathbb{I}\{ m_\mathcal{B} \in \Tilde{\mathcal{X}}_t \} - \boldsymbol{l}_{t, m_t} \right)
\end{equation}
We bound the first term using Lemma \ref{lemma:unopt-regret} and the second term using \ref{lemma:block-per-turn-gap}. Taking the expectation and summing over blocks, we get
\begin{equation}
\mathrm{Reg}_n^{\mathrm{BA}} = \EE\left[ \sum_{i=1}^B \sum_{t \in \mathcal{B}_i} \left\langle \boldsymbol{q}_t, \boldsymbol{l}_t^{\Tilde{\mathcal{X}}_t} \right\rangle - \boldsymbol{l}_{t, m_t} \right] \leq \frac{T}{L}\frac{\log(1/\alpha)}{\eta} + \frac{\eta sk}{2}T + LV_n
\end{equation}

Take 
\begin{align}
L^* 
&= \left( \frac{T}{V_n}\sqrt{\frac{s k \log(1/\alpha)}{2}} \right)^{2/3}, \\
\eta^*
&= 2^{2/3}\,(\log(1/\alpha))^{1/3}\,(s k)^{-2/3}
\left(\frac{V_n}{T}\right)^{1/3}.
\end{align}
we get
\begin{equation}
\mathrm{Reg}_n^{\mathrm{BA}}
\;\le\;
\frac{3}{2^{1/3}}\,
\big(T^{2} \, s k \, \log(1/\alpha)\, V_n \big)^{1/3}.
\end{equation}
\newpage

\section{Performance improvement contribution of each problem}
\label{sec:utility_contribution}

We provide further justification for the per-problem policy improvement contribution
defined in \cref{eq:per_problem_utility}. Intuitively, this quantity measures the marginal
contribution of including a training problem $\boldsymbol{x}$ in an actor update to the
overall improvement in policy performance under the evaluation distribution
$p_{\mathcal{X}}$.

This interpretation is exact for a broad class of \emph{tabular policies}, where the policy
parameters for different problems (or states) are independent. In such settings, updating
the policy using trajectories from a problem $\boldsymbol{x}$ affects only the conditional
distribution $\pi(\cdot \mid \boldsymbol{x})$ and leaves the policy unchanged on all other
problems.

\paragraph{Example: tabular REINFORCE.}
Consider tabular REINFORCE, where for each problem $\boldsymbol{x}$ the policy
$\pi(\cdot \mid \boldsymbol{x})$ is parameterized independently.
At iteration $t$, suppose we update the policy using rollouts collected \emph{only} from a
single problem $\boldsymbol{x}$.
By construction, this update modifies $\pi(\cdot \mid \boldsymbol{x})$ but does not change
$\pi(\cdot \mid \boldsymbol{x}')$ for any $\boldsymbol{x}' \neq \boldsymbol{x}$.

The overall performance objective is
\[
J(\pi)
=
\sum_{\boldsymbol{x}' \in \mathcal{X}}
p_{\mathcal{X}}(\boldsymbol{x}')
\,
\mathbb{E}_{\boldsymbol{y} \sim \pi(\cdot \mid \boldsymbol{x}')}
\bigl[
R(\boldsymbol{y} \mid \boldsymbol{x}')
\bigr].
\]
Since only the conditional policy at $\boldsymbol{x}$ is changed, the performance difference
$J(\pi^{t+1}) - J(\pi^t)$ depends solely on how the expected reward at $\boldsymbol{x}$
changes.

Applying the standard performance difference identity
\citep{kakade2002approximately} to the single-turn setting yields
\[
J(\pi^{t+1}) - J(\pi^t)
=
p_{\mathcal{X}}(\boldsymbol{x})
\,
\mathbb{E}_{\boldsymbol{y} \sim \pi^{t}(\cdot \mid \boldsymbol{x})}
\left[
\frac{\pi^{t+1}(\boldsymbol{y} \mid \boldsymbol{x})}
{\pi^{t}(\boldsymbol{y} \mid \boldsymbol{x})}
A_{\pi^t}(\boldsymbol{y} \mid \boldsymbol{x})
\right],
\]
which is exactly the per-problem utility $u^t_{\boldsymbol{x}}$ defined in
\cref{eq:per_problem_utility}.
Thus, in tabular REINFORCE, updating the policy on a single problem $\boldsymbol{x}$
produces an expected performance improvement of $u^t_{\boldsymbol{x}}$.

The same reasoning extends directly to batch updates: if the policy is updated using a
subset of problems $\mathcal{X}^t$, the total performance improvement decomposes additively
as $\sum_{\boldsymbol{x} \in \mathcal{X}^t} u^t_{\boldsymbol{x}}$, and including a problem
$\boldsymbol{x}$ in the update contributes exactly $u^t_{\boldsymbol{x}}$ to the expected
performance gain.

\paragraph{Other tabular methods.}
This exact additive interpretation applies equally to other tabular reinforcement learning
methods, including tabular policy gradient methods, tabular Q-learning, and SARSA, where
updates based on a problem or state affect only the corresponding local policy or value
parameters. In all such cases, the per-problem utility $u^t_{\boldsymbol{x}}$ captures the
true marginal contribution of training on $\boldsymbol{x}$.

\paragraph{Discussion: function approximation.}
In the presence of function approximation, updating the policy using trajectories from one
problem generally affects the policy on other problems as well, breaking the exact
additivity described above. Analyzing such cross-problem interference requires strong
assumptions on the structure of the function class and the optimization dynamics, and is
beyond the scope of this work.

Nevertheless, when policy updates are small—as is typical in modern RL post-training
algorithms such as PPO-style methods, GRPO, or GSPO—the per-problem utility
$u^t_{\boldsymbol{x}}$ remains a first-order approximation to the marginal contribution of
problem $\boldsymbol{x}$ to performance improvement. Our empirical results in
\cref{sec:main_results} indicate that this approximation is sufficiently accurate to drive
effective curriculum learning at scale.

\newpage

\newpage






\section{Experimental setup}
\label{sec:experimental_setup}

\subsection{Hyper-parameters}
\label{sec:hyperparameters}

Unless otherwise specified, all experiments use the same hyper-parameter configuration for \method. The complete set of hyper-parameters is summarized in Table~\ref{tab:hyperparams}. We briefly explain key parameters below, with a focus on those specific to the actor--curator framework.

\begin{table}[h]
\centering
\small
\begin{tabular}{ll}
\toprule
\textbf{Parameter} & \textbf{Value} \\
\midrule
\multicolumn{2}{l}{\textit{Model configuration}} \\
Curator model & Qwen3-0.6B \\
Actor KL loss & Disabled \\
Max problem length & 1024 tokens \\
Max solution length & 4096 tokens \\
\midrule
\multicolumn{2}{l}{\textit{Sampling and batch sizes}} \\
Candidate batch size $|\tilde{\mathcal{X}}^t|$ & 2048 \\
Training batch size $|\mathcal{X}^t|$ & 256 \\
Rollouts per problem $|\mathcal{Y}^t_{\boldsymbol{x}}|$ & 8 \\
\midrule
\multicolumn{2}{l}{\textit{Actor optimization}} \\
Actor temperature & 1.0 \\
Actor training top-$p$ & 1.0 \\
Actor validation top-$p$ & 0.7 \\
Actor learning rate & $1\times10^{-6}$ \\
Actor LR warmup ratio & 0.05 \\
Actor weight decay & 0.1 \\
Actor gradient clipping & 1.0 \\
Actor clip range $\rho_{\min}, \rho_{\max}$ & $[3\times10^{-4},\,4\times10^{-4}]$ \\
\midrule
\multicolumn{2}{l}{\textit{Curator optimization}} \\
Curator dormant steps & 20 \\
Curator warm-up steps & 5 \\
Curator temperature & 1.0 \\
Curator top-$p$ & 0.9 \\
Curator learning rate & $1\times10^{-6}$ \\
Curator PPO clip range $\rho_{\max}-\rho_{\min}$ & 0.2 \\
\bottomrule
\end{tabular}
\caption{Hyper-parameters used for \method across all experiments unless otherwise specified.}
\label{tab:hyperparams}
\end{table}

\paragraph{Candidate and training batch sizes.}
At each training iteration, a candidate batch $\tilde{\mathcal{X}}^t$ of size 2048 is first sampled from the proposal distribution $q$. The curator then reweights this candidate set and samples a smaller training batch $\mathcal{X}^t$ of size 256, which is used for actor rollouts and updates. This two-stage sampling scheme follows Section~3.5 and allows scalable curation over large problem banks.

\paragraph{Rollouts per problem.}
For each selected problem $x \in \mathcal{X}^t$, we generate $|\mathcal{Y}^t_x|=8$ on-policy rollouts from the current actor. These rollouts are used both for the actor update and for estimating per-problem policy improvement signals used to train the curator.

\paragraph{Curator dormant and warm-up steps.}
During the first \emph{curator dormant steps} (20 iterations), problems are sampled uniformly from the candidate batch, and curator outputs are ignored. This stabilizes early actor learning before meaningful policy-improvement estimates can be obtained. During the subsequent \emph{curator warm-up steps} (5 iterations), the curator begins to influence sampling, but its parameters are updated conservatively. After warm-up, the curator is fully active and trained online using bandit feedback.

\paragraph{Sampling prior.}
When \texttt{use sampling prior} is enabled, curator-assigned weights are multiplied by the proposal-induced marginal inclusion probability $q(x)$ before normalization. This encourages coverage of the full dataset and prevents the curator from collapsing onto a narrow subset of problems early in training, consistent with the two-stage unbiased estimator in Equation~(12).

\paragraph{Proximal curator clipping.}
Curator updates use the PPO-style clipped OSMD objective described in Section~3.6. The clipping range $\rho_{\min}, \rho_{\max}$ constrains the importance ratio between consecutive curator policies, stabilizing learning under function approximation. The additional clip range parameter controls the maximum allowed deviation between these bounds.

\subsection{Hardware}
\label{sec:hardware}

All experiments were conducted on NVIDIA A100 and H200 GPUs. Actor rollout generation and optimization dominate overall runtime; curator training introduces approximately 14\% additional wall-clock cost relative to uniform sampling, as discussed in Appendix~E.

\newpage
\section{Additional analysis}
\label{sec:analysis_plus}

\subsection{Overhead}
We found that \method adds about 9\% training wall time overhead. We present the overhead by dataset and model in \cref{tab:overhead}.
However, as shown in \cref{fig:efficiency-all}, this is relatively the compared to the efficiency gains. 

\begin{table}[h]
  \caption{\textbf{Average wall-time overhead percentage} by model and datasets, averaged over training steps 500 steps.}
  \label{tab:overhead}
  \vskip 0.15in
  \centering
    \begin{small}
      \begin{sc}
        \setlength{\tabcolsep}{4pt}
        \renewcommand{\arraystretch}{1.05}
        \begin{tabular}{ccccc}
          \toprule
           & \textbf{Countdown} & \textbf{Zebra} & \textbf{ARC-1D} & \textbf{MATH} \\
          \midrule
          Qwen2.5-3B-Base & 11.86 & 16.71 & 17.12 & 9.56\\
          \cmidrule(lr){2-5}
          Llama3.2-3B-IT & 9.34 & 13.63 & 19.40 & 9.82\\
          \bottomrule
        \end{tabular}
      \end{sc}
    \end{small}
  \vskip -0.1in
\end{table}

\newpage
\section{Additional Ablation} \label{appendix:ablation}

\paragraph{Candidate batch size $|\tilde{\mathcal{X}}|$.}
As shown in \cref{fig:ab_candidate_batch_size}, candidate sizes of $512$ and $2048$ yield similar final performance.
Larger batches (e.g., $8192$) lead to unstable training, likely due to reduced exploration and overfitting.

\begin{figure}[h]
    \centering
    \includegraphics[width=0.5\linewidth]{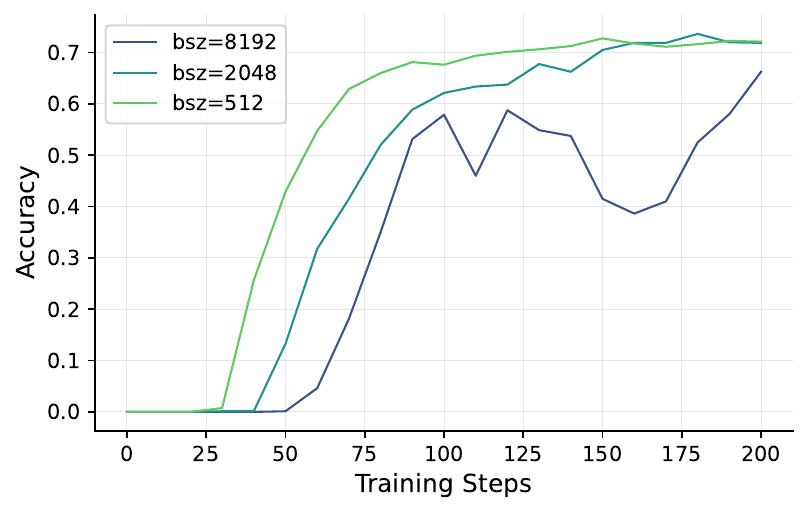}
    \caption{\textbf{Effect of candidate batch size on test performance}. While \texttt{bsz=2048} and \texttt{bsz=512} converge to similar performance at step 200, \texttt{bsz=8192} suffers from instability. Selected batch size is held constant at 256.}
    \label{fig:ab_candidate_batch_size}
\end{figure}
\newpage
\section{Main results (extended)}
\label{sec:main_results_plus}

This appendix provides extended empirical results complementing the main paper. We report additional quantitative comparisons across benchmarks, models, and curriculum learning methods, as well as detailed training dynamics over time. These results further substantiate the robustness, efficiency, and stability of \meth across diverse problem domains and model backbones.

\paragraph{Extended performance comparison.}
\Cref{tab:performance_full} reports peak validation performance within the first 100 training steps across all benchmarks and models. Compared to uniform sampling, heuristic curricula (SEC), and learning-based baselines (PCL), \meth consistently achieves higher peak performance on most benchmarks. The gains are particularly pronounced on harder subsets (e.g., Countdown-hard, Zebra-hard, ARC-hard, and AIME24), highlighting the effectiveness of directly optimizing for expected policy improvement when problem difficulty and utility are highly non-uniform. While performance on MATH500 is largely saturated for some model configurations, \meth remains competitive and avoids degradation relative to strong baselines.

\paragraph{Training dynamics and stability.}
\Cref{fig:testcurves-qwen25-3b,fig:testcurves-llama32-3b} visualize test performance as a function of training steps for Qwen2.5-3B-Base and Llama3.2-3B-it, respectively. Across benchmarks, \meth not only reaches higher peak performance but also exhibits faster convergence and more stable learning dynamics. In many cases, competing methods plateau early or exhibit higher variance, whereas \meth continues to make steady progress, effectively raising the performance ceiling.

\paragraph{Model-agnostic behavior.}
The trends observed in \cref{fig:testcurves-qwen25-3b,fig:testcurves-llama32-3b} are consistent across both base and instruction-tuned models, indicating that the benefits of \meth are not tied to a specific initialization or training regime. This supports the claim that learning curricula via policy-improvement-driven signals provides a generally applicable mechanism for improving RL post-training efficiency and robustness.

\paragraph{Training efficiency.}
\Cref{fig:efficiency-all} compares learning curves of Actor-Curator against uniform sampling on Countdown, Zebra, and ARC. Across all three benchmarks, Actor-Curator reaches the same target accuracy substantially earlier, yielding step-level speedups of $58.2\%$ on Countdown, $80.7\%$ on Zebra, and $24.3\%$ on ARC. This indicates that policy-improvement-driven data selection primarily accelerates optimization by prioritizing high-impact problems.


\begin{figure*}[t]
  \centering
  \caption{Test set performance across training steps (within 100 training steps) for \textbf{Qwen2.5-3B-Base} across benchmarks and methods. 
  }
  \label{fig:testcurves-qwen25-3b}
  \vspace{0.5em}

  \begin{subfigure}[t]{0.4\textwidth}
    \centering
    \includegraphics[width=\linewidth]{sections/figs/countdown_acc_qwen.pdf}
    \caption{\textbf{Countdown}}
    \label{fig:qwen-countdown}
  \end{subfigure}\hfill
  \begin{subfigure}[t]{0.4\textwidth}
    \centering
    \includegraphics[width=\linewidth]{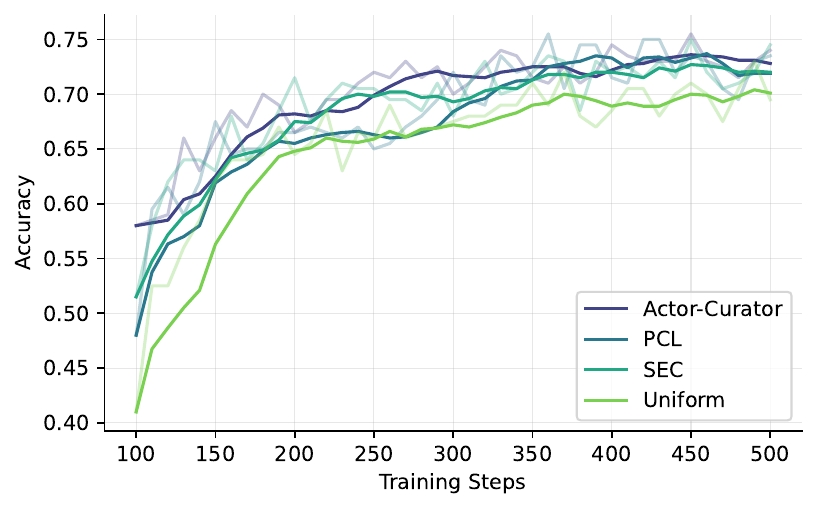}
    \caption{\textbf{Countdown-hard}}
    \label{fig:qwen-countdown-hard}
  \end{subfigure}

  \begin{subfigure}[t]{0.4\textwidth}
    \centering
    \includegraphics[width=\linewidth]{sections/figs/zebra_acc_qwen.pdf}
    \caption{\textbf{Zebra}}
    \label{fig:qwen-zebra}
  \end{subfigure}\hfill
  \begin{subfigure}[t]{0.4\textwidth}
    \centering
    \includegraphics[width=\linewidth]{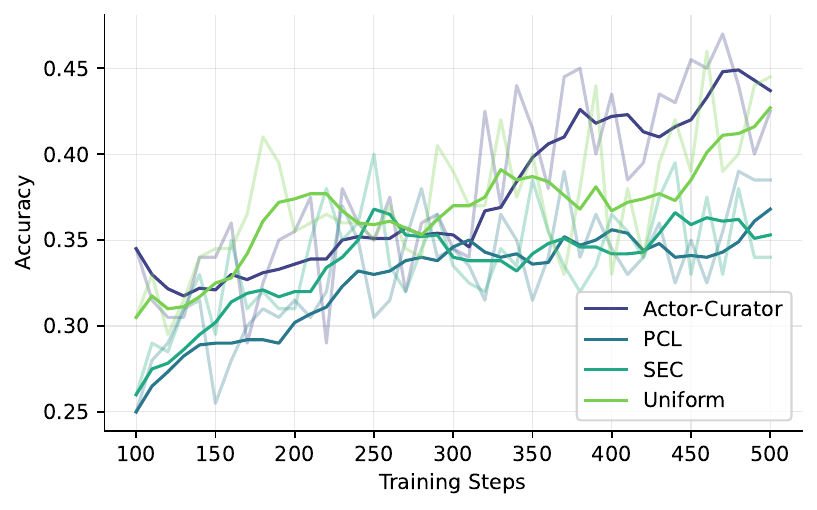}
    \caption{\textbf{Zebra-hard}}
    \label{fig:qwen-zebra-hard}
  \end{subfigure}

  \begin{subfigure}[t]{0.4\textwidth}
    \centering
    \includegraphics[width=\linewidth]{sections/figs/arc_acc_qwen.pdf}
    \caption{\textbf{ARC-1D}}
    \label{fig:qwen-arc-1d}
  \end{subfigure}\hfill
  \begin{subfigure}[t]{0.4\textwidth}
    \centering
    \includegraphics[width=\linewidth]{sections/figs/arc_acc_qwen.pdf}
    \caption{\textbf{ARC-hard}}
    \label{fig:qwen-arc-hard}
  \end{subfigure}

  \begin{subfigure}[t]{0.40\textwidth}
    \centering
    \includegraphics[width=\linewidth]{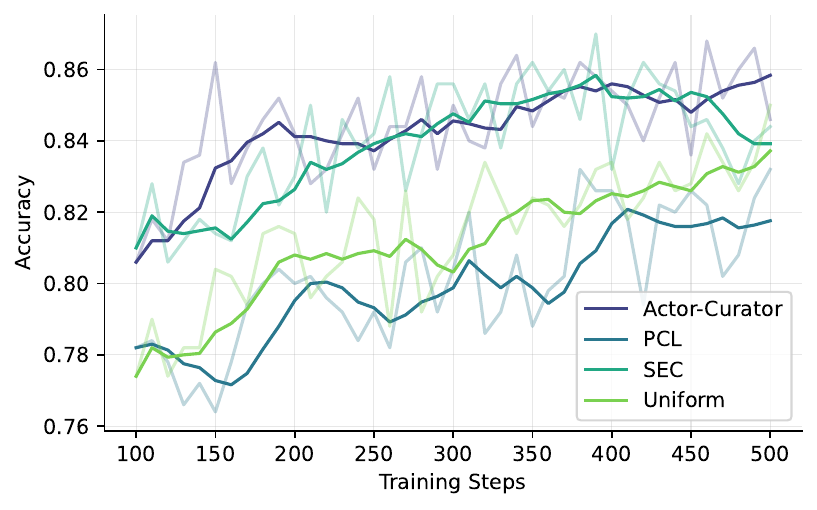}
    \caption{\textbf{MATH500}}
    \label{fig:qwen-math500}
  \end{subfigure}\hfill
  \begin{subfigure}[t]{0.40\textwidth}
    \centering
    \includegraphics[width=\linewidth]{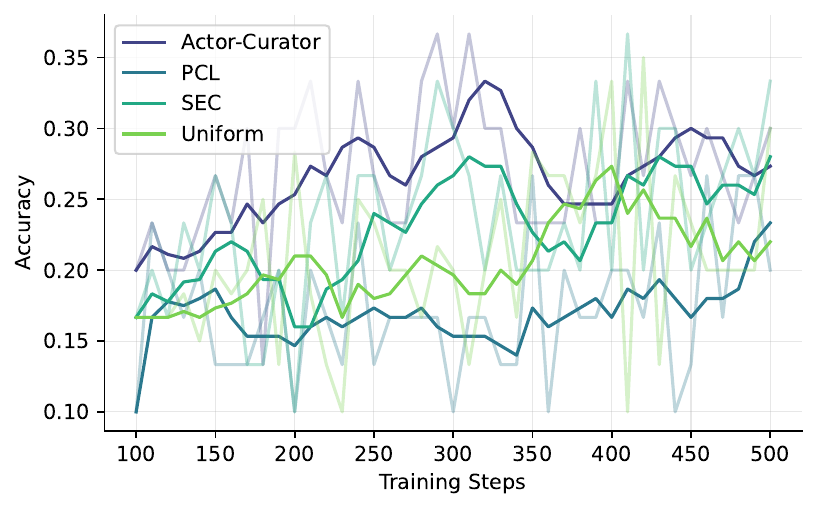}
    \caption{\textbf{AIME24}}
    \label{fig:qwen-aime24}
  \end{subfigure}

\end{figure*}

\begin{figure*}[t]
  \centering
  \caption{Test set performance across training steps (within 100 training steps) for \textbf{Llama3.2-3B-it} across benchmarks and methods. 
  }
  \label{fig:testcurves-llama32-3b}
  \vspace{0.5em}

  \begin{subfigure}[t]{0.40\textwidth}
    \centering
    \includegraphics[width=\linewidth]{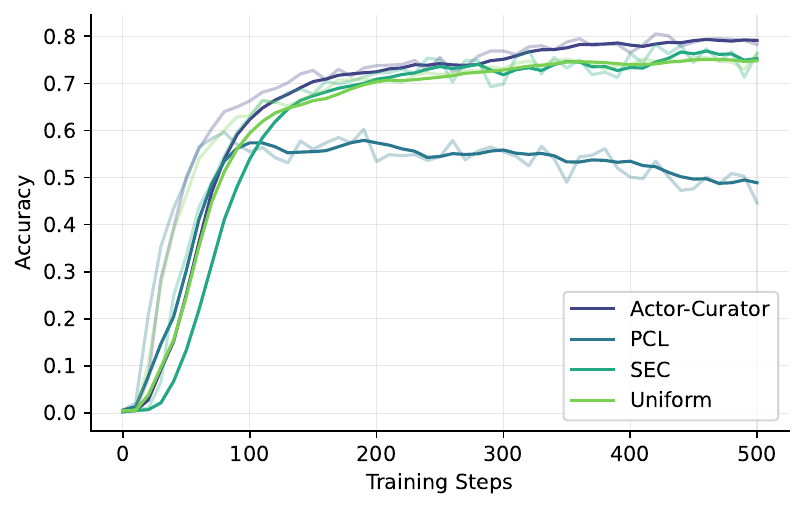}
    \caption{\textbf{Countdown}}
    \label{fig:llama-countdown}
  \end{subfigure}\hfill
  \begin{subfigure}[t]{0.40\textwidth}
    \centering
    \includegraphics[width=\linewidth]{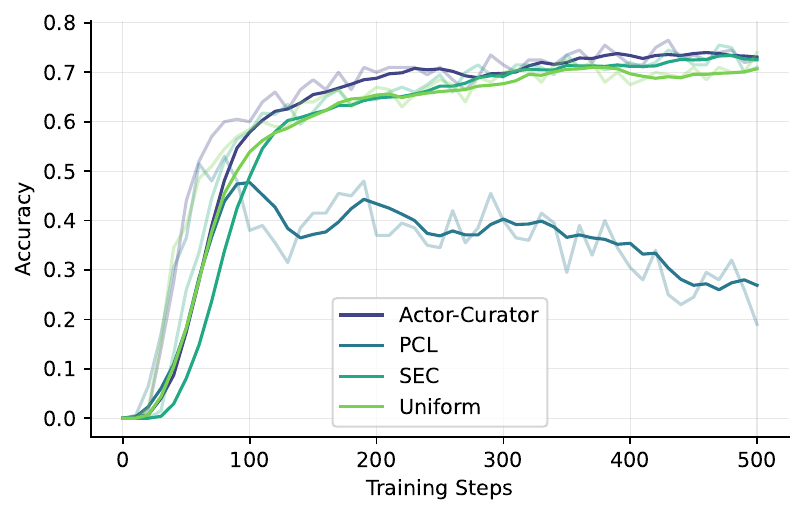}
    \caption{\textbf{Countdown-hard}}
    \label{fig:llama-countdown-hard}
  \end{subfigure}

  \vspace{0.7em}

  \begin{subfigure}[t]{0.40\textwidth}
    \centering
    \includegraphics[width=\linewidth]{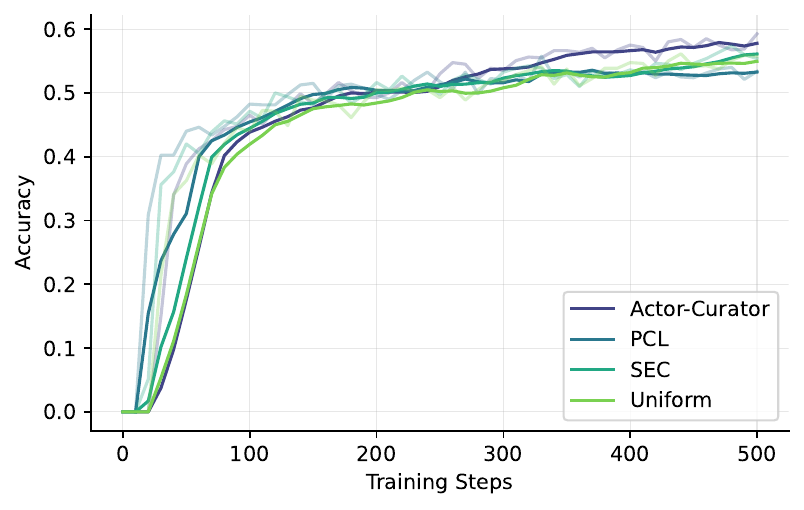}
    \caption{\textbf{Zebra}}
    \label{fig:llama-zebra}
  \end{subfigure}\hfill
  \begin{subfigure}[t]{0.40\textwidth}
    \centering
    \includegraphics[width=\linewidth]{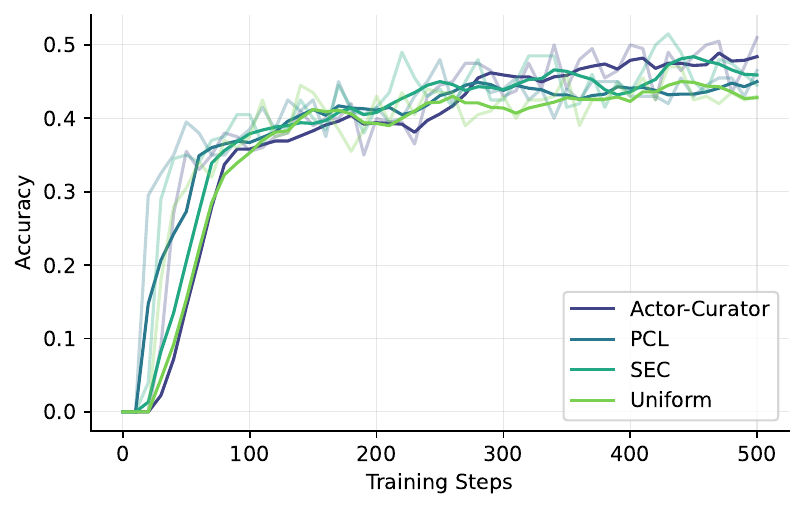}
    \caption{\textbf{Zebra-hard}}
    \label{fig:llama-zebra-hard}
  \end{subfigure}

  \vspace{0.7em}

  \begin{subfigure}[t]{0.40\textwidth}
    \centering
    \includegraphics[width=\linewidth]{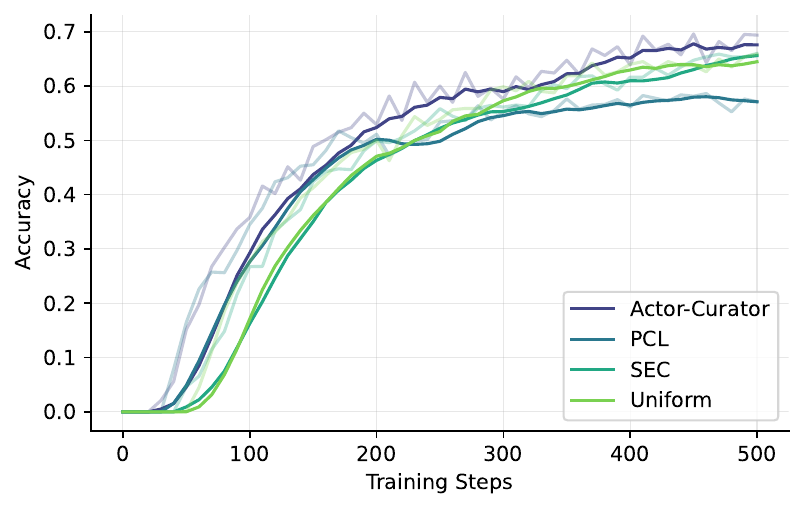}
    \caption{\textbf{ARC-1D}}
    \label{fig:llama-arc-1d}
  \end{subfigure}\hfill
  \begin{subfigure}[t]{0.40\textwidth}
    \centering
    \includegraphics[width=\linewidth]{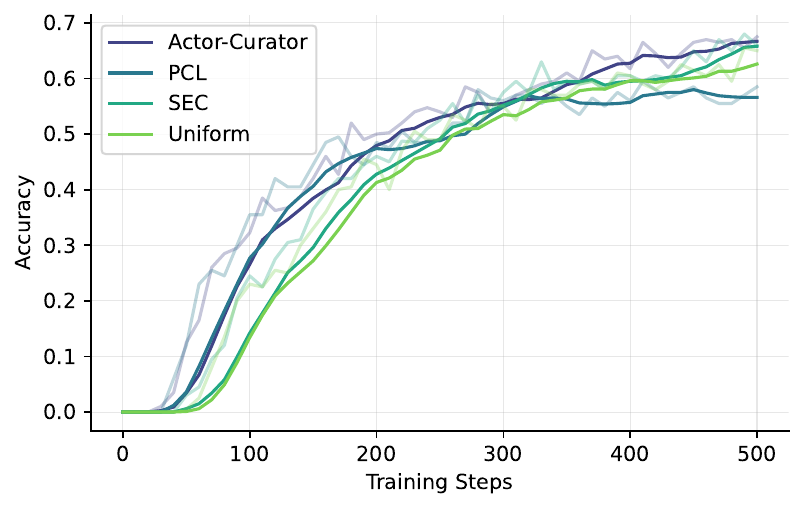}
    \caption{\textbf{ARC-hard}}
    \label{fig:llama-arc-hard}
  \end{subfigure}

  \vspace{0.7em}

  \begin{subfigure}[t]{0.40\textwidth}
    \centering
    \includegraphics[width=\linewidth]{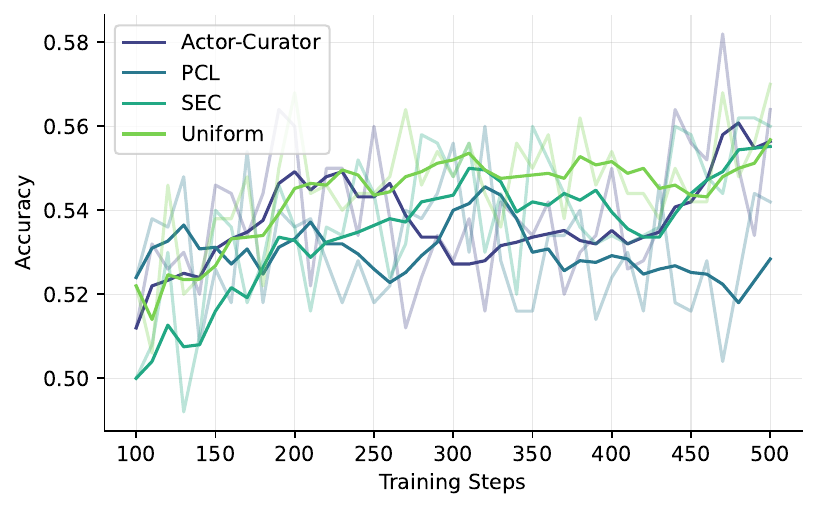}
    \caption{\textbf{MATH500}}
    \label{fig:llama-math500}
  \end{subfigure}\hfill
  \begin{subfigure}[t]{0.40\textwidth}
    \centering
    \includegraphics[width=\linewidth]{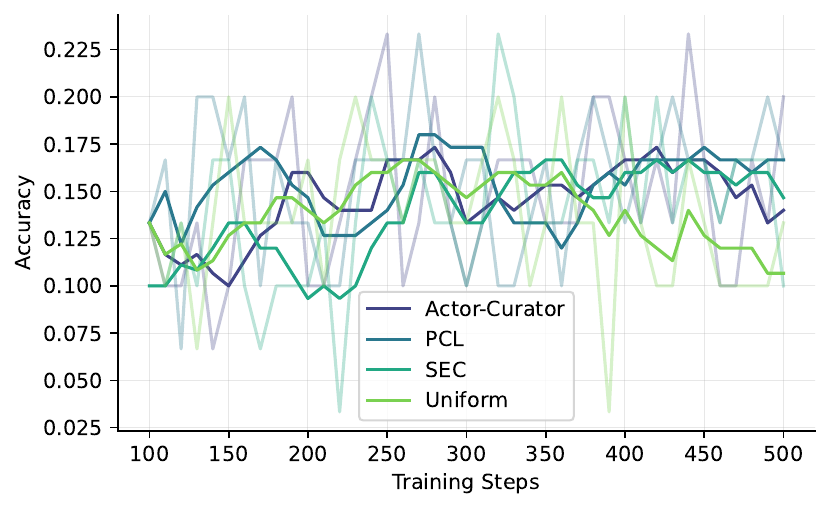}
    \caption{\textbf{AIME24}}
    \label{fig:llama-aime24}
  \end{subfigure}

\end{figure*}

\begin{figure*}[t]
  \centering
  \caption{\textbf{Training efficiency:} Actor-Curator attains significant efficiency increase with relatively low overhead on Countdown, Zebra, and ARC. 
  }
  \label{fig:efficiency-all}
  \vspace{0.5em}

  \begin{subfigure}[t]{0.50\textwidth}
    \centering
    \includegraphics[width=\linewidth]{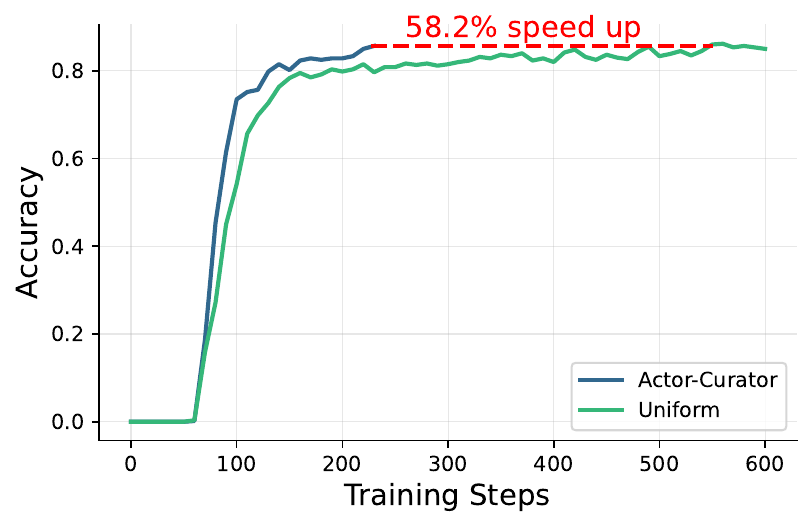}
    \caption{\textbf{Countdown}}
    \label{fig:llama-countdown}
  \end{subfigure}

  \vspace{0.7em}
  
  \begin{subfigure}[t]{0.50\textwidth}
    \centering
    \includegraphics[width=\linewidth]{sections/figs/zebra_efficiency_gain.pdf}
    \caption{\textbf{Zebra}}
    \label{fig:llama-countdown-hard}
  \end{subfigure}

  \vspace{0.7em}

  \begin{subfigure}[t]{0.50\textwidth}
    \centering
    \includegraphics[width=\linewidth]{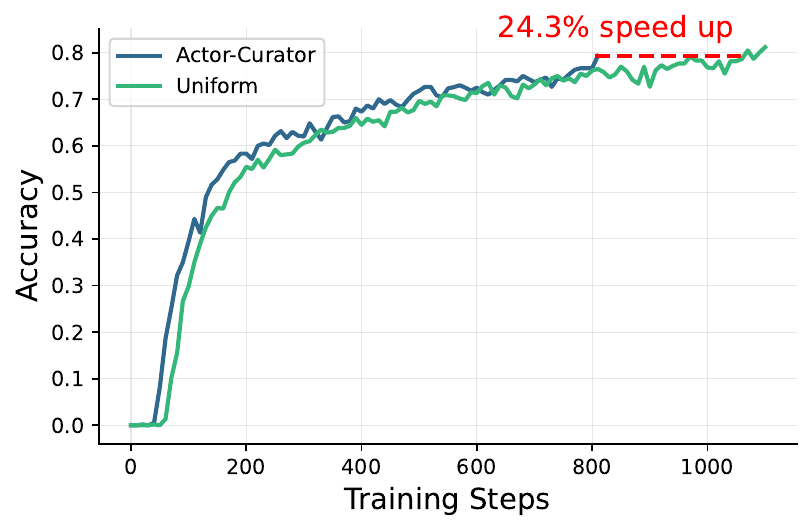}
    \caption{\textbf{ARC-1D}}
    \label{fig:llama-zebra}
  \end{subfigure}\hfill

\end{figure*}

\begin{table*}[h]
  \caption{Peak validation performance on problems within 100 training steps across models and methods. \method outperforms both other learning based methods (PCL) and methods that rely on human heuristics (SEC). \textbf{Models:} Qwen2.5 refers to Qwen2.5-3B-Base; Llama3.2 refers to Llama3.2-3B-it.}
  \label{tab:performance_full}
  \vskip 0.15in
  \begin{center}
    \begin{small}
      \begin{sc}
        \setlength{\tabcolsep}{4pt}
        \renewcommand{\arraystretch}{1.05}
        \begin{tabular}{llcccccccc}
          \toprule
          \textbf{Benchmark} & \textbf{Model}
          & \multicolumn{5}{c}{\textbf{Method}}
          & \multicolumn{2}{c}{\textbf{Improvement}} \\
          \cmidrule(lr){3-7}\cmidrule(lr){8-9}
          &
          & $\boldsymbol{\pi_{\mathrm{ref}}}$
          & \textbf{Uniform}
          & \textbf{SEC}
          & \textbf{PCL}
          & \textbf{\meth (Ours)}
          & \textbf{$+\Delta$}
          & \textbf{$+\Delta\%$} \\
          \midrule

            \multirow{2}{*}{Countdown}
            & Qwen2.5
            & 0.00 & 44.74 & 58.87 & 57.24 & 62.12 & +3.25 & +5.52 \\
            & Llama3.2
            & 0.00 & 63.12 & 62.78 & 59.62 & 66.25 & +3.13 & +4.96 \\
            
            \multirow{2}{*}{CD-hard}
            & Qwen2.5
            & 0.00 & 41.00 & 51.50 & 48.00 & 58.00 & +6.50 & +12.62 \\
            & Llama3.2
            & 0.00 & 58.50 & 58.00 & 53.00 & 60.50 & +2.00 & +3.42 \\
            \midrule
            
            \multirow{2}{*}{Zebra}
            & Qwen2.5
            & 0.00 & 35.12 & 36.00 & 34.12 & 37.62 & +1.62 & +4.50 \\
            & Llama3.2
            & 0.00 & 44.50 & 46.50 & 48.25 & 47.12 & -1.13 & -2.34 \\
            
            \multirow{2}{*}{Zebra-hard}
            & Qwen2.5
            & 0.00 & 30.50 & 27.50 & 26.00 & 34.50 & +4.00 & +13.11 \\
            & Llama3.2
            & 0.00 & 37.50 & 38.00 & 39.50 & 40.50 & +1.00 & +2.53 \\
            \midrule
            
            \multirow{2}{*}{ARC-1D}
            & Qwen2.5
            & 0.00 & 26.74 & 27.87 & 26.37 & 36.37 & +8.50 & +30.51 \\
            & Llama3.2
            & 0.00 & 27.62 & 26.75 & 34.50 & 35.25 & +0.75 & +2.17 \\
            
            \multirow{2}{*}{ARC-hard}
            & Qwen2.5
            & 0.00 & 19.50 & 18.50 & 18.50 & 31.00 & +11.50 & +58.97 \\
            & Llama3.2
            & 0.00 & 23.00 & 24.50 & 24.00 & 31.50 & +7.00 & +28.57 \\
            \midrule
            
            \multirow{2}{*}{MATH500}
            & Qwen2.5
            & 61.80 & 83.00 & 81.00 & 79.79 & 81.00 & -2.00 & -2.41 \\
            & Llama3.2
            & 41.00 & 52.20 & 52.00 & 52.40 & 53.60 & +1.20 & +2.29 \\
            
            \multirow{2}{*}{AIME24}
            & Qwen2.5
            & 3.33 & 23.33 & 20.00 & 23.33 & 30.00 & +6.67 & +28.57 \\
            & Llama3.2
            & 0.00 & 13.33 & 13.33 & 13.33 & 16.67 & +3.34 & +25.06 \\
            
          \bottomrule
        \end{tabular}
      \end{sc}
    \end{small}
  \end{center}
  \vskip -0.1in
\end{table*}

\end{document}